\title{AAAI Press Formatting Instructions \\for Authors Using \LaTeX{} --- A Guide}
\author {
    Junli Jiang,\textsuperscript{\rm 1}
    Pavel Naumov\textsuperscript{\rm 2}
}
\newtheorem{definition}{Definition}
\newtheorem{lemma}{Lemma}
\newtheorem{theorem}{Theorem}
\newtheorem{claim}{Claim}
\newcommand{\K}{{\sf K}}
\newcommand{\B}{{\sf B}}
\renewcommand{\phi}{\varphi}
\newenvironment{proof}{\begin{trivlist}\item\noindent{\sc Proof.}}{\qed\end{trivlist}}
\newcommand{\qed}{\hfill $\boxtimes\hspace{2mm}$}
\newenvironment{proof-of-claim}{\begin{trivlist}\item\noindent{\sc Proof of Claim.}}{\hfill $\boxtimes\hspace{2mm}$\end{trivlist}}
\algnewcommand\algorithmicswitch{\textbf{switch}}
\algnewcommand\algorithmiccase{\textbf{case}}
\algnewcommand\algorithmicassert{\texttt{assert}}
\algnewcommand\Assert[1]{\State \algorithmicassert(#1)}%
\title{We Think We Can!: Doxastic Strategies}
\title{Doxastic Strategies}
\title{Doxastic Strategies and the Triumph of
American Technology}
\title{The Triumph of
American Technology: Doxastic Strategies}
\title{Doxastic Coalition Power}
\title{Doxastic Coalitional Strategies}
\title{The Logic of Doxastic Strategies}
\begin{document}

\maketitle

\begin{abstract}
In many real-world situations, there is often not enough information to know that a certain strategy will succeed in achieving the goal, but there is a good reason to believe that it will. The paper introduces the term ``doxastic'' for such strategies.

The main technical contribution is a sound and complete logical system that describes the interplay between doxastic strategy and belief modalities. 
\end{abstract}

\section{Introduction}

During the 1991 Gulf War, American troops successfully used surface-to-air Patriot missiles to intercept Iraqi-modified Scud tactical ballistic missiles originally designed by the Soviet Union in the 1950s. In fact, on 15~February 1991, President George H. W. Bush travelled to the Patriot manufacturing plant in Andover, Massachusetts to praise the missiles' capability:
\begin{quote}
    You see, what has taken place here is a triumph of American technology 
    \dots we are witnessing a revolution in modern warfare, a revolution that will shape the way that we defend ourselves for decades to come. For years, we've heard that antimissile defenses won't work, that shooting down a ballistic missile is impossible -- like trying to ``hit a bullet with a bullet.'' Some people called it impossible; you called it your job. They were wrong, and you were right \dots Patriot is 41 for 42: 42 Scuds engaged, 41 intercepted.~\cite{b91url}
\end{quote}
Just 10 days after this speech, an American radar detected an Iraqi Scud approaching  Dhahran, Saudi Arabia.

Traditionally, knowledge is modelled through an indistinguishability relation that captures an agent's own observations. However, modern autonomous agents often obtain their knowledge from the data that they receive from other systems. This was the case on 25 February 1991. By analysing the position, $\Vec{x}$, and the velocity, $\Vec{v}$, of the Scud, the Patriot concluded that intercepting the Scud was within its capabilities. We write this as
\begin{equation}\label{8-aug-a}
    \K_{\Vec{x},\Vec{v}}(\text{``The Patriot has a strategy to destroy the Scud''})
\end{equation}
and say that dataset $\{\Vec{x},\Vec{v}\}$ informs the knowledge of the statement ``The Patriot has a strategy to destroy the Scud''. In general, we write $\K_X\phi$ if knowing the values of all variables in set $X$ is enough to conclude that statement $\phi$ is true. We refer to sets of data variables, like $X$, as {\em datasets}. Modality $\K_X$ for a set $X$ of {\em Boolean} variables is introduced by~\citet{gls15jair}. For arbitrary datasets, it is proposed by~\citet{bv21jpl}. The name ``data-informed knowledge'' is suggested by us in~\cite{jn22ai}. We also proposed a dynamic logic for this modality~\cite{djnz24jpl}.

Of course, for the Patriot to know that it has a strategy is not the same as to know what the strategy is. The distinction between ``knowing that a strategy exists'' and ``knowing what the strategy is'', in the case of the traditional (indistinguishability-relation-based) knowledge, has been studied in various logics of know-how strategies~\cite{aa16jlc,nt17aamas,fhlw17ijcai,nt18ai,nt18aaai,nt18aamas,fvw21rsl,lw21ai}. 

In our data-informed setting, in order for the Patriot to know the strategy (direction in which to launch a surface-to-air missile), it also must know the exact time, $t$, since the position, $\Vec{x}$, and the velocity, $\Vec{v}$, of the Scud has been measured. We write this as
\begin{equation}\label{8-aug-b}
   [\text{Patriot}]_{\Vec{x},\Vec{v},t}(\text{``Scud is destroyed''}) 
\end{equation}
and say that the dataset $\{\Vec{x},\Vec{v},t\}$ informs the strategy of Patriot to achieve the condition ``Scud is destroyed''.  In general, we write $[C]_X\phi$ if knowing the values of all variables in dataset $X$ is enough to know a strategy for coalition $C$ to achieve condition $\phi$. We introduced this modality in~\cite{jn22ai}. 

The values in the dataset $\{\Vec{x},\Vec{v},t\}$, obtained from the radar, gave the Patriot enough information to compute the strategy (direction of missile launch) that was guaranteed to destroy the Scud. Upon computing the strategy, the Patriot sent it to the launcher\dots Seconds later, Scud hit the barracks of the 14th Quartermaster Detachment of the US Army's 99th Infantry Division. 

To understand what went wrong that day in the skies above Dhahran, Saudi Arabia, one needs to acknowledge the fact that data does not always correctly reflect the state-of-affairs of the real world. Even if the Patriot knows the ``true'' values of $\Vec{x}$ and $\vec{v}$ as measured by the radar, these values might not reflect the correct position and velocity of Scud due to, for example, failure in the radar operation. In the case of our statement~(\ref{8-aug-a}), it is perhaps more sensible to say that by analysing the radar-provided values of $\Vec{x}$ and $\Vec{v}$ and {\em trusting} those values, the Patriot formed a {\em belief} that intercepting the Scud is within its capabilities:
\begin{equation}\label{8-aug-c}
    \B^{\Vec{x},\Vec{v}}_{\Vec{x},\Vec{v}}(\text{``The Patriot has a strategy to destroy the Scud''}).
\end{equation}
In general, we write $\B^T_X\phi$ if under the assumption of the trust in dataset $T$, the values of variables in the dataset $X$ inform belief $\phi$. We proposed this {\em trust-based belief} modality in~\cite{jn22ijcai-trust}. 

As it turns out, the statement $\B^\varnothing_X\phi$ is equivalent to the statement $\K_X\phi$. In other words, any belief, which is not based on trust, is knowledge. Any such belief, of course, is true. In general, just like the other types of beliefs, trust-based beliefs are not always true. They might be false if they are based on {\em trust} in {\em non-trustworthy} data. This, however, was {\em not} the case on 25 February 1991. The position and the velocity of the Scud communicated by the radar correctly reflected the physical characteristics of the approaching ballistic missile. Thus, the Patriot indeed had the capability to intercept the Scud.

As we have just observed, when the trust in data is taken into account, the knowledge statement~(\ref{8-aug-a}) becomes a belief statement~(\ref{8-aug-c}). Of course, a similar adjustment should be made to statement~(\ref{8-aug-b}). The Patriot did not actually {\em know} the strategy to destroy the Scud, it only had a {\em belief that a particular strategy will work}. This belief was based on the {\em trust} in the dataset $\{\Vec{x},\Vec{v},t\}$. We write this as:
\begin{equation}\label{9-aug-a}
[\text{Patriot}]^{\Vec{x},\Vec{v},t}_{\Vec{x},\Vec{v},t}(\text{``Scud is destroyed''}). 
\end{equation}
In general, we write $[C]^T_X\phi$ if, under the assumption of trust in dataset $T$, dataset $X$ informs a belief about some strategy of coalition $C$ that it will achieve condition $\phi$. We call such strategies ``doxastic'' (related to beliefs). Doxastic strategies are guaranteed to work if dataset $T$ is not only {\em trusted} but also {\em trustworthy}.

The Patriot system clock stored time (since boot) measured in tenths of a second. The actual time was computed by multiplying the stored value by $0.1$ using a 24-bit fixed point register. This introduced an error of about 0.000000095 seconds. By the time the Scud appeared in Dhahran, the system had been running for over 100 hours, creating an accumulated time difference between the radar's and the launcher's clocks of about $0.34$ seconds. A Scud travels over 500 meters in this time~\cite{m10url}. The strategy that the Patriot {\em believed} would destroy the Scud was based on {\em trusted} but {\em non-trustworthy} time data. The ``triumph of
American technology'' pointed the missile launcher in a completely wrong direction, killing 28 American soldiers~\cite{m10url} and wounding close to 100~\cite{a91nyt}. 
A year later, on 11 November 1992, in Greensburg, Pennsylvania, a memorial was dedicated to the soldiers killed by the Scud attack. The monument is facing in the direction of Dhahran, Saudi Arabia~\cite{tal17url}.

In this paper, we introduce formal semantics for doxastic strategies and a sound and complete logical system that describes the interplay between modalities $\B^T_X$ and $[C]^T_X$.

The paper is structured as follows. In the next section, we define the class of games that we use to model multiparty interactions. Next, we define the syntax and the formal semantics of our logical system. In Section~\ref{axioms section}, we list its axioms and inference rules. Section~\ref{big Completeness section} contains the proof of the completeness.  Section~\ref{ex ante ex post} discusses {\em ex ante} and {\em ex post} trust. Section~\ref{conclusion section} concludes. The proof of soundness and one of the lemmas from Section~\ref{big Completeness section} can be found in the appendix. 

\section{Games}

In this section, we introduce the class of games that is used later to give a formal semantics of our logical system. Throughout the paper, we assume a fixed set of atomic propositions and a fixed set of data variables $V$. In addition, we assume a fixed set of {\em actors} $\mathcal{A}$. We use the term ``actor'' instead of the more traditional ``agent'' to emphasise the fact that in our setting knowledge and beliefs come from data and they are not related to actors, who are only endowed with an ability to act.

By a {\em dataset} we mean any subset of $V$. Informally, given a game, we think about each data variable as having a value in each state of the game. However, formally, it is only important to know, for each two given states, if a variable has the same or different values in those states. Thus, we only need to associate an indistinguishability relation $\sim_x$ with each data variable $x\in V$. Informally, $w\sim_x u$ if data variable $x$ has the same value in states $w$ and $u$.  

To model the trustworthiness of data variables, for each state $w$ of a game we specify a dataset $\mathcal{T}_w\subseteq V$ consisting of the variables that are trustworthy (but not necessarily trusted) in the state $w$. We introduced this way to model the trustworthiness of data in~\cite{jn22ijcai-trust}. 

Finally, in the definition below and throughout the paper, by $Y^X$ we mean the set of all functions from set $X$ to set $Y$.

\begin{definition}\label{epistemic model}
A tuple $(W,\{\sim_x\}_{x\in V},\{\mathcal{T}_w\}_{w\in W},\Delta,M,\pi)$ is called a game if
\begin{enumerate}
    \item $W$ is a (possibly empty) set of states,
    \item $\sim_x$ is an indistinguishability equivalence relation on set $W$ for each data variable $x\in V$,
    \item $\mathcal{T}_w$ the set of data variables that are ``trustworthy'' in state $w\in W$,
    \item $\Delta$ is a nonempty set of ``actions'',
    \item $M\subseteq W\times \Delta^\mathcal{A}\times W$ is a ``mechanism'' of the game,
    \item $\pi(p)\subseteq W$ for each atomic proposition $p\in P$. 
\end{enumerate}
\end{definition}
By a {\em complete action profile} we mean an arbitrary element of the set $\Delta^\mathcal{A}$. By a {\em coalition} we mean an arbitrary subset $C\subseteq\mathcal{A}$ of actors. By an {\em action profile} of a coalition $C$ we mean an arbitrary element of the set $\Delta^C$.

The mechanism of the game $M$ specifies possible transitions of the game from one state to another. If $(w,\delta,u)\in M$, then under complete action profile $\delta$ from state $w$ the game can transition to state $u$. Note that we do not require the mechanism to be deterministic. We also do {\em not} require that for each state $w\in W$ and each complete action profile $\delta$ there is at least one state $u$ such that $(w,\delta,u)\in M$. If such state $u$ does not exist, then we interpret this as a termination of the game upon the execution of profile $\delta$ in state $w$.

\section{Syntax and Semantics}\label{syntax and semantics section}

Language $\Phi$ of our logical system is defined by the grammar
$$
\phi::= p\;|\;\neg\phi\;|\;(\phi\to\phi)\;|\;\B^T_X\phi\;|\;[C]^T_X\phi,
$$
where $p$ is an atomic proposition, $C\subseteq \mathcal{A}$ is a coalition, and $T,X\subseteq V$ are datasets. We read expression $\B^T_X\phi$ as ``under the assumption of trust in dataset $T$, dataset $X$ informs belief $\phi$''. And we read expression $[C]^T_X\phi$ as ``under the assumption of trust in dataset $T$, dataset $X$ informs a doxastic strategy of coalition $C$ to achieve $\phi$''.

We write $f=_C g$ if $f(x)=g(x)$ for each element $x\in C$ and $w\sim_X u$ if $w\sim_x u$ for each data variable $x\in X$.
\begin{definition}\label{sat}
For any formula $\phi\in\Phi$, and any state $w\in W$ of a game $(W,\{\sim_x\}_{x\in V},\{\mathcal{T}_w\}_{w\in W},\Delta,M,\pi)$, the satisfaction relation $w\Vdash\phi$ is defined as follows
\begin{enumerate}
    \item $w\Vdash p$, if $w\in \pi(p)$, 
    \item $w\Vdash \neg\phi$, if $w\nVdash \phi$,
    \item $w\Vdash \phi\to\psi$, if $w\nVdash \phi$ or $w\Vdash \psi$,
    \item $w\Vdash \B^T_{X}\phi$, if $u\Vdash \phi$ for each state $u\in W$ such that $w\sim_{X} u$ and $T\subseteq \mathcal{T}_{u}$,
    \item $w\Vdash[C]^T_X\phi$, when there is an action profile $s\in \Delta^C$ of coalition $C$ such that for all states $u,v\in W$ and each complete action profile $\delta\in\Delta^\mathcal{A}$, if $w\sim_{X}u$, $T\subseteq \mathcal{T}_u$, $s=_C \delta$, $(u,\delta,v)\in M$, and $T\subseteq \mathcal{T}_v$, then $v\Vdash\phi$.
\end{enumerate}
\end{definition}
Note that, by item~4 of the above definition, $w\Vdash \B^\varnothing_{X}\phi$ states that condition $\phi$ is satisfied in each state $u\in W$ indistinguishible from state $w$ by dataset $X$. Thus, $B^\varnothing_{X}$ is equivalent to data-informed knowledge modality $\K_X$ discussed in the introduction.

Item~5 above captures our informal intuition that $[C]^T_X\phi$ means that dataset $X$ informs the knowledge of a strategy that guarantees the achievement of $\phi$ in states where dataset $T$ is trustworthy. Note that there are multiple ways to formalise this: we can require  $T$ to be trustworthy in state $u$ (before the transition), in state $v$ (after the transition), or in both of these states. By including conditions $T\subseteq \mathcal{T}_u$ and $T\subseteq \mathcal{T}_v$ in item~5, we require $T$ to be trustworthy {\em ex ante} (before transition) and {\em ex post} (after transition). We discuss an alternative approach in Section~\ref{ex ante ex post}.

\begin{figure}[ht]
\begin{center}
\scalebox{.5}{\includegraphics{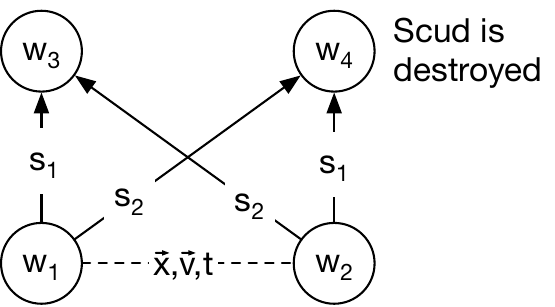}}
\caption{A game. Data variables $\Vec{x}$ and $\Vec{v}$ are trustworthy in all states. Data variable $t$ is trustworthy in states $w_2$, $w_3$, $w_4$. }\label{patriot figure}
\vspace{-3mm}
\end{center}
\end{figure}

Figure~\ref{patriot figure} depicts a (very simplistic) game capturing our introductory game. This game has a single actor, the Patriot. The actual state is $w_1$, in which data variable $t$ is not trustworthy. The only state which is $\{\Vec{x},\Vec{v},t\}$-indistinguishable from the current state and in which all variables in dataset $\{\Vec{x},\Vec{v},t\}$ are trustworthy is state $w_2$. Note that, in state $w_2$, strategy $s_1$ can be used to destroy the Scud. Thus, in the current state $w_1$, dataset $\{\Vec{x},\Vec{v},t\}$ informs the belief that the Patriot can use strategy $s_1$ to destroy the Scud. Hence, in this game, formula~(\ref{9-aug-a}) is satisfied in state $w_1$.

\section{Axioms}\label{axioms section}

In addition to propositional tautologies in language $\Phi$, our logical system contains the following axioms.

\begin{enumerate}
    \item Truth: $\B^\varnothing_X\phi\to \phi$,
    \item Negative Introspection: $\neg\B^T_X\phi\to \B^\varnothing_X\neg\B^T_X\phi$,
    \item Distributivity: $\B^T_X(\phi\to\psi)\to(\B^T_X\phi\to\B^T_X\psi)$,
    \item Trust: $\B^T_X(\B^T_Y\phi\to\phi)$,
    \item Monotonicity: $\B^T_X\phi\to \B^{T'}_{X'}\phi$ and $[C]^T_X\phi\to [C']^{T'}_{X'}\phi$, where $T\subseteq T'$, $X\subseteq X'$, and $C\subseteq C'$,
    \item Cooperation: $[C]^T_X(\phi\!\to\!\psi)\to([D]^T_X\phi\!\to\![C\cup D]^T_X\psi)$, where $C\cap D=\varnothing$,
    \item Strategic Introspection: $[C]^T_X\phi\leftrightarrow\B^T_X[C]^T_X\phi$,
    \item Belief in Unavoidability: $\B^T_X[\varnothing]^T_Y\phi\to [\varnothing]^T_X\phi$,
    \item Public Belief: $\B^T_\varnothing\phi\to[\varnothing]^T_\varnothing\phi$.
\end{enumerate}

The Truth axiom, the Negative Introspection axiom, the Distributivity axiom, the Trust axiom and the belief part of the Monotonicity axiom are the axioms for the trust-based beliefs as stated in~\cite{jn22ijcai-trust}. The most non-trivial among them is the Trust axiom. To understand the meaning of this axiom, note that although the principle $\B^T_X\phi\to\phi$ is not universally true, it is true in all states where dataset $T$ is trustworthy. This observation is captured in the Trust axiom.

The Cooperation axiom is a variation of the axiom from the Logic of Coalition Power~\cite{p02}. The Strategic Introspection axiom states that dataset $X$ informs a doxastic strategy iff $X$ informs a belief in having a doxastic strategy. 
The Strategic Introspection axiom and the Belief in Unavoidability axiom (which we discuss below) can both be stated in alternative forms where the superscript of modality $\B$ is the empty set $\varnothing$.

To understand the meaning of the Belief in Unavoidability axiom, note that statement $[\varnothing]^\varnothing_X\phi$ means that dataset $X$ informs the knowledge that condition $\phi$ is {\em unavoidably true} in the next state. Similarly, $[\varnothing]^T_X\phi$ means that, under the assumption of trust in dataset $T$, dataset $X$ informs the {\em belief} that condition $\phi$ is unavoidably true in the next state. The Belief in Unavoidability axiom states that {\em if $X$ informs the belief that $Y$ informs the belief in unavoidability of $\phi$, then $X$ itself informs this belief in unavoidability of $\phi$}. 

Finally, observe that $\B^\varnothing_\varnothing\phi$ means that statement $\phi$ is true in all states of the game. In other words, $\phi$ is public knowledge in the game. The Public Belief axiom states that if condition $\phi$ holds in all words where $T$ is trustworthy, then it is believed to be unavoidably true in the next state.

We write $\vdash\phi$ and say that formula $\phi\in\Phi$ is a theorem of our system if it is derivable from the above axioms using the Modus Ponens and the  Necessitation inference rules:

$$
\dfrac{\phi,\phi\to\psi}{\psi}
\hspace{20mm}
\dfrac{\phi}{\B^\varnothing_\varnothing\phi}.
$$
In addition to the unary relation $\vdash \phi$, we also consider a binary relation $F\vdash\phi$. We write $F\vdash\phi$ if formula $\phi\in\Phi$ is provable from the set of formulae $F\subseteq\Phi$ and the theorems of our logical system using only the Modus Ponens inference rule. It is easy to see that statement $\varnothing\vdash\phi$ is equivalent to $\vdash\phi$.
We say that set $F$ is inconsistent if there is a formula $\phi\in F$ such that $F\vdash\phi$ and $F\vdash\neg\phi$.

The proof of the following theorem is in the appendix.

\begin{theorem}[soundness]\label{strong soundness theorem}
If $\vdash\phi$, then $w\Vdash\phi$ for each state $w$ of an arbitrary game.
\end{theorem}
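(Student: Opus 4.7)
The plan is to prove the theorem by induction on the length of the derivation of $\phi$. The base case asks that each axiom schema yield only valid formulas (true in every state of every game), and the inductive step asks that Modus Ponens and Necessitation preserve validity. Modus Ponens is immediate from clause~3 of Definition~\ref{sat}. Necessitation is almost as easy: when $T = X = \varnothing$, the conditions $w \sim_\varnothing u$ and $\varnothing \subseteq \mathcal{T}_u$ in clause~4 are vacuous, so $w \Vdash \B^\varnothing_\varnothing \phi$ reduces to $u \Vdash \phi$ for every $u \in W$, which is exactly validity of $\phi$.

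I would handle the belief-only axioms first. Truth, Distributivity, Negative Introspection, and the $\B$-part of Monotonicity reduce to standard modal arguments, adapted to clause~4 and using that each $\sim_x$ is an equivalence relation (so $\sim_X$ is too). The Trust axiom $\B^T_X(\B^T_Y \phi \to \phi)$ holds because in any witnessing state $u$ with $w \sim_X u$ and $T \subseteq \mathcal{T}_u$, the state $u$ itself satisfies the preconditions $u \sim_Y u$ and $T \subseteq \mathcal{T}_u$ needed to extract $u \Vdash \phi$ from $u \Vdash \B^T_Y \phi$.

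For the strategic axioms, the strategic part of Monotonicity uses that $\Delta$ is nonempty to extend a profile in $\Delta^C$ to one in $\Delta^{C'}$, while the preconditions on $u$ and $v$ become only more restrictive as $X$ and $T$ grow. Strategic Introspection in the forward direction uses that a witness profile at $w$ transfers to any $u$ with $w \sim_X u$ and $T \subseteq \mathcal{T}_u$ by transitivity of $\sim_X$; the reverse direction extracts a witness from any such $u$, or uses an arbitrary profile (allowed because $\Delta$ is nonempty) when no such $u$ exists. For Belief in Unavoidability and Public Belief, the coalition is empty, so no strategy needs to be combined: one unpacks the right-hand side, picks the outer $u$ as the belief-witness for the left-hand side, and then re-applies clause~5 at $u$ with the same pair $(u, v)$ after observing $u \sim_Y u$ (respectively $u \sim_\varnothing u$).

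The main obstacle is the Cooperation axiom. Given a witness $s_1 \in \Delta^C$ for $[C]^T_X(\phi \to \psi)$ and a witness $s_2 \in \Delta^D$ for $[D]^T_X \phi$, the disjointness $C \cap D = \varnothing$ lets me form the joint profile $s = s_1 \cup s_2 \in \Delta^{C \cup D}$. The delicate point is to show that for any complete action profile $\delta$ with $s =_{C \cup D} \delta$ and any states $u, v$ meeting the preconditions (indistinguishability $w \sim_X u$, ex ante $T \subseteq \mathcal{T}_u$, transition $(u, \delta, v) \in M$, and ex post $T \subseteq \mathcal{T}_v$), those same preconditions feed both $s_1$ and $s_2$ as witnesses — in particular $s_1 =_C \delta$ and $s_2 =_D \delta$ follow from $s =_{C \cup D} \delta$ — yielding $v \Vdash \phi \to \psi$ and $v \Vdash \phi$ simultaneously, hence $v \Vdash \psi$. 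The main care lies in keeping this shared bookkeeping of preconditions straight.
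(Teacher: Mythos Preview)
Your proposal is correct and follows essentially the same approach as the paper's appendix: induction on derivations, with each axiom verified as a separate semantic lemma (Trust, Cooperation, Strategic Introspection with its two-case reverse direction, Belief in Unavoidability, Public Belief) and the inference rules handled exactly as you describe. The only minor slip is in your joint sketch of Belief in Unavoidability and Public Belief --- for the latter there is no re-application of clause~5, and the belief witness should be the post-transition state $v$ (using $w \sim_\varnothing v$ and the ex~post condition $T \subseteq \mathcal{T}_v$) rather than the outer $u$ --- but this is a cosmetic imprecision in the sketch, not a genuine gap.
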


We conclude this section with two auxiliary results that will be used later in the proof of the completeness. The first of them is a form of the Positive Introspection principle for our belief modality $\B$.
\begin{lemma}\label{positive introspection lemma}
$\vdash \B^T_X\phi\to\B^\varnothing_X\B^T_X\phi$. 
\end{lemma}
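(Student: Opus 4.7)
The plan is to mimic the standard S5 derivation of positive introspection from negative introspection, being mindful that the outer modality must have an empty trust set so that the Truth axiom applies.

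First, I would instantiate the Truth axiom on the formula $\neg\B^T_X\phi$ to obtain $\B^\varnothing_X\neg\B^T_X\phi\to\neg\B^T_X\phi$. Taking the contrapositive yields
$$
\B^T_X\phi\;\to\;\neg\B^\varnothing_X\neg\B^T_X\phi. \qquad (\star)
$$
So it suffices to establish $\neg\B^\varnothing_X\neg\B^T_X\phi\to\B^\varnothing_X\B^T_X\phi$.

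Second, I would apply the Negative Introspection axiom in two different instantiations. On one hand, applied with trust set $\varnothing$ to the formula $\neg\B^T_X\phi$, it gives
$$
\neg\B^\varnothing_X\neg\B^T_X\phi\;\to\;\B^\varnothing_X\neg\B^\varnothing_X\neg\B^T_X\phi. \qquad (\dagger)
$$
On the other hand, the axiom as-is contraposes to $\neg\B^\varnothing_X\neg\B^T_X\phi\to\B^T_X\phi$. Applying Necessitation to this propositional consequence and then Monotonicity (to lift $\B^\varnothing_\varnothing$ to $\B^\varnothing_X$, using $\varnothing\subseteq X$), and then Distributivity, I get
$$
\B^\varnothing_X\neg\B^\varnothing_X\neg\B^T_X\phi\;\to\;\B^\varnothing_X\B^T_X\phi. \qquad (\ddagger)
$$

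Finally, chaining $(\star)$, $(\dagger)$, and $(\ddagger)$ gives $\B^T_X\phi\to\B^\varnothing_X\B^T_X\phi$, as required. There is no real obstacle here; the only subtlety worth highlighting is the need to route Necessitation through Monotonicity, since the rule only produces $\B^\varnothing_\varnothing$, and the need to instantiate Negative Introspection once with $T=\varnothing$ (to reach the doubly-boxed formula) and once with the original $T$ (to collapse the inner negations back to $\B^T_X\phi$).
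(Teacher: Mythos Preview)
Your proof is correct and mirrors the paper's argument step for step: the same Truth-axiom contraposition for $(\star)$, the same two instantiations of Negative Introspection (once with $T=\varnothing$ for $(\dagger)$, once with the original $T$ then contraposed for $(\ddagger)$), and the same Necessitation--Monotonicity--Distributivity route to lift the implication under $\B^\varnothing_X$. There is nothing to add.
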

\begin{proof}
Formula $\B^\varnothing_X\neg\B^T_X\phi\to\neg\B^T_X\phi$ is an instance of the Truth axiom. Thus, $\vdash \B^T_X\phi\to\neg\B^\varnothing_X\neg\B^T_X\phi$, by contraposition. Hence, taking into account the following instance $\neg\B^\varnothing_X\neg\B^T_X\phi\to\B^\varnothing_X\neg\B^\varnothing_X\neg\B^T_X\phi$ of  the Negative Introspection axiom,
we have 
\begin{equation}\label{pos intro eq 2}
\vdash \B^T_X\phi\to\B^\varnothing_X\neg\B^\varnothing_X\neg\B^T_X\phi.
\end{equation}
Also, the formula $\neg\B^T_X\phi\to\B^\varnothing_X\neg\B^T_X\phi$ is an instance of the Negative Introspection axiom. Thus, by contraposition, $\vdash \neg\B^\varnothing_X\neg\B^T_X\phi\to \B^T_X\phi$. Hence, by the Necessitation inference rule,
$\vdash \B^\varnothing_\varnothing(\neg\B^\varnothing_X\neg\B^T_X\phi\to \B^T_X\phi)$.
Thus, 
$\vdash \B^\varnothing_X(\neg\B^\varnothing_X\neg\B^T_X\phi\to \B^T_X\phi)$
by the Monotonicity axiom and the Modus Ponens inference rule. Thus, by  the Distributivity axiom and the Modus Ponens inference rule it follows that
$
  \vdash \B^\varnothing_X\neg\B^\varnothing_X\neg\B^T_X\phi\to \B^\varnothing_X\B^T_X\phi
$.
 The latter, together with statement~(\ref{pos intro eq 2}), implies the statement of the lemma by propositional reasoning.
\end{proof}

The proof of the following lemma is in the appendix.

\begin{lemma}\label{super distributivity}
If $\phi_1,..,\phi_n\vdash\psi$, then $\B^T_X\phi_1,..,\B^T_X\phi_n\vdash\B^T_X\psi$.
\end{lemma}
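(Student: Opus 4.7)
The plan is to prove the statement by induction on $n$, using the deduction theorem (which is available here because, by definition, $F\vdash\phi$ only permits Modus Ponens as an inference rule applied to premises, so hypotheses can be discharged as implications).

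For the base case $n=0$, I would assume $\vdash \psi$. Applying the Necessitation rule gives $\vdash \B^\varnothing_\varnothing \psi$, and then the Monotonicity axiom (with $\varnothing \subseteq T$ and $\varnothing \subseteq X$) together with Modus Ponens yields $\vdash \B^T_X \psi$, as desired.

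For the inductive step, assume the result holds for $n-1$ and suppose $\phi_1,\dots,\phi_n \vdash \psi$. By the deduction theorem, $\phi_1,\dots,\phi_{n-1} \vdash \phi_n \to \psi$. Applying the induction hypothesis, I get $\B^T_X \phi_1,\dots,\B^T_X\phi_{n-1} \vdash \B^T_X(\phi_n \to \psi)$. The Distributivity axiom gives $\vdash \B^T_X(\phi_n \to \psi) \to (\B^T_X\phi_n \to \B^T_X\psi)$, so by Modus Ponens, $\B^T_X\phi_1,\dots,\B^T_X\phi_{n-1} \vdash \B^T_X\phi_n \to \B^T_X\psi$. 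Adding $\B^T_X\phi_n$ to the premises and applying Modus Ponens once more gives $\B^T_X\phi_1,\dots,\B^T_X\phi_n \vdash \B^T_X\psi$.

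There is no real obstacle here: the argument is the textbook derivation of the \emph{K}-rule from the \emph{K}-axiom and Necessitation, and the only subtle point is invoking the deduction theorem, which is legitimate precisely because the binary relation $F\vdash\phi$ was defined using only Modus Ponens on top of the theorems of the system (so Necessitation is not applied to hypotheses).
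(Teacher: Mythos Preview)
Your proof is correct and follows essentially the same approach as the paper: the paper first applies the deduction theorem $n$ times to obtain $\vdash\phi_1\to(\phi_2\to\cdots(\phi_n\to\psi)\cdots)$, then applies Necessitation and Monotonicity once, and finally peels off the $\B^T_X\phi_i$ one at a time via Distributivity and Modus Ponens. Your inductive organisation simply folds the repeated deduction and Distributivity steps into the induction hypothesis, but the ingredients and logic are identical.
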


\section{Strong Completeness}\label{big Completeness section}

The proof of the completeness theorem is split into four subsections. In Subsection~\ref{canonical model section}, we define the canonical game. The truth lemma for the canonical game and the final step of the proof are given in Subsection~\ref{Final Steps section}. In the two subsections before, we state and prove auxiliary lemmas used in the proof of the truth lemma. A highly non-trivial proof of one of these auxiliary lemmas is located in the appendix.

\subsection{Canonical Game}\label{canonical model section}

Throughout the rest of the paper, we assume that $\mathcal{B}$ is any set of cardinality larger than $\mathcal{A}$, such as, for example, the powerset $\mathcal{P}(\mathcal{A})$. We explain the need for $\mathcal{B}$ after Definition~\ref{canonical W} below.  

In this section, towards the proof of completeness, for an arbitrary maximal consistent set of formulae $F_0\subseteq\Phi$ and an arbitrary dataset $T_0\subseteq V$, we define a ``canonical'' game $G(T_0,F_0)=(W,\{\sim_x\}_{x\in V},\{\mathcal{T}_w\}_{w\in W},\Delta,M,\pi)$.

The set of states in the canonical game is defined using the tree construction which goes back to the proof of the completeness theorem for epistemic logic of distributed knowledge~\cite{fhv92jacm}. Informally, in the tree construction, the states are the nodes of a tree. Formally, the states are defined first as finite sequences; the tree structure on these sequences is specified later. 

\begin{definition}\label{canonical W}
$W$ is the set of all sequences $T_0,F_0,X_1$, $b_1,T_1,F_1$, $\dots,X_n,b_n,T_n,F_n$ such that $n\ge 0$ and 
\begin{enumerate}
    \item $F_i\subseteq \Phi$ is a maximal consistent set of formulae for each integer $i\ge 1$,
    \item $T_i,X_i\subseteq V$ are datasets for each integer $i\ge 1$,
    \item $b_i\in\mathcal{B}$ for each integer $i\ge 1$,
    \item $\{\phi\;|\; \B^\varnothing_{X_i}\phi\in F_{i-1}\}\subseteq F_i$ for each integer $i\ge 1$,
    \item $\B^{T_{i}}_Y\psi\to\psi\in F_i$ for each dataset $Y\subseteq V$, each formula $\psi\in\Phi$, and each integer $i\ge 0$.
\end{enumerate}
\end{definition}
For any two states $u,w\in W$ of the form
\begin{eqnarray*}
    u&\!\!=\!\!&T_0,F_0,\dots,X_{n-1},b_{n-1},T_{n-1},F_{n-1}\\
    w&\!\!=\!\!&T_0,F_0,\dots,X_{n-1},b_{n-1},T_{n-1},F_{n-1},X_n,b_n,T_n,F_n,
\end{eqnarray*}
we say that the states are {\em adjacent}. Note that the adjacency relation forms a tree structure (undirected graph without cycles) on set $W$. We say that the edge $(u,w)$ is {\em labelled} with each data variable in set $X_n$. By $F(w)$ and $T(w)$ we denote sets $F_n$ and $T_n$, respectively.
Informally, it is convenient to visualise this tree with the edge marked by the pair $X_n,b_n$ and node $w$ marked by the pair $T_n,F_n$. Figure~\ref{Canonical-Tree figure} depicts a fragment of such visual representation. In this figure, the node $T_0,F_0,X_1,b_1,T_1,F_1$ is adjacent to the node $T_0,F_0,X_1,b_1,T_1,F_1,X_3,b_3,T_3,F_3$. The edge between them is {\em labelled} with each {\em variable} in set $X_3$.

By {\em clones} we call any two nodes
\begin{eqnarray*}
    w_1&\!\!\!=\!\!\!&T_0,F_0,\dots,X_{n-1},b_{n-1},T_{n-1},F_{n-1},X_n,b_n,T_n,F_n\\
    w_2&\!\!\!=\!\!\!&T_0,F_0,\dots,X_{n-1},b_{n-1},T_{n-1},F_{n-1},X_n,b'_n,T_n,F_n
\end{eqnarray*}
that differ only by the {\em last} $b$-value. For example, in Figure~\ref{Canonical-Tree figure}, nodes $T_0,F_0,X_1,b_1,T_1,F_1,X_3,b_3,T_3,F_3$
and
$T_0,F_0,X_1$, $b_1,T_1,F_1,X_3,b_6,T_3,F_3$ are clones. The purpose of set $\mathcal{B}$ in our construction is to guarantee that each node (except for the root) has more clones than the cardinality of set $\mathcal{A}$.

\begin{figure}
\begin{center}
\scalebox{.5}{\includegraphics{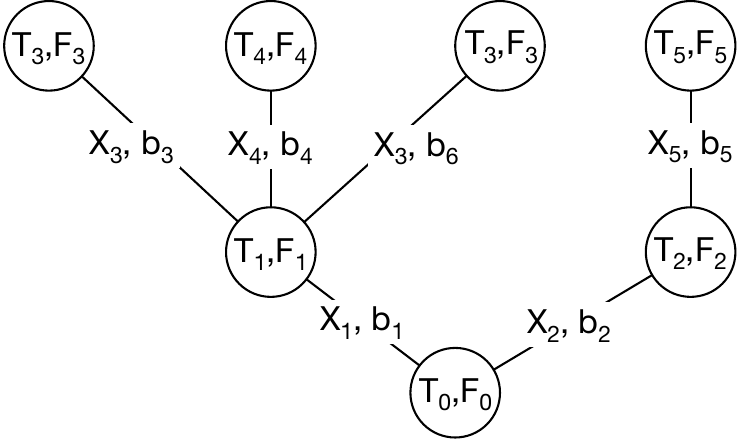}}
\caption{Fragment of a canonical tree.}\label{Canonical-Tree figure}
\vspace{-5mm}
\end{center}
\end{figure}

Recall that a {\em simple} path in a graph is a path without self-intersections and that in any tree there is a unique simple path between any two nodes.

\begin{definition}\label{canonical sim}
For any states $w,w'\in W$ and any variable $x\in V$, let $w\sim_x w'$ if every edge along the unique simple path between nodes $w$ and $w'$ is labelled with variable~$x$.
\end{definition}
\begin{lemma}
Relation $\sim_x$ is an equivalence relation on set $W$ for each data variable $x\in V$.
\end{lemma}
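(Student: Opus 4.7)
The plan is to verify the three defining properties of an equivalence relation in turn, exploiting the tree structure on $W$ induced by the adjacency relation (as spelled out right after Definition~\ref{canonical W}).

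\textbf{Reflexivity.} For any state $w\in W$, the unique simple path between $w$ and itself is the empty path, containing no edges. Hence the condition ``every edge along the path is labelled with $x$'' holds vacuously, so $w\sim_x w$.

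\textbf{Symmetry.} The simple path between $w$ and $w'$ and the simple path between $w'$ and $w$ consist of the very same (undirected) edges traversed in reverse order. Thus one of them has all edges labelled with $x$ iff the other does.

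\textbf{Transitivity.} This is the only step that requires genuine use of the tree property and will be the main obstacle. Suppose $w\sim_x w'$ and $w'\sim_x w''$, and let $P_1$ be the simple path from $w$ to $w'$ and $P_2$ the simple path from $w'$ to $w''$. Concatenating $P_1$ with $P_2$ yields a walk $W_0$ from $w$ to $w''$ whose edge set is $E(P_1)\cup E(P_2)$; by hypothesis, every edge in this set is labelled with $x$. I would now invoke the standard fact that in a tree, the unique simple path between two nodes can be obtained from any walk between them by iteratively removing ``backtracks'' (pairs of consecutive edges that traverse the same edge in opposite directions). Each such reduction removes edges from the walk but never introduces new ones, so the edge set of the resulting simple path is contained in $E(P_1)\cup E(P_2)$. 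Since the simple path between $w$ and $w''$ is unique, it coincides with this reduced walk, and therefore every edge along it is labelled with $x$. Hence $w\sim_x w''$.

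The only subtle point is the fact that the edge set of the reduced walk lies inside $E(P_1)\cup E(P_2)$; I would state this as a brief observation about trees (or equivalently, note that in a tree the symmetric difference of the edge sets of two simple paths sharing an endpoint contains the simple path between their other endpoints). Everything else is routine.
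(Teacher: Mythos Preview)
Your argument is correct. The paper itself states this lemma without proof, treating it as an immediate consequence of the tree structure on $W$; your write-up simply fills in the routine verification the authors omit, and the key observation for transitivity---that in a tree the edge set of the unique simple path between $w$ and $w''$ is contained in the edge set of any walk between them, in particular the concatenation of the two given paths---is exactly the right one.
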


As we prove later in Lemma~\ref{truth lemma}, for any state $w\in W$, set $F(w)$ is the set of all formulae that are satisfied in state $w$ of our canonical model. At the same time, set $T(w)$ is the set of all data variables that are trustworthy in state $w$: 

\begin{definition}\label{canonical T}
    $\mathcal{T}_w=T(w)$ for any state $w\in W$.
\end{definition}

The above definition explains the intuition behind item~5 of Definition~\ref{canonical W}. Indeed, the item states $\B^{T(w)}_Y\phi\to\phi\in F(w)$. Intuitively, this means that if $\B^{T(w)}_Y\phi$ is true in a state $w$, where set $T(w)$ is trustworthy by Definition~\ref{canonical T}, then statement $\phi$ must be true in state $w$.

In this paper, as in several other works that extend Marc Pauly's logic of coalitional power~\cite{gd06tcs,nt17aamas,ge18aamas}, the mechanism of the canonical game is using the voting construction. In the standard version of this construction, if the goal of the coalition is to achieve $\phi$, then all members of the coalition use action $\phi$. Informally, this could be interpreted as ``voting for $\phi$''. In our case, a strategy is informed by a dataset $X$. Thus, for the strategy to work the coalition should somehow prove that it has access to the values of dataset $X$ in the current state. To achieve this, we require each vote to be ``signed'' by naming a state that belongs to the same $X$-equivalence class as the current state. Thus, each action consists of a formula and a state (``signature''). Similar constructions are used in~\cite{nt18ai,jn22ai}.

\begin{definition}
$\Delta=\Phi\times W$.
\end{definition}

If $d$ is a pair $(x,y)$, then by $pr_1(d)$ and $pr_2(d)$ we mean elements $x$ and $y$,  respectively.

  \begin{definition}\label{canonical mechanism}
Mechanism $M$ consists of all triples $(w,\delta,v)\in W\times\Delta^\mathcal{A}\times W$ such that for each formula $[C]^T_X\phi\in F(w)$, if 
\begin{enumerate}
    \item $pr_1(\delta(a))=\phi$ for each actor $a\in C$ and
    \item $w\sim_X pr_2(\delta(a))$ for each actor $a\in C$,
    \item $T\subseteq \mathcal{T}_w$,
    \item $T\subseteq \mathcal{T}_v$,
\end{enumerate}
then $\phi\in F(v)$.
\end{definition}

\begin{definition}\label{canonical pi}
$\pi(p)=\{w\in W\;|\; p\in F(w)\}$ for each atomic proposition $p\in P$.
\end{definition}

This concludes the definition of the canonical game. 
As usual, the key step in the proof of a completeness is a ``truth lemma'' proven by induction on structural complexity of a formula. In our case, this is Lemma~\ref{truth lemma}. To improve readability of the proof of Lemma~\ref{truth lemma}, we separate the non-trivial induction steps into separate lemmas stated in the next two subsections.

\subsection{Properties of the belief modality}

In this subsection, we state and prove lemmas for modality~$\B$ used in the induction step of the proof of Lemma~\ref{truth lemma}.

\begin{lemma}\label{single transfer lemma}
For any formula $\B^T_Y\phi\in\Phi$ and any states
\begin{eqnarray*}
w'&=&T_0,F_0,\dots,b_{n-1},T_{n-1},F_{n-1},\\
w&=&T_0,F_0,\dots,b_{n-1},T_{n-1},F_{n-1},X_n,b_n,T_n,F_n
\end{eqnarray*}
if $Y\subseteq X_n$, then $\B^T_Y\phi\in F(w')$ \,iff\, $\B^T_Y\phi\in F(w)$.
\end{lemma}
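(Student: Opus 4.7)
The plan is to prove both directions of the biconditional using the Positive and Negative Introspection principles together with the Monotonicity axiom and clause~4 of Definition~\ref{canonical W}, which guarantees that $\{\psi\mid \B^\varnothing_{X_n}\psi\in F(w')\}\subseteq F(w)$, since $w$ is obtained from $w'$ by appending $X_n,b_n,T_n,F_n$.

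For the forward direction, I would assume $\B^T_Y\phi\in F(w')$. Applying Lemma~\ref{positive introspection lemma} together with the maximality of $F(w')$ yields $\B^\varnothing_Y\B^T_Y\phi\in F(w')$. Since $Y\subseteq X_n$, the Monotonicity axiom strengthens this to $\B^\varnothing_{X_n}\B^T_Y\phi\in F(w')$. Clause~4 of Definition~\ref{canonical W} then delivers $\B^T_Y\phi\in F(w)$.

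For the backward direction, I would argue by contraposition. Suppose $\B^T_Y\phi\notin F(w')$. By the maximality of $F(w')$, this gives $\neg\B^T_Y\phi\in F(w')$. The Negative Introspection axiom yields $\B^\varnothing_Y\neg\B^T_Y\phi\in F(w')$, and Monotonicity (again using $Y\subseteq X_n$) strengthens this to $\B^\varnothing_{X_n}\neg\B^T_Y\phi\in F(w')$. Clause~4 of Definition~\ref{canonical W} then gives $\neg\B^T_Y\phi\in F(w)$, so $\B^T_Y\phi\notin F(w)$ by the consistency of $F(w)$.

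I do not anticipate a real technical obstacle here; the argument is the standard ``belief is transferred along a tree edge whose label contains the dataset'' trick. The only point demanding care is that clause~4 of Definition~\ref{canonical W} speaks only about the modality $\B^\varnothing_{X_n}$, so one must first manoeuvre the belief inside such a modality before passing from $F(w')$ to $F(w)$; the two introspection principles combined with Monotonicity make exactly this manoeuvre possible.
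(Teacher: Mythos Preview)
Your proposal is correct and follows essentially the same route as the paper's own proof: positive introspection plus Monotonicity for the forward direction, Negative Introspection plus Monotonicity for the contrapositive of the backward direction, in each case landing a $\B^\varnothing_{X_n}$-formula in $F(w')=F_{n-1}$ and then invoking item~4 of Definition~\ref{canonical W}. The only cosmetic difference is that the paper writes the argument in terms of $F_{n-1}$ and $F_n$ rather than $F(w')$ and $F(w)$.
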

\begin{proof} $(\Rightarrow):$ Let
$\B^T_Y\phi\in F(w')$. Then,
$\B^T_Y\phi\in F_{n-1}$. Thus, by Lemma~\ref{positive introspection lemma} and the Modus Ponens inference rule, 
$F_{n-1}\vdash \B^\varnothing_Y\B^T_Y\phi$. 
Hence, 
$F_{n-1}\vdash \B^\varnothing_{X_n}\B^T_Y\phi$
by the assumption $Y\subseteq X_n$ of the lemma, the Monotonicity axiom, and the Modus Ponens inference rule.
Thus, $\B^\varnothing_{X_n}\B^T_Y\phi\in F_{n-1}$ because $F_{n-1}$ is a maximal consistent set. Then, $\B^T_Y\phi\in F_{n}$ by item~4 of Definition~\ref{canonical W}. Therefore, $\B^T_Y\phi\in F(w)$. 

\vspace{1mm}\noindent$(\Leftarrow):$ 
Let
$\B^T_Y\phi\notin F(w')$.
Thus, $\B^T_Y\phi\notin F_{n-1}$. Hence, $\neg\B^T_Y\phi\in F_{n-1}$ because $F_{n-1}$ is a maximal consistent set of formulae. Then, $F_{n-1}\vdash \B^\varnothing_Y\neg\B^T_Y\phi$ by the Negative Introspection axiom and the Modus Ponens inference rule. 
Thus, 
$F_{n-1}\vdash \B^\varnothing_{X_n}\neg\B^T_Y\phi$ by the assumption $Y\subseteq X_n$ of the lemma, the Monotonicity axiom, and the Modus Ponens inference rule.
Hence, because set $F_{n-1}$ is maximal, $\B^\varnothing_{X_n}\neg\B^T_Y\phi\in F_{n-1}$. Then, $\neg\B^T_{Y}\phi\in F_{n}$ by item~4 of Definition~\ref{canonical W}.
Thus,
$\B^T_Y\phi\notin F_{n}$, because set $F_n$ is consistent. Therefore, $\B^T_Y\phi\notin F(w)$.
\end{proof}

\begin{lemma}\label{child all}
For any states $w,u\in W$ and any formula $\B^T_X\phi\in F(w)$, if $\mbox{$w\sim_X u$}$ and $T\subseteq \mathcal{T}_u$, then $\phi\in F(u)$. 
\end{lemma}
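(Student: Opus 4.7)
\medskip

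The plan is to walk along the unique simple $X$-labelled path from $w$ to $u$ in the canonical tree, propagating the formula $\B^T_X\phi$ step by step using Lemma~\ref{single transfer lemma}, and then to collapse the belief into its content at the endpoint $u$ by combining the Monotonicity axiom with item~5 of Definition~\ref{canonical W}. By Definition~\ref{canonical sim}, the assumption $w\sim_X u$ means that there is a finite sequence of states $w=v_0, v_1,\dots, v_n=u$ in which each consecutive pair $(v_i,v_{i+1})$ is adjacent in the tree and the variable set $X_i$ labelling that edge satisfies $X\subseteq X_i$.

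The first move is to prove by induction on $i$ that $\B^T_X\phi\in F(v_i)$ for every $i\le n$. The base case $i=0$ is the hypothesis $\B^T_X\phi\in F(w)$. For the inductive step, observe that tree adjacency is symmetric in Lemma~\ref{single transfer lemma}: whichever of $v_i,v_{i+1}$ is the extension of the other, their common edge is labelled with every variable in $X_i\supseteq X$, so the biconditional in Lemma~\ref{single transfer lemma} (instantiated with $Y=X$) transfers membership of $\B^T_X\phi$ between $F(v_i)$ and $F(v_{i+1})$. Hence $\B^T_X\phi\in F(v_n)=F(u)$.

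For the final step, I use the hypothesis $T\subseteq\mathcal{T}_u=T(u)$ (by Definition~\ref{canonical T}). By the Monotonicity axiom, $\vdash \B^T_X\phi\to\B^{T(u)}_X\phi$, and item~5 of Definition~\ref{canonical W} applied at state $u$ gives $\B^{T(u)}_X\phi\to\phi\in F(u)$. Chaining these with $\B^T_X\phi\in F(u)$ via Modus Ponens, and using the maximal consistency of $F(u)$, yields $\phi\in F(u)$.

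The only delicate point is the propagation step, since Lemma~\ref{single transfer lemma} is stated for a child state extending its parent by one block $X_n,b_n,T_n,F_n$, whereas a path in the tree may alternate between upward and downward moves. However, since the lemma is a biconditional, direction along the edge is immaterial, and the condition $X\subseteq X_i$ is exactly what the definition of $\sim_X$ guarantees for every edge on the path. This makes the induction entirely mechanical, and the real work of the argument is already packaged inside Lemma~\ref{single transfer lemma}.
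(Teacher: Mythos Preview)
Your proposal is correct and follows essentially the same route as the paper: propagate $\B^T_X\phi$ along the $X$-labelled path via Lemma~\ref{single transfer lemma}, then at $u$ use Monotonicity (from $T\subseteq T(u)$) together with item~5 of Definition~\ref{canonical W} and the maximality of $F(u)$ to obtain $\phi\in F(u)$. Your explicit remark that the biconditional in Lemma~\ref{single transfer lemma} handles both parent-to-child and child-to-parent steps is a helpful clarification the paper leaves implicit.
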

\begin{proof}
By Definition~\ref{canonical sim}, the assumption $w\sim_X u$ implies that each edge along the unique path between nodes $w$ and $u$ is labelled with each variable in dataset $X$. Then, the assumption
$\B^T_X\phi\in F(w)$ implies 
$
    \B^T_X\phi\in F(u)
$
by applying Lemma~\ref{single transfer lemma} to each edge along this path. Note that the assumption $T\subseteq \mathcal{T}_u$ of the lemma implies that $T\subseteq T(u)$ by Definition~\ref{canonical T}. Hence, 
$
    F(u) \vdash \B^{T(u)}_X\phi 
$
by the Monotonicity axiom and the Modus Ponens inference rule. 
Thus,
$
F(u) \vdash \phi
$
by item~5 of Definition~\ref{canonical W} and the Modus Ponens inference rule. Therefore, $\phi\in F(u)$ because the set $F(u)$ is maximal.
\end{proof}

\begin{lemma}\label{child exists}
For any $w\in W$ and any formula $\B^T_X\phi\notin F(w)$, there exists a state $u\in W$ such that $w\sim_X u$, $T=\mathcal{T}_u$, and $\phi\notin F(u)$.
\end{lemma}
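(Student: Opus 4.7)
The plan is to construct the witness $u$ as a one-edge extension of $w$ along a fresh $X$-labelled edge: let $u$ be the sequence obtained from $w$ by appending $X,b,T,F_n$, where $b\in\mathcal{B}$ is any element and $F_n$ is a maximal consistent set of formulae to be built. With this shape, $w\sim_X u$ is immediate from Definition~\ref{canonical sim} (the unique simple path between $w$ and $u$ is the single edge $(w,u)$, which is labelled by every variable of $X$), and $\mathcal{T}_u = T(u) = T$ by Definition~\ref{canonical T}. So the problem reduces to producing a maximal consistent $F_n$ that (i) contains $\neg\phi$, (ii) contains $\{\psi\mid \B^\varnothing_X\psi\in F(w)\}$ so that item~4 of Definition~\ref{canonical W} holds, and (iii) contains every instance $\B^T_Y\chi\to\chi$ with $Y\subseteq V$, $\chi\in\Phi$ so that item~5 holds.

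First I would consider
\[G \;=\; \{\psi\mid \B^\varnothing_X\psi\in F(w)\}\,\cup\,\{\B^T_Y\chi\to\chi\mid Y\subseteq V,\ \chi\in\Phi\}\,\cup\,\{\neg\phi\},\]
and obtain $F_n$ as any Lindenbaum extension of $G$. The main obstacle, and the real content of the argument, is to show that $G$ is consistent.

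I would argue by contradiction. If $G$ is inconsistent, then finitely many of its formulae derive $\phi$; write them as $\psi_1,\dots,\psi_m$ (with each $\B^\varnothing_X\psi_i\in F(w)$) together with Trust-schema instances $\B^T_{Y_1}\chi_1\to\chi_1,\dots,\B^T_{Y_k}\chi_k\to\chi_k$, giving
\[\psi_1,\dots,\psi_m,\ \B^T_{Y_1}\chi_1\to\chi_1,\dots,\B^T_{Y_k}\chi_k\to\chi_k\ \vdash\ \phi.\]
Applying Lemma~\ref{super distributivity} prefixes every formula with $\B^T_X$:
\[\B^T_X\psi_1,\dots,\B^T_X\psi_m,\ \B^T_X(\B^T_{Y_1}\chi_1\to\chi_1),\dots,\B^T_X(\B^T_{Y_k}\chi_k\to\chi_k)\ \vdash\ \B^T_X\phi.\]
Each $\B^T_X(\B^T_{Y_j}\chi_j\to\chi_j)$ is an instance of the Trust axiom and can be discharged as a theorem. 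For each remaining $\B^T_X\psi_i$, the Monotonicity axiom yields $\vdash \B^\varnothing_X\psi_i\to\B^T_X\psi_i$, and since $\B^\varnothing_X\psi_i\in F(w)$ and $F(w)$ is maximal consistent, $\B^T_X\psi_i\in F(w)$. Chaining these, $F(w)\vdash \B^T_X\phi$, and hence $\B^T_X\phi\in F(w)$, contradicting the hypothesis $\B^T_X\phi\notin F(w)$.

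Finally, I would check that the state $u$ so defined lies in $W$: items~1--3 of Definition~\ref{canonical W} hold by construction (with $T_n=T$, $X_n=X$, $b_n=b$, $F_n$ maximal consistent), item~4 because $\{\psi\mid \B^\varnothing_X\psi\in F(w)\}\subseteq G\subseteq F_n$, and item~5 because every $\B^T_Y\chi\to\chi$ lies in $G\subseteq F_n$. The three conclusions of the lemma are then in place: $w\sim_X u$ and $\mathcal{T}_u = T$ as already noted, and $\phi\notin F(u)=F_n$ because $\neg\phi\in F_n$ and $F_n$ is consistent.
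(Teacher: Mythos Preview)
Your proof is correct and follows essentially the same approach as the paper: you build the same candidate set (the paper calls it $H$), prove its consistency by the same contradiction argument via Lemma~\ref{super distributivity}, the Trust axiom, and Monotonicity, then extend to a maximal consistent set and append $X,b,T,H'$ to $w$ to obtain $u$. The only cosmetic difference is that the paper replaces the premises $\B^T_X\psi_i$ by $\B^\varnothing_X\psi_i$ directly (via Monotonicity) before invoking membership in $F(w)$, whereas you first conclude $\B^T_X\psi_i\in F(w)$; both routes yield $F(w)\vdash\B^T_X\phi$ identically.
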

\begin{proof}
Consider the set of formulae
\begin{eqnarray}\label{G definition}
H&=&\{\neg\phi\}\cup \{\psi\;|\;\B^\varnothing_X\psi\in F(w)\}\nonumber\\
&&\cup \{\B^T_Y\chi\to \chi\;|\;Y\subseteq V, \chi\in\Phi\}.
\end{eqnarray}
\begin{claim}Set $H$ is consistent.
\end{claim}
\begin{proof-of-claim}
Assume the opposite. Hence, there are formulae $\chi_1,..,\chi_n\!\in\!\Phi$, datasets $Y_1,..,Y_n\!\subseteq\! V$, and formulae
\begin{equation}\label{19nov-a}
\B^\varnothing_X\psi_1,\dots,\B^\varnothing_X\psi_m\in F(w)
\end{equation}
such that
$$
\B^T_{Y_1}\chi_1\to \chi_1,\dots,\B^T_{Y_n}\chi_n\to \chi_n,
\psi_1,\dots,\psi_m\vdash \phi. 
$$
Thus, by Lemma~\ref{super distributivity},
\begin{eqnarray*}
&&\hspace{-10mm}\B^T_X(\B^T_{Y_1}\chi_1\to \chi_1),\dots,\B^T_X(\B^T_{Y_n}\chi_n\to \chi_n),\\
&&\hspace{+30mm}\B^T_X\psi_1,\dots,\B^T_X\psi_m\vdash \B^T_X\phi. 
\end{eqnarray*}
Hence, 
$
\B^T_X\psi_1,\dots,\B^T_X\psi_m\vdash \B^T_X\phi
$
by the Trust axiom applied $n$ times. 
Then,
$
\B^\varnothing_X\psi_1,\dots,\B^\varnothing_X\psi_m\vdash \B^T_X\phi
$
by the Monotonicity axiom and the Modus Ponens inference rule applied $m$ times.
Thus,
$
F(w)\vdash \B^T_X\phi
$
due to statement~(\ref{19nov-a}).
Hence, $\B^T_X\phi\in F(w)$ because the set $F(w)$ is maximal, which contradicts the assumption $\B^T_X\phi\notin F(w)$ of the lemma.
\end{proof-of-claim}

Let $H'$ be any maximal consistent extension of set $H$ and $b$ be any element of set $\mathcal{B}$.
Suppose that $w=T_0,F_0$, \dots, $X_n,b_n,T_n,F_n$.
Consider sequence
\begin{equation}\label{u choice}
    u=T_0,F_0,\dots,X_n,b_n,T_n,F_n,X,b,T,H'.
\end{equation}
Note that $u\in W$ by Definition~\ref{canonical W}, equation~(\ref{G definition}), and the choice of set $H'$ as an extension of set $H$. Also, observe that $w\sim_X u$ by Definition~\ref{canonical sim} and equation~(\ref{u choice}). In addition, $T=T(u)=\mathcal{T}_u$ by equation~(\ref{u choice}) and Definition~\ref{canonical T}. Finally, $\neg\phi\in H\subseteq H'=F(u)$ by equation~(\ref{G definition}), the choice of $H'$ as an extension of $H$, and equation~(\ref{u choice}). Therefore, $\phi\notin F(u)$ because the set $F(u)$ is consistent. This concludes the proof of the lemma.
\end{proof}

\subsection{Properties of the doxastic strategy modality}

We now state lemmas for modality~$[C]$ that will be used in the induction step of the proof of Lemma~\ref{truth lemma}.

\begin{lemma}\label{H child all}
For an arbitrary state $w\in W$ and any formula $[C]^T_X\phi\in F(w)$ there is an action profile $s\in \Delta^C$ such that for all states $w',v\in W$ and each complete action profile $\delta\in\Delta^\mathcal{A}$ if $w\sim_{X}w'$, $s=_C \delta$, $T\subseteq \mathcal{T}_{w'}$, $T\subseteq \mathcal{T}_{v}$, and $(w',\delta,v)\in M$, then $\phi\in F(v)$.
\end{lemma}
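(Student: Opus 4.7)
The plan is to build the desired action profile $s\in\Delta^C$ explicitly and then let the canonical mechanism do the work. Since $\Delta=\Phi\times W$, I would simply set $s(a)=(\phi,w)$ for every actor $a\in C$: each member of the coalition ``votes for $\phi$'' using the given state $w$ as its signature. This profile does not depend on $w'$, $v$ or $\delta$, which is crucial since the lemma demands a single profile that works against every admissible $(w',\delta,v)$.

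The first substantive step is to propagate the formula $[C]^T_X\phi$ from $F(w)$ to $F(w')$. By the Strategic Introspection axiom and the assumption $[C]^T_X\phi\in F(w)$, we have $\B^T_X[C]^T_X\phi\in F(w)$. Since $w\sim_X w'$ and $T\subseteq \mathcal{T}_{w'}$ are among the hypotheses, Lemma~\ref{child all} applied to $\B^T_X[C]^T_X\phi$ gives $[C]^T_X\phi\in F(w')$. This is the only place where an axiom (beyond Lemma~\ref{child all}) is really invoked.

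The second step is to check that the four conditions in Definition~\ref{canonical mechanism} are satisfied for the formula $[C]^T_X\phi\in F(w')$ and our chosen $s$. Condition~1 holds because $\delta(a)=s(a)=(\phi,w)$ for every $a\in C$, so $pr_1(\delta(a))=\phi$. Condition~2 holds because $pr_2(\delta(a))=w$ and $w'\sim_X w$ by symmetry of $\sim_X$ (from $w\sim_X w'$). Conditions~3 and~4 are just the hypotheses $T\subseteq\mathcal{T}_{w'}$ and $T\subseteq \mathcal{T}_v$. Hence, since $(w',\delta,v)\in M$, the defining clause of $M$ yields $\phi\in F(v)$, which is what we want.

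I do not anticipate any real obstacle: the whole argument is essentially a bookkeeping exercise once the right $s$ is guessed. The only subtle point to double-check is that $s(a)=(\phi,w)$ is a legitimate action, but this is immediate from $\Delta=\Phi\times W$ and $w\in W$. Everything else follows from Strategic Introspection, Lemma~\ref{child all}, and the unpacking of Definition~\ref{canonical mechanism}.
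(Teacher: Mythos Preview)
Your proposal is correct and matches the paper's proof essentially step for step: define $s(a)=(\phi,w)$, use Strategic Introspection plus Lemma~\ref{child all} to transfer $[C]^T_X\phi$ into $F(w')$, and then invoke Definition~\ref{canonical mechanism} on the triple $(w',\delta,v)$. The only difference is that you spell out the verification of conditions~1--4 (including the symmetry of $\sim_X$) more explicitly than the paper does.
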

\begin{proof}
Let action profile $s\in\Delta^C$ be  such that
\begin{equation}\label{apr21-a}
    s(a)=(\phi,w)
\end{equation}
for each actor $a\in C$. Consider any states $w',v\in W$ and any complete action profile $\delta\in\Delta^\mathcal{A}$ such that 
\begin{eqnarray}
\hspace{-8mm}&&w\sim_{X}w',\label{apr21-b-part1}\\
\hspace{-8mm}&&s=_C \delta,\;T\subseteq \mathcal{T}_{w'},\;T\subseteq \mathcal{T}_{v},\; \mbox{and}\; (w',\delta,v)\in M. \label{apr21-b}
\end{eqnarray}
It suffices to show that $\phi\in F(v)$.

The assumption $[C]^T_X\phi\in F(w)$ of the lemma implies  $F(w)\vdash \B^T_X[C]^T_X\phi$ by the Strategic Introspection axiom and propositional reasoning. 
Hence, $\B^T_X[C]^T_X\phi\in F(w)$ because set $F(w)$ is maximal. Thus, $[C]^T_X\phi\in F(w')$ by Lemma~\ref{child all} and assumption~(\ref{apr21-b-part1}) and the part $T\subseteq \mathcal{T}_{w'}$ of assumption~(\ref{apr21-b}). Therefore, $\phi\in F(v)$ by Definition~\ref{canonical mechanism}, and assumptions~(\ref{apr21-a}) and~(\ref{apr21-b}).
\end{proof}

The proof of the next lemma can be found in the appendix.

\begin{lemma}\label{H child exists}
For an arbitrary state $w\in W$, any formula $\neg[C]^T_X\phi\in F(w)$, and any action profile $s\in\Delta^C$, there are states $w',v\in W$ and a complete action profile $\delta\in\Delta^\mathcal{A}$ such that $w\sim_{X} w'$, $s=_C\delta$, $T\subseteq \mathcal{T}_{w'}$, $T\subseteq \mathcal{T}_{v}$, $(w',\delta,v)\in M$, and $\phi\notin F(v)$. 
\end{lemma}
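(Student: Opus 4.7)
The plan is to mirror the structure of the proof of Lemma~\ref{child exists}: first reduce to the existence of a suitable $w'$ by applying Lemma~\ref{child exists} to the belief form of the premise, then construct $\delta$ and $v$, and finally verify consistency of the formula set that determines $F(v)$.

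\textbf{Step 1 (produce $w'$).} Strategic Introspection converts $\neg[C]^T_X\phi\in F(w)$ into $\B^T_X[C]^T_X\phi\notin F(w)$, so Lemma~\ref{child exists} supplies a state $w'\in W$ with $w\sim_X w'$, $T=\mathcal{T}_{w'}$, and $[C]^T_X\phi\notin F(w')$. The construction makes $w'$ a fresh child of $w$ with edge label $X$, and $|\mathcal{B}|>|\mathcal{A}|$ lets us further pick the $b$-value of $w'$ so that $w'\neq pr_2(s(a))$ for every $a\in C$. Then, for each such signature $pr_2(s(a))$, the unique simple path from $w'$ passes through the edge labelled $X$, yielding $\{x\in V: w'\sim_x pr_2(s(a))\}\subseteq X$; this will bound the subscripts of all triggered formulas below.

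\textbf{Step 2 (define $\delta$).} Put $\delta(a)=s(a)$ for $a\in C$. For each $a\in\mathcal{A}\setminus C$, attach a fresh child $u_a$ of $w'$ with \emph{empty} edge label, and set $\delta(a)=(\neg\phi, u_a)$. Then for any formula $[C']^{T'}_{X'}\psi\in F(w')$ with $C'\nsubseteq C$, the signature condition $w'\sim_{X'} u_a$ (for $a\in C'\cap(\mathcal{A}\setminus C)$) forces $X'=\varnothing$, and the voting condition $pr_1(\delta(a))=\psi$ then forces $\psi=\neg\phi$. Consequently, every formula triggered by $\delta$ either has $C'\subseteq C$ with votes matching $s$, or contributes $\psi=\neg\phi$, which is already in $H$.

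\textbf{Step 3 (consistency of $H$).} Let
\[ H = \{\neg\phi\}\cup \Psi\cup\{\rho: \B^\varnothing_\varnothing\rho\in F(w')\}\cup\{\B^T_Y\chi\to\chi: Y\subseteq V, \chi\in\Phi\}, \]
where $\Psi$ collects the $\psi$'s such that some $[C']^{T'}_{X'}\psi\in F(w')$ is triggered by $\delta$ with $T'\subseteq T$. By Steps 1--2, each $\psi\in\Psi\setminus\{\neg\phi\}$ is witnessed by a formula $[C'_\psi]^{T'_\psi}_{X'_\psi}\psi\in F(w')$ with $C'_\psi\subseteq C$, $T'_\psi\subseteq T$, $X'_\psi\subseteq X$, and $pr_1(s(a))=\psi$ for every $a\in C'_\psi$. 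Suppose toward contradiction that $H$ derives $\phi$ from triggered $\psi_1,\ldots,\psi_k$, formulas $\rho_1,\ldots,\rho_p$ (each with $\B^\varnothing_\varnothing\rho_l\in F(w')$), and Trust instances $\B^T_{Y_j}\chi_j\to\chi_j$. The key combinatorial fact is that each actor $a\in C$ casts a single vote $pr_1(s(a))$, so the coalitions $C'_i$ for distinct $\psi_i$'s are pairwise disjoint. Monotonicity aligns each $[C'_i]^{T'_i}_{X'_i}\psi_i$ to $[C'_i]^T_X\psi_i\in F(w')$; iterated Cooperation on the disjoint $C'_i$'s combines these into $[\bigcup_i C'_i]^T_X(\bigwedge_i\psi_i)\in F(w')$; one more Monotonicity step gives $[C]^T_X(\bigwedge_i\psi_i)\in F(w')$. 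Next, $[\varnothing]^T_X\rho_l\in F(w')$ follows from $\B^\varnothing_\varnothing\rho_l\in F(w')$ via Monotonicity and Public Belief, and $\vdash[\varnothing]^T_X(\B^T_{Y_j}\chi_j\to\chi_j)$ follows from Trust, Public Belief, and Monotonicity. Using the deduction theorem, Necessitation, and iterated Cooperation (with $\varnothing$ against $C$), absorb the $\rho_l$ and Trust premises one by one to arrive at $[C]^T_X\phi\in F(w')$, contradicting Step 1.

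\textbf{Step 4 and main obstacle.} Extend $H$ to a maximal consistent $H'$, pick a fresh $b\in\mathcal{B}$, and set $v=w'\circ(\varnothing, b, T, H')$. Items 4--5 of Definition~\ref{canonical W} hold for $v$ because the needed sets are included in $H\subseteq H'$, so $v\in W$; the remaining conditions ($w\sim_X w'$, $s=_C\delta$, $T\subseteq\mathcal{T}_{w'}$, $T\subseteq\mathcal{T}_v$, $(w',\delta,v)\in M$, $\phi\notin F(v)$) follow directly from the construction. The main obstacle is the consistency argument of Step 3; it succeeds only because the $C'_i$'s are forced to be pairwise disjoint by the voting structure (permitting iterated Cooperation) and because the triggered subscripts satisfy $X'_i\subseteq X$ thanks to the choice of $w'$ (ensuring the derivation lands exactly at $[C]^T_X\phi$ rather than some strictly weaker $[C]^T_{X^*}\phi$ with $X^*\supsetneq X$).
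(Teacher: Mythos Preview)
There is a genuine gap in your handling of formulas with empty coalition.  In the mechanism check (Definition~\ref{canonical mechanism}), a formula $[\varnothing]^{T'}_Y\psi\in F(w')$ with $T'\subseteq T$ is \emph{automatically} triggered, because the voting and signature conditions quantify over the actors in $\varnothing$ and are therefore vacuous.  In particular there is no constraint whatsoever on the subscript $Y$, so your claim in Step~3 that every triggered witness satisfies $X'_\psi\subseteq X$ is false in this case.  Since Monotonicity can only \emph{enlarge} the subscript, you cannot realign such a $[\varnothing]^{T'}_Y\psi$ to subscript $X$; your Cooperation chain then yields only $[C]^T_V\phi\in F(w')$, which does not contradict $[C]^T_X\phi\notin F(w')$.

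This is not a patchable oversight.  Because you obtain $w'$ via an \emph{arbitrary} Lindenbaum extension (inside Lemma~\ref{child exists}), nothing prevents $F(w')$ from containing, for example, $[\varnothing]^T_V\phi$ itself: this formula is consistent with $\neg[C]^T_X\phi$ together with $\{\chi:\B^\varnothing_X\chi\in F(w)\}$ and the Trust instances (semantically, $[\varnothing]^T_V\phi$ constrains only transitions out of $V$-equivalent states, while the failure of $[C]^T_X\phi$ can be witnessed at some other merely $X$-equivalent state).  With $[\varnothing]^T_V\phi\in F(w')$ the formula $\phi$ is triggered and must enter $H$, so $H$ is outright inconsistent and no $v$ exists.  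The paper avoids this by constructing $F(w')$ and $F(v)$ \emph{jointly}: it first establishes a ``$T$-harmony'' between the two seed sets (using the Belief in Unavoidability axiom, applied at the level of $F(w)$ rather than $F(w')$), then extends to a \emph{complete} $T$-harmony which, for every formula $[\varnothing]^T_Y\psi$, decides in advance whether $\neg[\varnothing]^T_Y\psi$ goes into the $w'$-side or $\psi$ goes into the $v$-side, and only afterwards takes Lindenbaum closures of both.  Your sequential construction (first fix $w'$, then build $v$) cannot reproduce this coordination.

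A smaller point: in Step~1 the condition $w'\neq pr_2(s(a))$ is too weak; you need $pr_2(s(a))\notin Cone(w')$ for the simple path to be forced through the $X$-labelled edge.  This is repairable by the same pigeonhole argument on the disjoint cones of clones, but the empty-coalition issue above is not.
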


\subsection{Final Steps}\label{Final Steps section}

We are now ready to prove the truth lemma for our logical system and the strong completeness of the system.

\begin{lemma}\label{truth lemma}
$w\Vdash \phi$ iff $\phi\in F(w)$ for each state $w\in W$ and each formula $\phi\in \Phi$.
\end{lemma}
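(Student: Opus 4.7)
The plan is to proceed by structural induction on $\phi$. The base case $\phi = p$ is immediate from Definition~\ref{canonical pi}, and the Boolean cases $\phi = \neg\psi$ and $\phi = \psi_1\to\psi_2$ follow routinely from the induction hypothesis together with the maximality and consistency of $F(w)$ as a set of formulae.

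The two substantive cases concern the modalities, and in both directions the heavy lifting has already been done in the preceding two subsections. For $\phi = \B^T_X\psi$, the forward direction applies Lemma~\ref{child all}: if $\B^T_X\psi\in F(w)$, then for any state $u$ with $w\sim_X u$ and $T\subseteq\mathcal{T}_u$, the lemma gives $\psi\in F(u)$, so the induction hypothesis yields $u\Vdash\psi$, and item~4 of Definition~\ref{sat} gives $w\Vdash\B^T_X\psi$. For the converse I argue contrapositively: if $\B^T_X\psi\notin F(w)$, Lemma~\ref{child exists} produces a state $u$ with $w\sim_X u$, $T=\mathcal{T}_u$ (so in particular $T\subseteq\mathcal{T}_u$), and $\psi\notin F(u)$; the induction hypothesis gives $u\nVdash\psi$, hence $w\nVdash\B^T_X\psi$.

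For $\phi = [C]^T_X\psi$, the forward direction invokes Lemma~\ref{H child all}, which directly produces an action profile $s\in\Delta^C$ such that, for all $w',v$ and complete profiles $\delta$ satisfying the hypotheses of item~5 of Definition~\ref{sat}, we have $\psi\in F(v)$; by the induction hypothesis $v\Vdash\psi$, so $s$ witnesses $w\Vdash[C]^T_X\psi$. For the converse, assume $[C]^T_X\psi\notin F(w)$; by maximality $\neg[C]^T_X\psi\in F(w)$, and Lemma~\ref{H child exists} says that for every candidate action profile $s\in\Delta^C$ there exist $w',v,\delta$ meeting the hypotheses of item~5 of Definition~\ref{sat} with $\psi\notin F(v)$. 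By the induction hypothesis $v\nVdash\psi$, so no profile witnesses $[C]^T_X\psi$ at $w$, i.e., $w\nVdash[C]^T_X\psi$.

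I do not anticipate any real obstacle here; the proof is essentially bookkeeping once the four auxiliary lemmas are in place. The genuinely hard work is packaged inside Lemmas~\ref{child exists} and~\ref{H child exists}, whose constructions (appending a fresh node to the canonical tree, respectively constructing a transition in the canonical mechanism that refutes an arbitrary strategy) are precisely what enables the existential clauses of items~4 and~5 of Definition~\ref{sat} to be matched by syntactic witnesses in $F(w)$.
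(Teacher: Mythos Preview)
Your proposal is correct and follows essentially the same approach as the paper: structural induction on $\phi$, with the atomic and Boolean cases handled routinely and the two modal cases reduced precisely to Lemmas~\ref{child all}, \ref{child exists}, \ref{H child all}, and \ref{H child exists} in the way you describe. The only cosmetic discrepancy is that what you call the ``forward direction'' is what the paper labels $(\Leftarrow)$ (since the iff is stated as $w\Vdash\phi$ iff $\phi\in F(w)$), but the content is identical.
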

\begin{proof}
We prove the statement by induction on the complexity of formula~$\phi$. 
Suppose that formula $\phi$ is an atomic proposition $p$. Note that $w\Vdash p$ iff $w\in \pi(p)$ by item~1 of Definition~\ref{sat}. At the same time, $w\in \pi(p)$ iff $p\in F(w)$ by Definition~\ref{canonical pi}. Therefore, $w\Vdash p$ iff $p\in F(w)$.

If formula $\phi$ is a negation or an implication, then the statement of the lemma follows from the maximality and the consistency of the set $F(w)$, items 2 and 3 of Definition~\ref{sat}, and the induction hypothesis in the standard way.

\vspace{1mm}
Suppose that formula $\phi$ has the form $\B^T_X\psi$. 

\vspace{1mm}
\noindent$(\Rightarrow):$ Assume that $\B^T_X\psi\notin F(w)$. Then, $\neg\B^T_X\psi\in F(w)$ because $F(w)$ is a maximal consistent set of formulae. Thus, by Lemma~\ref{child exists}, there is a state $w'\in W$ such that $w\sim_{X} w'$, $T\subseteq \mathcal{T}_{w'}$, and $\neg\psi\in F(w')$. Hence, $\psi\notin F(w')$ because set $F(w')$ is consistent. Then, $w'\nVdash\psi$ by the induction hypothesis. Therefore, $w\nVdash\B^T_X\psi$ by item~4 of Definition~\ref{sat} and statements $w\sim_{X} w'$ and $T\subseteq \mathcal{T}_{w'}$. 

\vspace{1mm}
\noindent$(\Leftarrow):$ Assume that $\B^T_X\psi\in F(w)$. Consider any state $w'$ such that $w\sim_{X} w'$ and $T\subseteq \mathcal{T}_{w'}$. By item~4 of Definition~\ref{sat}, it suffices to show that $w'\Vdash \psi$, which is true by Lemma~\ref{child all}.

\vspace{1mm}
Finally, suppose that formula $\phi$ has the form $[C]^T_X\psi$. 

\vspace{1mm}
\noindent
$(\Rightarrow):$ Assume that $[C]^T_X\psi\notin F(w)$. Thus, $\neg [C]^T_X\psi\in F(w)$ because set $F(w)$ is maximal. Hence, by Lemma~\ref{H child exists}, for any action profile $s\in\Delta^C$, there are states $w',v\in W$ and a complete action profile $\delta\in\Delta^\mathcal{A}$ such that $w\sim_{X} w'$, $s=_C\delta$, $T\subseteq \mathcal{T}_{w'}$, $T\subseteq \mathcal{T}_{v}$, $(w',\delta,v)\in M$, and $\psi\notin F(v)$. Then, by the induction hypothesis, for any action profile $s\in\Delta^C$, there are states $w',v\in W$ and a complete action profile $\delta\in\Delta^\mathcal{A}$ such that $w\sim_{X} w'$, $s=_C\delta$, $T\subseteq \mathcal{T}_{w'}$, $T\subseteq \mathcal{T}_{v}$, $(w',\delta,v)\in M$, and $v\nVdash\psi$. Therefore, $w\nVdash[C]^T_X\psi$ by item~5 of Definition~\ref{sat}.

\vspace{1mm}
\noindent$(\Leftarrow):$ Assume $[C]^T_X\psi\in F(w)$. Thus, by Lemma~\ref{H child all}, there is an action profile $s\in \Delta^C$ such that for all states $w',v\in W$ and each complete action profile $\delta\in\Delta^\mathcal{A}$ if $w\sim_{X}w'$, $s=_C \delta$, $T\subseteq \mathcal{T}_{w'}$, $T\subseteq \mathcal{T}_{v}$, and $(w',\delta,v)\in M$, then $\psi\in F(v)$. Hence, by the induction hypothesis, there is an action profile $s\in \Delta^C$ such that for all states $w',v\in W$ and each complete action profile $\delta\in\Delta^\mathcal{A}$ if $w\sim_{X}w'$, $s=_C \delta$, $T\subseteq \mathcal{T}_{w'}$, $T\subseteq \mathcal{T}_{v}$, and $(w',\delta,v)\in M$, then $v\Vdash\psi$. Therefore, $w\Vdash[C]^T_X\psi$ by item~5 of Definition~\ref{sat}.
\end{proof}

\begin{theorem}[strong completeness]
For any set of formulae $F\subseteq \Phi$ and any formula $\phi\in\Phi$, if $F\nvdash \phi$, then there is a state $w$ of a game such that $w\Vdash f$ for each formula $f\in F$ and $w\nVdash \phi$.
\end{theorem}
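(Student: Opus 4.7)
The plan is to reduce strong completeness to the truth lemma (Lemma~\ref{truth lemma}) by a standard Lindenbaum-style construction, choosing the parameters of the canonical game so that the singleton sequence $T_0,F_0$ is a legitimate state whose $F$-label contains $F\cup\{\neg\phi\}$.

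First I would observe that since $F\nvdash\phi$, the set $F\cup\{\neg\phi\}$ is consistent, and hence by the usual Lindenbaum argument can be extended to a maximal consistent set $F_0\subseteq\Phi$. The next, and really only non-routine, choice is the trust parameter $T_0$. For the sequence $w=T_0,F_0$ to lie in the canonical set of states $W$ of the game $G(T_0,F_0)$, item~5 of Definition~\ref{canonical W} requires that $\B^{T_0}_Y\psi\to\psi\in F_0$ for every dataset $Y\subseteq V$ and every formula $\psi\in\Phi$. I would take $T_0=\varnothing$: then $\B^{T_0}_Y\psi\to\psi$ is precisely the Truth axiom, so it belongs to every maximal consistent set and in particular to $F_0$.

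With this choice, the sequence $w=T_0,F_0$ (that is, the case $n=0$ in Definition~\ref{canonical W}) is a state of the canonical game $G(T_0,F_0)$, and by construction $F(w)=F_0$. Applying Lemma~\ref{truth lemma} to this game, for every formula $\psi\in\Phi$ we have $w\Vdash\psi$ iff $\psi\in F_0$. Since $F\cup\{\neg\phi\}\subseteq F_0$ and $F_0$ is consistent, it follows that $w\Vdash f$ for every $f\in F$ and, because $\neg\phi\in F_0$ forces $\phi\notin F_0$, that $w\nVdash\phi$, completing the argument.

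The only conceptual step here is picking $T_0$ so that the root of the canonical tree is a legal state; everything else is bookkeeping that has already been packaged into the truth lemma. I do not anticipate a substantive obstacle, since the hard combinatorial and axiomatic work has been absorbed into Lemmas~\ref{single transfer lemma}--\ref{H child exists} and the truth lemma itself.
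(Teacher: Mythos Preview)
Your proposal is correct and matches the paper's own proof essentially step for step: extend $F\cup\{\neg\phi\}$ to a maximal consistent $F_0$, take $T_0=\varnothing$ so that item~5 of Definition~\ref{canonical W} reduces to the Truth axiom, and then invoke Lemma~\ref{truth lemma} at the root state $\varnothing,F_0$ of $G(\varnothing,F_0)$.
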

\begin{proof}
The assumption $F\nvdash \phi$ implies that the set $F\cup\{\neg\phi\}$ is consistent. Let $F_0$ be any maximal consistent extension of this set. Consider the canonical game $G(\varnothing,F_0)$. 

First, we show that the sequence $\varnothing,F_0$ is a state of this canonical game. By Definition~\ref{canonical W}, it suffices to show that $\B^\varnothing_Y\psi\to\psi\in F_0$ for each dataset $Y\subseteq V$ and each formula $\psi\in\Phi$. The last statement is true by the Truth axiom and because set $F_0$ is maximal.

Finally, note that $\phi\notin F_0$ because set $F_0$ is consistent and $\neg\phi\in F_0$. Therefore, by Lemma~\ref{truth lemma} and because $F\subseteq F_0$, it follows that $\varnothing,F_0\Vdash f$ for each formula $f\in F$ and also $\varnothing,F_0\nVdash \phi$.
\end{proof}

\section{Ex Ante and Ex Post Trust}\label{ex ante ex post}

In item~5 of Definition~\ref{sat}, we require that $T\subseteq \mathcal{T}_u$ and $T\subseteq \mathcal{T}_v$. In other words, we apply the assumption of trustworthiness of dataset $T$ {\em ex ante} (before action) and {\em ex post} (after action). In general, one can consider that different datasets, $A$ and $P$, are required to be trustworthy ex ante and ex post, respectively. That leads to a more general modality $[C]^{A,P}_X\phi$, defined below:

\vspace{1mm}\noindent{\em
$w\Vdash[C]^{A,P}_X\phi$, when there is an action profile $s\in \Delta^C$ of coalition $C$ such that for all states $u,v\in W$ and each complete action profile $\delta\in\Delta^\mathcal{A}$ if $w\sim_{X}u$, $A\subseteq \mathcal{T}_u$, $s=_C \delta$, $(u,\delta,v)\in M$, and $P\subseteq \mathcal{T}_v$, then $v\Vdash\phi$.
}

One might wonder which of the data variables among $\Vec{x}$, $\Vec{v}$, and $t$ should be trusted ex ante and which ex post for statement~\eqref{9-aug-a} to be true in our introductory example. Unfortunately, we do not have access to Patriot code to answer this question, but we suspect that the Patriot missile constantly adjusts the trajectory based on the current speed and position of the target. To model such behaviour one would need to use multi-step games instead of one-shot (strategic) games that we consider in this paper. 

To illustrate ex ante and ex post trust, let us consider a different example where a governing body consisting of 25 members is about to vote on passing a certain regulation. Suppose that each of them votes {\em yes} or {\em no} by a paper ballot. After the vote, the ballots are counted by a tallyman and the number of {\em yes} votes, denoted by $n$, is announced. If $n$ is more than 12, then the regulation is approved, see Figure~\ref{vote figure}. 

\begin{figure}[ht]
\begin{center}
\scalebox{.5}{\includegraphics{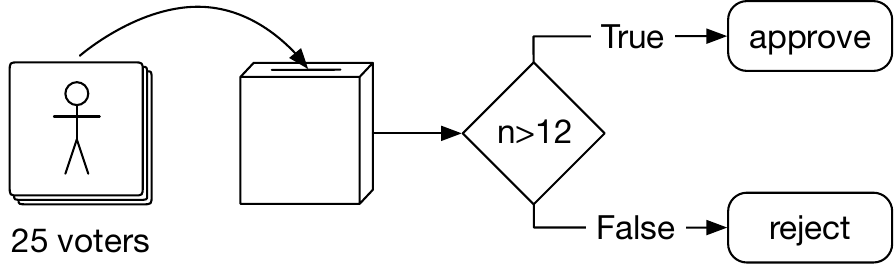}}
\caption{Voting game.}\label{vote figure}
\end{center}
\end{figure}

Let us now further assume that a newspaper contacts all 25 members and ask them how they plan to vote. Suppose that 10 members state that they plan to support the proposal, 10 members say that they plan to vote against the proposal, and 5 members are undecided. 
Alice, Bob, and Cathy are among those 5 who are undecided. It appears that the coalition consisting of the three of them has a strategy to sway the outcome of the vote either way. If all three of them vote {\em yes}, then the regulation is approved. If they vote {\em no}, then it is rejected. Note, however, that both of these strategies are {\em doxastic}: they rely on the {\em ex ant} trust in the newspaper data and  on the {\em ex post} trust in the tallyman computing $n$ correctly. Let data variables $yea$ and $nay$ represent the number of members, according to the newspaper, who plan to vote for and against, respectively. Note that, in the current world $w$, $yea=nay=10$. Thus, 
\begin{align}
w\Vdash& [\text{Alice,Bob,Cathy}]^{\{yea\},\{n\}}_{\{yea\}}\,\text{``Approved''},\label{27-june-a}\\
w\Vdash& [\text{Alice,Bob,Cathy}]^{\{nay\},\{n\}}_{\{nay\}}\,\text{``Rejected''}\nonumber.
\end{align}
All subscripts and superscripts are important in both formulae above. For example, if subscript $yea$ is dropped in statement~\eqref{27-june-a}, then the statement is no longer true:
$$
w\nVdash [\text{Alice,Bob,Cathy}]^{\{yea\},\{n\}}_{\varnothing}\,\text{``Approved''}.
$$
This is because, although the doxastic strategy exists, the empty dataset {\em does not inform} its existence.

If $yea$ is removed from the superscript, then the statement is also not true
$$
w\nVdash [\text{Alice,Bob,Cathy}]^{\varnothing,\{n\}}_{\{yea\}}\,\text{``Approved''}
$$
because the newspaper's information is no longer assumed to be trusted. Without the {\em ex ante} trust in the existence of at least 10 members who are ready to support the regulation, the coalition consisting of Alice, Bob, and Cathy does not have a strategy to pass the regulation. 

Finally, without the {\em ex post} trust that the votes will be counted correctly, such strategy does not exist either:
$$
w\nVdash [\text{Alice,Bob,Cathy}]^{\{yea\},\varnothing}_{\{yea\}}\,\text{``Approved''}.
$$

Note that formula $[C]^{T,T}_X\phi$ is equivalent to our original modality $[C]^{T}_X\phi$.

 The axioms of our logical system could be modified in a straightforward way for this more general modality. The most significant change is for the Public Belief axiom:
$\B^{T}_\varnothing\phi\to[\varnothing]^{\varnothing,T}_\varnothing\phi$.
Although we did not check the details, we believe that our soundness and completeness results can be adjusted for this more general modality. 

\section{Conclusion}\label{conclusion section}

In this paper, we proposed the notion of doxastic strategies. We gave a sound and complete logical system for reasoning about such strategies and outlined a possible extension of this work that separates {\em ex ante} and {\em ex post} trust. 


\section*{Acknowledgment}

Junli Jiang acknowledges the support of the Innovation Research 2035 Pilot Plan of Southwest University (NO. SWUPilotPlan018).

\bibliography{naumov}

\clearpage

\appendix

\addtolength{\oddsidemargin}{2.5cm}
	\addtolength{\evensidemargin}{2.5cm}
	\addtolength{\textwidth}{-5cm}

	\addtolength{\topmargin}{2.5cm}
	\addtolength{\textheight}{-5cm}


\onecolumn

\begin{center}
    {\LARGE\sc Technical Appendix}
\end{center}

\vspace{3mm}
\begin{center}
    {\bf This appendix is not a part of AAAI proceedings.} 
\end{center}

\section{Soundness}\label{soundness section}

In this section, we prove the soundness of our logical system. The soundness of the Truth, the Negative Introspection, the Distributivity, and the Monotonicity axioms is straightforward. Below, we show the soundness of each of the remaining axioms for an arbitrary game  $(W,\{\sim_x\}_{x\in V},\{\mathcal{T}_w\}_{w\in W},\Delta,M,\pi)$ as a separate lemma. We state the strong soundness for the whole system as Theorem~\ref{strong soundness theorem} at the end of this section.

\begin{lemma}[Trust]
$w\Vdash \B^T_X(\B^T_Y\phi\to \phi)$.
\end{lemma}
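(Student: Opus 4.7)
The plan is to unfold the semantics of the outer $\B^T_X$ modality, reducing the claim to a pointwise statement about an arbitrary $X$-indistinguishable state, and then exploit the reflexivity of $\sim_Y$ together with the fact that trustworthiness of $T$ travels from the outer quantification into the inner one.

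Concretely, I would fix an arbitrary state $u \in W$ such that $w \sim_X u$ and $T \subseteq \mathcal{T}_u$, and aim to establish $u \Vdash \B^T_Y\phi \to \phi$. By item~3 of Definition~\ref{sat}, it suffices to assume $u \Vdash \B^T_Y\phi$ and derive $u \Vdash \phi$. Unfolding the assumption via item~4 of Definition~\ref{sat}, I know that $u' \Vdash \phi$ for every $u' \in W$ with $u \sim_Y u'$ and $T \subseteq \mathcal{T}_{u'}$. Now apply this to the witness $u' = u$: by item~2 of Definition~\ref{epistemic model}, the relation $\sim_Y$ is an equivalence relation, hence reflexive, so $u \sim_Y u$; and the condition $T \subseteq \mathcal{T}_u$ is exactly the hypothesis inherited from the outer $\B^T_X$ quantification. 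Therefore $u \Vdash \phi$, as required.

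The main (indeed only) subtlety is recognising that the same dataset $T$ appears in both the outer and inner belief modalities of the axiom, and that this is precisely what makes the argument go through: the outer $\B^T_X$ supplies exactly the $T$-trustworthiness hypothesis needed to feed $u$ itself back into the inner $\B^T_Y$. Had the two superscripts differed, reflexivity alone would not suffice. No obstacle arises; the proof is essentially a one-line observation once the semantics are unpacked.
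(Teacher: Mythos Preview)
Your proof is correct and follows essentially the same approach as the paper's: unfold the outer $\B^T_X$, assume $u\Vdash\B^T_Y\phi$, and instantiate the inner belief at $u$ itself using reflexivity of $\sim_Y$ together with the inherited hypothesis $T\subseteq\mathcal{T}_u$. The only cosmetic difference is that the paper invokes reflexivity of $\sim_Y$ directly, whereas you cite item~2 of Definition~\ref{epistemic model} (which strictly speaking covers $\sim_x$ for single variables, with $\sim_Y$ being their intersection), but this is immaterial.
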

\begin{proof}
Consider any state $u\in W$ such that $w\sim_X u$ and $T\subseteq\mathcal{T}_u$. By item~4 of Definition~\ref{sat}, it suffices to show that $u\Vdash \B^T_Y\phi\to \phi$. Suppose that $u\Vdash \B^T_Y\phi$. By item~3 of Definition~\ref{sat}, it is enough to prove that $u\Vdash\phi$. 

Note that $u\sim_Y u$ because relation $\sim_Y$ is reflexive. Also, $T\subseteq\mathcal{T}_u$ by the choice of world $u$. Then, the assumption $u\Vdash \B^T_Y\phi$ implies that $u\Vdash\phi$ by item~4 of Definition~\ref{sat}.
\end{proof}

\begin{lemma}[Cooperation]
If $w\Vdash[C]^T_X(\phi\to\psi)$, $w\Vdash[D]^T_X\phi$, and sets $C$ and $D$ are disjoint, then $w\Vdash[C\cup D]^T_X\psi$.
\end{lemma}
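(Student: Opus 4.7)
The plan is to combine the two witnessing action profiles into a joint profile for $C\cup D$, exploiting the disjointness hypothesis.

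First, I would unfold item~5 of Definition~\ref{sat} applied to each of the two hypotheses. From $w\Vdash[C]^T_X(\phi\to\psi)$ I obtain an action profile $s_1\in\Delta^C$ such that, for all $u,v\in W$ and $\delta\in\Delta^\mathcal{A}$, whenever $w\sim_X u$, $T\subseteq\mathcal{T}_u$, $s_1=_C\delta$, $(u,\delta,v)\in M$, and $T\subseteq\mathcal{T}_v$, we have $v\Vdash\phi\to\psi$. Similarly, $w\Vdash[D]^T_X\phi$ yields a profile $s_2\in\Delta^D$ with the analogous guarantee for $\phi$.

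Next, since $C\cap D=\varnothing$, I would define a joint profile $s\in\Delta^{C\cup D}$ by setting $s(a)=s_1(a)$ for $a\in C$ and $s(a)=s_2(a)$ for $a\in D$; the disjointness ensures this is well-defined. I claim $s$ witnesses $w\Vdash[C\cup D]^T_X\psi$. To verify this, consider arbitrary $u,v\in W$ and $\delta\in\Delta^\mathcal{A}$ satisfying $w\sim_X u$, $T\subseteq\mathcal{T}_u$, $s=_{C\cup D}\delta$, $(u,\delta,v)\in M$, and $T\subseteq\mathcal{T}_v$. From $s=_{C\cup D}\delta$ one immediately obtains $s_1=_C\delta$ and $s_2=_D\delta$ by restriction. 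Applying the two guarantees with this $u,v,\delta$ gives $v\Vdash\phi\to\psi$ and $v\Vdash\phi$, whence $v\Vdash\psi$ by item~3 of Definition~\ref{sat}. Invoking item~5 of Definition~\ref{sat} once more concludes $w\Vdash[C\cup D]^T_X\psi$.

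There is no real obstacle here: the argument is the standard coalitional-power cooperation proof, and the additional trust conditions $T\subseteq\mathcal{T}_u$ and $T\subseteq\mathcal{T}_v$ and the indistinguishability condition $w\sim_X u$ carry over verbatim because both hypotheses share the same $T$ and $X$. The only place where care is needed is in noting that the disjointness $C\cap D=\varnothing$ is exactly what makes the combined profile $s$ well-defined, so this hypothesis of the axiom is essential and used precisely once.
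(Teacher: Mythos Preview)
Your proposal is correct and follows essentially the same approach as the paper's proof: unfold item~5 of Definition~\ref{sat} to obtain witnessing profiles $s_1,s_2$, splice them into a single profile $s$ on $C\cup D$ using disjointness, and verify that any $u,v,\delta$ satisfying the hypotheses for $s$ also satisfy them for $s_1$ and $s_2$, yielding $v\Vdash\phi\to\psi$ and $v\Vdash\phi$, hence $v\Vdash\psi$. The only difference is notational (the paper uses $w'$ where you use $u$).
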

\begin{proof}
By item~5 of Definition~\ref{sat}, the assumptions $w\Vdash[C]^T_X(\phi\to\psi)$ and $w\Vdash[D]^T_X\phi$ imply that there are action profiles $s_1\in \Delta^C$ and $s_2\in \Delta^D$ such that 
\begin{eqnarray}
   &&\forall w',v\in W\,\forall\delta\in\Delta^\mathcal{A}(w\sim_X w',T\subseteq\mathcal{T}_{w'},s_1=_C\delta,\nonumber\\
   &&\hspace{10mm}(w',\delta,v)\in M,T\subseteq\mathcal{T}_{v} \Rightarrow v\Vdash\phi\to\psi) \label{apr17-a}
\end{eqnarray}
and
\begin{eqnarray}
   &&\forall w',v\in W\,\forall\delta\in\Delta^\mathcal{A}(w\sim_X w',T\subseteq\mathcal{T}_{w'},s_2=_D\delta,\nonumber\\
   &&\hspace{10mm}(w',\delta,v)\in M,T\subseteq\mathcal{T}_{v} \Rightarrow v\Vdash\phi) \label{apr17-b}
\end{eqnarray}
Consider action profile
\begin{equation}\label{apr17-c}
s(a)=
\begin{cases}
s_1(a), & \mbox{if } a\in C,\\
s_2(a), & \mbox{if } a\in D,\\
\end{cases} 
\end{equation}
of coalition $C\cup D$. Note that profile $s$ is well-defined because sets $C$ and $D$ are disjoint by the assumption of the lemma. Consider any  states $w',v\in W$ and any complete action profile $\delta\in\Delta^\mathcal{A}$ such that $w\sim_{X}w'$, $T\subseteq\mathcal{T}_{w'}$, $s=_{C\cup D} \delta$,  $(w',\delta,v)\in M$, and $T\subseteq\mathcal{T}_{v}$. By item~5 of Definition~\ref{sat}, it suffices to show that $v\Vdash\psi$.

Assumption $s=_{C\cup D} \delta$ implies that $s_1=_{C} \delta$ and $s_2=_{D} \delta$ by equation~(\ref{apr17-c}). Thus, $v\Vdash\phi\to\psi$ and $v\Vdash\phi$ by statements~(\ref{apr17-a}) and (\ref{apr17-b}) and assumptions $w\sim_{X}w'$, $T\subseteq\mathcal{T}_{w'}$,  $(w',\delta,v)\in M$, and $T\subseteq\mathcal{T}_{v}$. Therefore, $v\Vdash\psi$ by item~3 of Definition~\ref{sat}. 
\end{proof}

\begin{lemma}[Strategic Introspection]
$w\Vdash [C]^T_X\phi$ iff $w\Vdash\B^T_X[C]^T_X\phi$. 
\end{lemma}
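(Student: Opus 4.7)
The plan is to exploit the fact that $\sim_X$ is an equivalence relation (item~2 of Definition~\ref{epistemic model}) and to reuse the same witnessing action profile in both directions, transporting it between $w$ and states $X$-indistinguishable from $w$ via transitivity and symmetry of $\sim_X$.

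For the forward direction, assume $w\Vdash[C]^T_X\phi$ with witness $s\in\Delta^C$. To establish $w\Vdash\B^T_X[C]^T_X\phi$, fix any $u$ with $w\sim_X u$ and $T\subseteq\mathcal{T}_u$; I claim the same $s$ witnesses $u\Vdash[C]^T_X\phi$. Indeed, for any $u',v$ and complete action profile $\delta$ with $u\sim_X u'$, $T\subseteq\mathcal{T}_{u'}$, $s=_C\delta$, $(u',\delta,v)\in M$, and $T\subseteq\mathcal{T}_v$, transitivity of $\sim_X$ gives $w\sim_X u'$, so the defining property of $s$ at $w$ yields $v\Vdash\phi$.

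For the backward direction, assume $w\Vdash\B^T_X[C]^T_X\phi$. I split into two cases. If no state $u^*$ satisfies both $w\sim_X u^*$ and $T\subseteq\mathcal{T}_{u^*}$, then the universal quantifier in item~5 of Definition~\ref{sat} is vacuously satisfied at $w$, so any action profile $s\in\Delta^C$ witnesses $w\Vdash[C]^T_X\phi$; such $s$ exists because $\Delta$ is nonempty (item~4 of Definition~\ref{epistemic model}). Otherwise, fix some such $u^*$. By the assumption, $u^*\Vdash[C]^T_X\phi$, so there is a witnessing action profile $s\in\Delta^C$. I claim the same $s$ witnesses $w\Vdash[C]^T_X\phi$: for any $u,v,\delta$ with $w\sim_X u$, $T\subseteq\mathcal{T}_u$, $s=_C\delta$, $(u,\delta,v)\in M$, and $T\subseteq\mathcal{T}_v$, symmetry and transitivity of $\sim_X$ give $u^*\sim_X u$, so the property of $s$ at $u^*$ (with $u$ playing the role of the indistinguishable state) yields $v\Vdash\phi$.

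The only mild subtlety is the vacuous subcase in the backward direction: one must exhibit some action profile as a formal witness when no $X$-indistinguishable, $T$-trustworthy state is available to extract a strategy from, and this relies on the standing nonemptiness of $\Delta$. Both directions otherwise reduce to a routine application of the equivalence properties of $\sim_X$.
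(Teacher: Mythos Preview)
Your proof is correct and follows essentially the same approach as the paper's: the forward direction transports the witnessing profile via transitivity of $\sim_X$, and the backward direction splits into the vacuous case (no $X$-indistinguishable state where $T$ is trustworthy) and the case where such a state $u^*$ exists, from which the witness is extracted and transported back via symmetry and transitivity. Your explicit remark that the vacuous subcase requires $\Delta\neq\varnothing$ to exhibit some $s\in\Delta^C$ is a point the paper leaves implicit.
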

\begin{proof}
$(\Rightarrow):$
By item~5 of Definition~\ref{sat}, the assumption $w\Vdash[C]^T_X\phi$ implies that there is an action profile $s\in \Delta^C$ of coalition $C$ such that 
\begin{eqnarray}
   &&\forall w',v\in W\,\forall\delta\in\Delta^\mathcal{A}(w\sim_X w',T\subseteq\mathcal{T}_{w'},s=_C\delta,\nonumber\\
   &&\hspace{10mm}(w',\delta,v)\in M,T\subseteq\mathcal{T}_{v} \Rightarrow v\Vdash\phi). \label{apr20-b}
\end{eqnarray}
Consider any state $u\in W$ such that $w\sim_X u$ and $T\subseteq\mathcal{T}_{u}$. By item~4 of Definition~\ref{sat}, it suffices to show that $u\Vdash[C]^T_X\phi$. Indeed, the assumption $w\sim_X u$ and statement~(\ref{apr20-b}) imply that
$$
\forall w',v\in W\,\forall\delta\in\Delta^\mathcal{A}(u\sim_X w',T\subseteq\mathcal{T}_{w'},s=_C\delta,(w',\delta,v)\in M,T\subseteq\mathcal{T}_{v} \Rightarrow v\Vdash\phi). 
$$
Therefore, $u\Vdash[C]^T_X\phi$ by item~5 of Definition~\ref{sat}.

\vspace{1mm}\noindent
$(\Leftarrow):$ We consider the following two cases separately:

\vspace{1mm}
\noindent{\em Case I:} There is at least one state $u\in W$ such that $T\subseteq \mathcal{T}_{u}$ and
\begin{equation}\label{24-aug-a}
    w\sim_X u.
\end{equation}
Hence, by item~4 of Definition~\ref{sat}, the assumption $w\Vdash\B^T_X[C]^T_X\phi$ implies that $u\Vdash [C]^T_X\phi$. Hence, by item~5 of Definition~\ref{sat},
there is an action profile $s\in \Delta^C$ of coalition $C$ such that 
$$
\forall w',v\in W\,\forall\delta\in\Delta^\mathcal{A}(u\sim_X w',T\subseteq\mathcal{T}_{w'},s=_C\delta,(w',\delta,v)\in M,T\subseteq\mathcal{T}_{v} \Rightarrow v\Vdash\phi). 
$$
Thus, by assumption~(\ref{24-aug-a}),
$$
   \forall w',v\in W\,\forall\delta\in\Delta^\mathcal{A}(w\sim_X w',T\subseteq\mathcal{T}_{w'},s=_C\delta,(w',\delta,v)\in M,T\subseteq\mathcal{T}_{v} \Rightarrow v\Vdash\phi). 
$$
Therefore, $w\Vdash[C]^T_X\phi$ by item~5 of Definition~\ref{sat}.

\vspace{1mm}
\noindent{\em Case II:} There are no states $u\in W$ such that $T\subseteq \mathcal{T}_{u}$ and $w\sim_X u$. Thus, $w\Vdash [C]^T_X\phi$ by item~5 of Definition~\ref{sat}.
\end{proof}

\begin{lemma}[Belief in Unavoidability]
If $w\Vdash\B^T_X[\varnothing]^T_Y\phi$, then $w\Vdash[\varnothing]^T_X\phi$. 
\end{lemma}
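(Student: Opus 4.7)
The plan is to unfold the two semantic clauses and exploit two facts: the empty coalition has a unique (empty) action profile, so the existential quantifier on $s$ in item~5 of Definition~\ref{sat} is vacuous, and each indistinguishability relation $\sim_Y$ is reflexive.

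First, I would assume $w\Vdash\B^T_X[\varnothing]^T_Y\phi$ and aim to verify item~5 of Definition~\ref{sat} for $[\varnothing]^T_X\phi$. Taking the empty action profile as a witness, it suffices to show that for all $u,v\in W$ and each complete action profile $\delta\in\Delta^\mathcal{A}$, if $w\sim_{X}u$, $T\subseteq \mathcal{T}_u$, $(u,\delta,v)\in M$, and $T\subseteq \mathcal{T}_v$, then $v\Vdash\phi$. So I would fix arbitrary such $u$, $v$, and $\delta$.

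Next, I would apply item~4 of Definition~\ref{sat} to the hypothesis $w\Vdash\B^T_X[\varnothing]^T_Y\phi$ using the fixed $u$ (which satisfies $w\sim_X u$ and $T\subseteq \mathcal{T}_u$) to conclude $u\Vdash[\varnothing]^T_Y\phi$. Then I would unfold this via item~5 of Definition~\ref{sat}: for all $u',v'\in W$ and each complete $\delta'\in\Delta^\mathcal{A}$, if $u\sim_Y u'$, $T\subseteq \mathcal{T}_{u'}$, $(u',\delta',v')\in M$, and $T\subseteq \mathcal{T}_{v'}$, then $v'\Vdash\phi$. Specialising to $u'=u$, $\delta'=\delta$, and $v'=v$, the hypotheses $T\subseteq \mathcal{T}_u$, $(u,\delta,v)\in M$, and $T\subseteq \mathcal{T}_v$ are already in hand, and $u\sim_Y u$ holds by reflexivity of $\sim_Y$. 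This yields $v\Vdash\phi$, completing the argument.

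There is no real obstacle here; the only mild subtlety is recognising that because $C=\varnothing$, the constraint $s=_C\delta$ is vacuous and no genuine choice of strategy needs to be made, which is precisely why the modality $[\varnothing]^T_X$ behaves as an unavoidability operator and why the axiom reduces to a composition-of-beliefs fact mediated by reflexivity of $\sim_Y$.
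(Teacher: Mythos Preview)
Your proposal is correct and follows essentially the same argument as the paper's proof: both use the uniqueness of the empty action profile to trivialise the existential in item~5, apply item~4 to obtain $u\Vdash[\varnothing]^T_Y\phi$, and then specialise the resulting universal to $u'=u$ via reflexivity of $\sim_Y$. If anything, your write-up is slightly cleaner, as you explicitly carry the condition $T\subseteq\mathcal{T}_{u'}$ through the unfolding (the paper's displayed formula omits it, though it is harmless since the specialisation is to $u$).
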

\begin{proof}
Note that there is only one action profile $s\in\Delta^\varnothing$ of the empty coalition. As a function, this action profile consists of the empty set of pairs. We denote this profile by $s_0$. 

Consider any state $u\in W$ such that $w\sim_X u$ and $T\subseteq \mathcal{T}_u$. By item~5 of Definition~\ref{sat}, it suffices to prove that
\begin{equation}\label{apr20-c}
   \forall v\in W\,\forall\delta\in\Delta^\mathcal{A}(s_0=_\varnothing\delta,(u,\delta,v)\in M,T\subseteq\mathcal{T}_v \Rightarrow v\Vdash\phi). 
\end{equation}
Indeed, by item~4 of Definition~\ref{sat}, the assumptions $w\sim_{X}u$  and $T\subseteq \mathcal{T}_u$ and the assumption $w\Vdash\B^T_X[\varnothing]^T_Y\phi$ of the lemma imply that $u\Vdash[\varnothing]^T_Y\phi$. Hence, by item~5 of Definition~\ref{sat}, there is an action profile $s_1\in\Delta^\varnothing$ of the empty coalition such that
\begin{equation*}
   \forall w',v\in W\,\forall\delta\in\Delta^\mathcal{A}(u\sim_Y w',s_1=_\varnothing\delta,(w',\delta,v)\in M,T\subseteq\mathcal{T}_v \Rightarrow v\Vdash\phi). 
\end{equation*}
Recall that the action profile of the empty coalition is unique. Thus, $s_0=s_1$. Hence, 
\begin{equation*}
   \forall w',v\in W\,\forall\delta\in\Delta^\mathcal{A}(u\sim_Y w',s_0=_\varnothing\delta,(w',\delta,v)\in M,T\subseteq\mathcal{T}_v \Rightarrow v\Vdash\phi). 
\end{equation*}
The last statement in the case $w'=u$ is equivalent to statement~(\ref{apr20-c}).  
\end{proof}

\begin{lemma}[Public Belief]
If $w\Vdash\B^T_\varnothing\phi$, then $w\Vdash[\varnothing]^T_\varnothing\phi$. 
\end{lemma}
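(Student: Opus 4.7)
The plan is to unfold both modalities according to Definition~\ref{sat} and exploit two observations: first, the relation $\sim_\varnothing$ is universal on $W$ (since $w\sim_X u$ means $w\sim_x u$ for every $x\in X$, and the condition on an empty set of variables holds vacuously); second, the action profile $s_0\in\Delta^\varnothing$ of the empty coalition is unique, and the condition $s_0=_\varnothing\delta$ is vacuously true for every complete action profile $\delta$.

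I would start by reformulating the hypothesis $w\Vdash\B^T_\varnothing\phi$ using item~4 of Definition~\ref{sat} together with the universality of $\sim_\varnothing$: it amounts to saying that $u\Vdash\phi$ for every state $u\in W$ with $T\subseteq\mathcal{T}_u$. Then, to establish $w\Vdash[\varnothing]^T_\varnothing\phi$ via item~5, it suffices to exhibit the (unique) action profile $s_0$ and verify that for all $u,v\in W$ and $\delta\in\Delta^\mathcal{A}$, the premises $w\sim_\varnothing u$, $T\subseteq\mathcal{T}_u$, $s_0=_\varnothing\delta$, $(u,\delta,v)\in M$, and $T\subseteq\mathcal{T}_v$ force $v\Vdash\phi$.

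All of the premises except $T\subseteq\mathcal{T}_v$ are either vacuous or immediately implied by the discussion above, so the argument concludes by a single appeal to the reformulated hypothesis applied to the state $v$: since $T\subseteq\mathcal{T}_v$, we have $v\Vdash\phi$, as required. There is essentially no obstacle here; the proof is a direct semantic computation that mirrors the one given for the Belief in Unavoidability axiom but is even simpler because $X=Y=\varnothing$ eliminates the need to chain through an intermediate state via $\sim_X$ or $\sim_Y$.
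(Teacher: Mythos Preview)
Your proposal is correct and follows essentially the same route as the paper's proof: both unfold item~5 of Definition~\ref{sat}, use that $\sim_\varnothing$ holds vacuously between any two states, and then apply the hypothesis $w\Vdash\B^T_\varnothing\phi$ via item~4 to the post-transition state $v$ using $T\subseteq\mathcal{T}_v$. Your write-up is slightly more explicit about the uniqueness of the empty action profile $s_0$ and the vacuousness of $s_0=_\varnothing\delta$, but the argument is the same.
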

\begin{proof}
To prove $w\Vdash[\varnothing]^T_X\phi$, by item~5 of Definition~\ref{sat}, it suffices to show that $v\Vdash\phi$ for each state $v\in W$ such that $T\subseteq\mathcal{T}_v$. Indeed, consider any such state $v\in W$. Note that, vacuously, $w\sim_\varnothing v$. Therefore, $v\Vdash\phi$ by item~4 of Definition~\ref{sat}, the assumption $T\subseteq\mathcal{T}_v$, and the assumption $w\Vdash\B^T_\varnothing\phi$ of the lemma.
\end{proof}

The soundness theorem below follows from the lemmas above. 

\vspace{1mm}
\noindent{\bf Theorem~\ref{strong soundness theorem}.} 
{\em
If $\vdash\phi$, then $w\Vdash\phi$ for each state $w$ of an arbitrary game.
}

\section{Auxiliary Lemmas}

\begin{lemma}[deduction]\label{deduction lemma}
If $F,\phi\vdash\psi$, then $F\vdash\phi\to\psi$.
\end{lemma}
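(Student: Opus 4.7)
The plan is to carry out the standard Hilbert-style deduction theorem argument by induction on the length of the derivation that witnesses $F, \phi \vdash \psi$. The crucial enabling fact is already built into how the binary relation $F \vdash \psi$ is defined in the paper: derivations from a set of hypotheses use \emph{only} Modus Ponens (together with theorems of the system), and not the Necessitation rule. This is exactly what makes the deduction theorem go through unconditionally, so no side conditions will be needed.

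First I would handle the base cases for a one-step derivation of $\psi$ from $F \cup \{\phi\}$. There are three possibilities: either $\psi$ is a theorem of the system (so $\vdash \psi$), or $\psi \in F$, or $\psi = \phi$. In the first two cases I use the propositional tautology $\psi \to (\phi \to \psi)$ together with Modus Ponens to conclude $F \vdash \phi \to \psi$. In the third case, $\phi \to \psi$ is literally the tautology $\phi \to \phi$, so $\vdash \phi \to \psi$ and hence $F \vdash \phi \to \psi$.

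Next I would handle the inductive step, which is the only nontrivial case: $\psi$ is obtained by Modus Ponens from two previously derived formulae $\chi$ and $\chi \to \psi$, each having a shorter derivation from $F \cup \{\phi\}$. By the inductive hypothesis, $F \vdash \phi \to \chi$ and $F \vdash \phi \to (\chi \to \psi)$. Invoking the propositional tautology $(\phi \to (\chi \to \psi)) \to ((\phi \to \chi) \to (\phi \to \psi))$ (which is a theorem of the system since propositional tautologies are axioms) and applying Modus Ponens twice, I obtain $F \vdash \phi \to \psi$, completing the induction.

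There is essentially no obstacle here; the only thing one needs to be careful about is the absence of a Necessitation case in the induction, which would otherwise require the extra premise $\phi$ to be a theorem. Because the paper explicitly defines $F \vdash \psi$ to exclude Necessitation, this case simply does not arise, and the argument above suffices.
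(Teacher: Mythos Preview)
Your proposal is correct and matches the paper's proof essentially line for line: the paper also inducts on the length of the derivation, splits into the same four cases (theorem, member of $F$, equal to $\phi$, Modus Ponens), and uses exactly the same propositional tautologies $\psi\to(\phi\to\psi)$ and $(\phi\to\psi_i)\to((\phi\to(\psi_i\to\psi_k))\to(\phi\to\psi_k))$. Your observation that the Necessitation rule is excluded from the definition of $F\vdash\psi$ is precisely the point that makes the argument go through.
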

\begin{proof}
Suppose that sequence $\psi_1,\dots,\psi_n$ is a proof from set $F\cup\{\phi\}$ and the theorems of our logical system that uses the Modus Ponens inference rule only. In other words, for each $k\le n$, either
\begin{enumerate}
    \item $\vdash\psi_k$, or
    \item $\psi_k\in F$, or
    \item $\psi_k$ is equal to $\phi$, or
    \item there are $i,j<k$ such that formula $\psi_j$ is equal to $\psi_i\to\psi_k$.
\end{enumerate}
It suffices to show that $F\vdash\phi\to\psi_k$ for each $k\le n$. We prove this by induction on $k$ by considering the four cases above separately.

\vspace{1mm}
\noindent{\bf Case 1}: $\vdash\psi_k$. Note that $\psi_k\to(\phi\to\psi_k)$ is a propositional tautology, and thus, is an axiom of our logical system. Hence, $\vdash\phi\to\psi_k$ by the Modus Ponens inference rule. Therefore, $F\vdash\phi\to\psi_k$. 

\vspace{1mm}
\noindent{\bf Case 2}: $\psi_k\in F$. Note again that $\psi_k\to(\phi\to\psi_k)$ is a propositional tautology, and thus, is an axiom of our logical system. Therefore, by the Modus Ponens inference rule, $F \vdash\phi\to\psi_k$.  

\vspace{1mm}
\noindent{\bf Case 3}: formula $\psi_k$ is equal to $\phi$. Thus, $\phi\to\psi_k$ is a propositional tautology. Therefore, $F\vdash\phi\to\psi_k$. 

\vspace{1mm}
\noindent{\bf Case 4}:  formula $\psi_j$ is equal to $\psi_i\to\psi_k$ for some $i,j<k$. Thus, by the induction hypothesis, $F\vdash\phi\to\psi_i$ and $F\vdash\phi\to(\psi_i\to\psi_k)$. Note that formula 
$
(\phi\to\psi_i)\to((\phi\to(\psi_i\to\psi_k))\to(\phi\to\psi_k))
$
is a propositional tautology. Therefore, $F\vdash \phi\to\psi_k$ by applying the Modus Ponens inference rule twice.
\end{proof}

\begin{lemma}[Lindenbaum]\label{Lindenbaum's lemma}
Any consistent set of formulae can be extended to a maximal consistent set of formulae.
\end{lemma}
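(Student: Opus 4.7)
The plan is to apply Zorn's Lemma to the collection $\mathcal{P}$ of all consistent sets of formulae containing $F$, ordered by inclusion. I would begin by noting that $\mathcal{P}$ is nonempty (it contains $F$), and then show that every chain $\{G_i\}_{i \in I}$ in $\mathcal{P}$ has an upper bound, namely its union $G^{*} = \bigcup_i G_i$. The only nontrivial point here is that $G^{*}$ is consistent: if it were not, there would be some formula $\phi$ with $G^{*} \vdash \phi$ and $G^{*} \vdash \neg\phi$, but because the paper defines $F \vdash \phi$ using Modus Ponens alone (starting from $F$ together with system theorems), each such derivation invokes only finitely many premises from $G^{*}$, which all lie in a single $G_{i_0}$ of the chain, contradicting the consistency of $G_{i_0}$.

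Zorn's Lemma then provides a maximal element $F^{*} \in \mathcal{P}$. To confirm that $F^{*}$ is maximal consistent in the usual logical sense, i.e., that $\phi \in F^{*}$ or $\neg\phi \in F^{*}$ for every $\phi \in \Phi$, I would argue by contradiction: if some $\phi$ satisfies neither, then by the maximality of $F^{*}$ in $\mathcal{P}$ both $F^{*} \cup \{\phi\}$ and $F^{*} \cup \{\neg\phi\}$ are inconsistent. Lemma~\ref{deduction lemma} then yields $F^{*} \vdash \neg\phi$ and $F^{*} \vdash \neg\neg\phi$, making $F^{*}$ itself inconsistent and contradicting $F^{*} \in \mathcal{P}$.

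The argument is entirely standard, and I do not expect any step to be a genuine obstacle. The one point worth flagging is the finitary character of $\vdash$, which is essential to the chain argument and which holds here precisely because Necessitation is permitted only for the theorem-hood relation $\vdash \phi$ and not within derivations from a set of hypotheses.
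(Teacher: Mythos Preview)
Your proposal is correct and follows a standard Zorn's Lemma route to Lindenbaum's lemma; the paper itself does not give an explicit argument but simply remarks that the standard proof (citing \cite[Proposition~2.14]{m09}) applies. Your observation that $\vdash$ from hypotheses is finitary because only Modus Ponens is permitted in such derivations is exactly the point that makes the chain argument go through, so there is nothing missing.
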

\begin{proof}
The standard proof of Lindenbaum's lemma~\cite[Proposition 2.14]{m09} applies here.
\end{proof}

\begin{lemma}\label{S Necessitation rule is deriavable}
Inference rule $\dfrac{\phi}{[\varnothing]^T_X\phi}$ is derivable.
\end{lemma}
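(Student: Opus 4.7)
The plan is to chain together Necessitation, the Public Belief axiom, and Monotonicity to move from $\phi$ to $[\varnothing]^T_X\phi$, routing through the belief modality.

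First, I would start from the premise $\vdash \phi$ and apply the Necessitation inference rule to obtain $\vdash \B^\varnothing_\varnothing \phi$. Next, I would invoke the belief part of the Monotonicity axiom with $\varnothing \subseteq T$ (on the superscript) and $\varnothing \subseteq \varnothing$ (on the subscript) to get $\vdash \B^\varnothing_\varnothing \phi \to \B^T_\varnothing \phi$, and then apply Modus Ponens to conclude $\vdash \B^T_\varnothing \phi$.

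The key step is then to use the Public Belief axiom, which gives $\vdash \B^T_\varnothing \phi \to [\varnothing]^T_\varnothing \phi$; Modus Ponens yields $\vdash [\varnothing]^T_\varnothing \phi$. Finally, I would apply the strategy part of the Monotonicity axiom with $T \subseteq T$, $C = C' = \varnothing$, and $\varnothing \subseteq X$ to derive $\vdash [\varnothing]^T_\varnothing \phi \to [\varnothing]^T_X \phi$, and one more Modus Ponens closes the argument with $\vdash [\varnothing]^T_X \phi$.

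I do not anticipate any real obstacle here: the lemma is essentially a bookkeeping combination of three axioms (Monotonicity applied twice, to widen $\varnothing$ to $T$ in the superscript of $\B$ and to widen $\varnothing$ to $X$ in the subscript of $[\varnothing]$) together with the Public Belief axiom, which is precisely designed to bridge $\B^T_\varnothing$ and $[\varnothing]^T_\varnothing$. The only thing to be careful about is that Public Belief is only stated with $\varnothing$ in the subscript, which is why the second use of Monotonicity is necessary after applying Public Belief rather than before.
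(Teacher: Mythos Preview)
Your proof is correct and follows essentially the same approach as the paper: Necessitation, then the Public Belief axiom, then Monotonicity. The only cosmetic difference is that the paper applies Public Belief in the instance $\B^\varnothing_\varnothing\phi\to[\varnothing]^\varnothing_\varnothing\phi$ and then widens both the superscript to $T$ and the subscript to $X$ with a single use of Monotonicity on the strategy modality, whereas you first widen the belief superscript to $T$, apply Public Belief with that $T$, and then widen the strategy subscript to $X$; both routings are valid.
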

\begin{proof}
Suppose $\vdash\phi$. Thus, $\vdash\B^\varnothing_\varnothing\phi$ by the Necessitation inference rule. Hence, $\vdash[\varnothing]^\varnothing_\varnothing\phi$ by the Public Belief axiom and the Modus Ponens inference rule. 
Therefore, $\vdash [\varnothing]^T_X\phi$ by the Monotonicity axiom and the Modus Ponens inference rule.
\end{proof}

\begin{lemma}\label{strategic introspection plus}
    $\vdash [C]^T_X\phi\to \B^\varnothing_X[C]^T_X\phi$.
\end{lemma}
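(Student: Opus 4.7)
The plan is to chain the Strategic Introspection axiom with Lemma~\ref{positive introspection lemma} applied to the formula $[C]^T_X\phi$. The main observation is that Strategic Introspection gives us the equivalence $[C]^T_X\phi\leftrightarrow\B^T_X[C]^T_X\phi$, so proving the target reduces to strengthening the superscript of the outer belief modality from $T$ to $\varnothing$, which is precisely what Lemma~\ref{positive introspection lemma} allows us to do.

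First, by the $(\Leftarrow)$ direction of the Strategic Introspection axiom we have $\vdash \B^T_X[C]^T_X\phi\to[C]^T_X\phi$. Applying the Necessitation rule yields $\vdash\B^\varnothing_\varnothing(\B^T_X[C]^T_X\phi\to[C]^T_X\phi)$, from which the Monotonicity axiom gives $\vdash\B^\varnothing_X(\B^T_X[C]^T_X\phi\to[C]^T_X\phi)$, and then the Distributivity axiom with Modus Ponens produces
$$
\vdash\B^\varnothing_X\B^T_X[C]^T_X\phi\to\B^\varnothing_X[C]^T_X\phi.
$$

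Next, Lemma~\ref{positive introspection lemma}, applied with $\phi$ replaced by $[C]^T_X\phi$, gives
$$
\vdash\B^T_X[C]^T_X\phi\to\B^\varnothing_X\B^T_X[C]^T_X\phi.
$$
Finally, the $(\Rightarrow)$ direction of the Strategic Introspection axiom gives $\vdash[C]^T_X\phi\to\B^T_X[C]^T_X\phi$. Chaining these three implications by propositional reasoning produces $\vdash[C]^T_X\phi\to\B^\varnothing_X[C]^T_X\phi$, as required.

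There is no real obstacle here; the argument is a short derivation that mirrors the proof of Lemma~\ref{positive introspection lemma} with the strategic modality substituted inside. The only thing to be careful about is applying Necessitation and Monotonicity to an already-proved implication (rather than to a hypothesis), so that we genuinely obtain a theorem and not merely a syntactic consequence under assumptions.
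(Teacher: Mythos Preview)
Your proof is correct and follows essentially the same route as the paper's own argument: both derive $\vdash\B^\varnothing_X\B^T_X[C]^T_X\phi\to\B^\varnothing_X[C]^T_X\phi$ via Necessitation, Monotonicity, and Distributivity applied to one direction of Strategic Introspection, then chain this with Lemma~\ref{positive introspection lemma} and the other direction of Strategic Introspection. The only difference is the order in which the three implications are presented.
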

\begin{proof}
The Strategic Introspection axiom, by the laws of propositional reasoning, implies 
$\vdash \B^T_X[C]^T_X\phi\to [C]^T_X\phi$. Thus, 
$\vdash \B^\varnothing_\varnothing(\B^T_X[C]^T_X\phi\to [C]^T_X\phi)$
by the Necessitation inference rule.
Hence,
$\vdash \B^\varnothing_X(\B^T_X[C]^T_X\phi\to [C]^T_X\phi)$
by the Monotonicity axiom and the Modus Ponens inference rule.
Then, by the Distributivity axiom and the Modus Ponens inference rule,
\begin{equation}\label{25-aug-a}
   \vdash \B^\varnothing_X\B^T_X[C]^T_X\phi\to \B^\varnothing_X[C]^T_X\phi. 
\end{equation}
At the same time, by Lemma~\ref{positive introspection lemma},
\begin{equation}\label{25-aug-b}
    \vdash \B^T_X[C]^T_X\phi\to \B^\varnothing_X\B^T_X[C]^T_X\phi.
\end{equation}
Also, by the Strategic Introspection axiom and propositional reasoning,
\begin{equation}\label{25-aug-c}
    \vdash [C]^T_X\phi\to \B^T_X[C]^T_X\phi.
\end{equation}
Finally, $\vdash [C]^T_X\phi\to \B^\varnothing_X[C]^T_X\phi$,
by the laws of propositional reasoning from statements~(\ref{25-aug-a}), (\ref{25-aug-b}), and (\ref{25-aug-c}).
\end{proof}

\noindent{\bf Lemma \ref{super distributivity}.}\,
{\em
If $\phi_1,\dots,\phi_n\vdash\psi$, then $\B^T_X\phi_1,\dots,\B^T_X\phi_n\vdash\B^T_X\psi$.
}

\begin{proof}
By Lemma~\ref{deduction lemma} applied $n$ times, the assumption $\phi_1,\dots,\phi_n\vdash\psi$ implies that
$$
\vdash\phi_1\to(\phi_2\to\dots(\phi_n\to\psi)\dots).
$$
Thus, by the Necessitation inference rule,
$$
\vdash\B^\varnothing_\varnothing(\phi_1\to(\phi_2\to\dots(\phi_n\to\psi)\dots)).
$$
By the Monotonicity axiom and the Modus Ponens inference rule,
$$
\vdash\B^T_X(\phi_1\to(\phi_2\to\dots(\phi_n\to\psi)\dots)).
$$
Hence, by the Distributivity axiom and the Modus Ponens inference rule,
$$
\vdash\B^T_X\phi_1\to\B^T_X(\phi_2\to\dots(\phi_n\to\psi)\dots).
$$
Then, again by the Modus Ponens inference rule,
$$
\B^T_X\phi_1\vdash\B^T_X(\phi_2\to\dots(\phi_n\to\psi)\dots).
$$
Thus, $\B^T_X\phi_1,\dots,\B^T_X\phi_n\vdash\B^T_X\psi$ by applying the previous steps $(n-1)$ more times.
\end{proof}

\begin{lemma}\label{super distributivity 2}
If $\phi_1,\dots,\phi_n\vdash\psi$, then $[\varnothing]^T_X\phi_1,\dots,[\varnothing]^T_X\phi_n\vdash[\varnothing]^T_X\psi$.
\end{lemma}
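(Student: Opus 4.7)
The proof plan mirrors the structure of the proof of Lemma \ref{super distributivity}, but substitutes the doxastic strategy modality $[\varnothing]^T_X$ for the belief modality $\B^T_X$. The key observation is that all ingredients needed for that earlier argument have analogues available: a derived Necessitation rule for $[\varnothing]^T_X$ (Lemma \ref{S Necessitation rule is deriavable}) and a Distributivity-style principle for $[\varnothing]^T_X$ obtained from the Cooperation axiom by setting $C = D = \varnothing$ (which are trivially disjoint), yielding $\vdash [\varnothing]^T_X(\phi \to \psi) \to ([\varnothing]^T_X \phi \to [\varnothing]^T_X \psi)$.

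First, I would apply Lemma \ref{deduction lemma} (deduction) $n$ times to the hypothesis $\phi_1,\dots,\phi_n \vdash \psi$ to obtain
$$
\vdash \phi_1 \to (\phi_2 \to \dots (\phi_n \to \psi)\dots).
$$
Next, applying Lemma \ref{S Necessitation rule is deriavable} gives
$$
\vdash [\varnothing]^T_X\bigl(\phi_1 \to (\phi_2 \to \dots (\phi_n \to \psi)\dots)\bigr).
$$
Then I would peel off the implications one at a time by repeatedly invoking the instance of the Cooperation axiom $[\varnothing]^T_X(\alpha \to \beta) \to ([\varnothing]^T_X\alpha \to [\varnothing]^T_X\beta)$ together with Modus Ponens. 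After the first application, using $[\varnothing]^T_X\phi_1$ as a hypothesis, we get $[\varnothing]^T_X\phi_1 \vdash [\varnothing]^T_X(\phi_2 \to \dots (\phi_n \to \psi)\dots)$. Iterating this step $n-1$ more times with the remaining hypotheses $[\varnothing]^T_X\phi_2,\dots,[\varnothing]^T_X\phi_n$ produces the desired conclusion $[\varnothing]^T_X\phi_1,\dots,[\varnothing]^T_X\phi_n \vdash [\varnothing]^T_X\psi$.

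There is no real obstacle here: the only thing one has to be careful about is that the Cooperation axiom carries the side condition $C \cap D = \varnothing$, but this is vacuously satisfied for the empty coalition on both sides. The proof is therefore a direct schematic translation of the proof of Lemma \ref{super distributivity}, with $\B^T_X$ replaced by $[\varnothing]^T_X$, the Necessitation inference rule replaced by Lemma \ref{S Necessitation rule is deriavable}, and the Distributivity axiom replaced by the $C = D = \varnothing$ instance of the Cooperation axiom.
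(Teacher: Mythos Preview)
Your proposal is correct and follows essentially the same approach as the paper: deduction to obtain the nested implication, the derived necessitation rule (Lemma~\ref{S Necessitation rule is deriavable}) to prefix $[\varnothing]^T_X$, and then repeated use of the Cooperation axiom with $C=D=\varnothing$ together with Modus Ponens to peel off the antecedents one by one.
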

\begin{proof}
By Lemma~\ref{deduction lemma} applied $n$ times, the assumption $\phi_1,\dots,\phi_n\vdash\psi$ implies that
$$
\vdash\phi_1\to(\phi_2\to\dots(\phi_n\to\psi)\dots).
$$
Thus, by Lemma~\ref{S Necessitation rule is deriavable},
$$
\vdash[\varnothing]^T_X(\phi_1\to(\phi_2\to\dots(\phi_n\to\psi)\dots)).
$$
Hence, by the Cooperation axiom and the Modus Ponens inference rule,
$$
\vdash[\varnothing]^T_X\phi_1\to[\varnothing]^T_X(\phi_2\to\dots(\phi_n\to\psi)\dots).
$$
Then, by the Modus Ponens inference rule,
$$
[\varnothing]^T_X\phi_1\vdash[\varnothing]^T_X(\phi_2\to\dots(\phi_n\to\psi)\dots).
$$
Thus, again by the Cooperation axiom and the Modus Ponens inference rule,
$$
[\varnothing]^T_X\phi_1\vdash[\varnothing]^T_X\phi_2\to[\varnothing]^T_X(\phi_3\to\dots(\phi_n\to\psi)\dots).
$$
Hence, by the Modus Ponens inference rule,
$$
[\varnothing]^T_X\phi_1,[\varnothing]^T_X\phi_2
\vdash
[\varnothing]^T_X(\phi_3\to\dots(\phi_n\to\psi)\dots).
$$
Therefore, $[\varnothing]^T_X\phi_1,\dots,[\varnothing]^T_X\phi_n\vdash[\varnothing]^T_X\psi$ by applying the previous steps $(n-2)$ more times.
\end{proof}

\section{Proof of Lemma~\ref{H child exists}}

The rest of this appendix contains a highly non-trivial proof of Lemma~\ref{H child exists} from the main part of the paper. For the convenience of the reader, we have divided this proof into three subsections. 

\subsection{$T$-Harmony}\label{Harmony section}

We start the proof with an auxiliary notion of {\em harmony}, which has been used in~\cite{nt18ai,jn22ai}. Here, we modify this notion into $T$-harmony.

\begin{definition}\label{harmony}
For any dataset $T\subseteq V$, a pair of sets of formulae $(F,G)$ is in $T$-{\bf\em harmony} if $F\nvdash [\varnothing]^T_V\neg\bigwedge G'$ for each finite set $G'\subseteq G$.
\end{definition}

\begin{lemma}\label{harmony is consistent lemma}
If pair of sets of formulae $(F,G)$ is in $T$-harmony, then sets $F$ and $G$ are consistent.
\end{lemma}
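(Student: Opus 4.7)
The plan is to prove the two consistency claims separately, each by contrapositive: assuming inconsistency of $F$ (respectively $G$), I would derive $F\vdash [\varnothing]^T_V \neg\bigwedge G'$ for some finite $G'\subseteq G$, directly contradicting Definition~\ref{harmony}.

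First I would handle the consistency of $G$, which is the case that actually uses the machinery of the system. Assume $G$ is inconsistent. Then some finite $G'\subseteq G$ already proves a contradiction, i.e.\ $G'\vdash \bot$. Applying Lemma~\ref{deduction lemma} finitely many times collapses the hypotheses into a single conjunction, giving $\vdash \neg\bigwedge G'$. The key technical step is then Lemma~\ref{S Necessitation rule is deriavable}, which lets me promote this theorem under the modality $[\varnothing]^T_V$, yielding $\vdash [\varnothing]^T_V\neg\bigwedge G'$ and hence $F\vdash [\varnothing]^T_V\neg\bigwedge G'$. This contradicts $T$-harmony, completing this half.

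For the consistency of $F$, I would take $G' = \varnothing$ in Definition~\ref{harmony}. Under the standard convention that the empty conjunction is $\top$, $T$-harmony gives $F\nvdash [\varnothing]^T_V\neg\top$. If $F$ were inconsistent, $F$ would prove every formula in $\Phi$, including $[\varnothing]^T_V\neg\top$, contradicting harmony. (Alternatively, if the empty-conjunction convention is not wanted, one can pick any $\phi\in G$ and use $G'=\{\phi\}$ whenever $G\neq\varnothing$; the case $G=\varnothing$ still reduces to the empty-$G'$ argument just given.)

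I do not foresee a real obstacle here: the lemma is essentially a sanity check that the definition of $T$-harmony is strong enough to rule out trivial inconsistencies, and the only non-propositional input is the derived ``necessitation'' rule for $[\varnothing]^T_V$ supplied by Lemma~\ref{S Necessitation rule is deriavable}.
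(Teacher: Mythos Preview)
Your proposal is correct and matches the paper's own proof essentially step for step: the paper also handles $F$ by taking $G'=\varnothing$ and noting that an inconsistent $F$ proves $[\varnothing]^T_V\neg\bigwedge\varnothing$, and handles $G$ by extracting a finite inconsistent $G'$, applying Lemma~\ref{deduction lemma} to get $\vdash\neg\bigwedge G'$, and then Lemma~\ref{S Necessitation rule is deriavable} to obtain $\vdash[\varnothing]^T_V\neg\bigwedge G'$. The only cosmetic difference is the order of the two cases.
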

\begin{proof}
If set $F$ is inconsistent, then $F\vdash\phi$ for any formula $\phi\in\Phi$. In particular, $F\vdash [\varnothing]^T_V\neg\bigwedge \varnothing$. Therefore, the pair $(F,G)$ is not in harmony by Definition~\ref{harmony}.

Next, let set $G$ be inconsistent. Thus, by Lemma~\ref{deduction lemma} and propositional reasoning, $\vdash\neg\bigwedge G'$ for some finite set $G'\subseteq G$. Hence,  $\vdash[\varnothing]^T_V\neg\bigwedge G'$ by Lemma~\ref{S Necessitation rule is deriavable}. Thus, $F\vdash[\varnothing]^T_V\neg\bigwedge G'$. Therefore, the pair $(F,G)$ is not in harmony by Definition~\ref{harmony}. 
\end{proof}

Next, we show that certain initial sets are in $T$-harmony.

\begin{lemma}\label{harmony base lemma}
For any consistent set $E\subseteq \Phi$, any formula $\neg[C]^T_X\phi\in E$, any family $\{D_i\}_{i\in I}$ of pairwise disjoint subsets of $C$, and any set $\{[D_i]^T_X\psi_i\}_{i\in I}$ of formulae from set $E$, if 
\begin{eqnarray}
F &=& \{\chi\;|\; \B^\varnothing_X\chi\in E\}\cup\{\B^T_Y\tau\to\tau\;|\;\tau\in\Phi,Y\subseteq V\},\label{april10-F}\\
G &=& \{\neg\phi\}\cup\{\psi_i\}_{i\in I}\cup\{\sigma\;|\;\B^\varnothing_\varnothing\sigma\in E\}\nonumber\\
&&\cup\{\B^T_Z\eta\to\eta\;|\;\eta\in\Phi,Z\subseteq V\},\label{apr9-G}
\end{eqnarray}
then the pair $(F,G)$ is in $T$-harmony.
\end{lemma}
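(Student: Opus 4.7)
The plan is to assume $(F,G)$ is not in $T$-harmony and deduce $E\vdash [C]^T_X\phi$, which contradicts the consistency of $E$ together with $\neg[C]^T_X\phi\in E$. So suppose some finite $G'\subseteq G$ satisfies $F\vdash [\varnothing]^T_V\neg\bigwedge G'$.

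First I would unpack $F$: only finitely many $\chi_1,\ldots,\chi_p$ with $\B^\varnothing_X\chi_j\in E$ and finitely many Trust-schema instances $\B^T_{Y_i}\tau_i\to\tau_i$ are used in this derivation. Applying Lemma~\ref{super distributivity} to prepend $\B^T_X$ everywhere, the Trust-schema hypotheses become instances of the Trust axiom itself and can be dropped, while the Monotonicity axiom converts each $\B^\varnothing_X\chi_j\in E$ into $E\vdash \B^T_X\chi_j$. The net result is $E\vdash \B^T_X[\varnothing]^T_V\neg\bigwedge G'$, and the Belief in Unavoidability axiom (with $Y=V$) collapses this to $E\vdash [\varnothing]^T_X\neg\bigwedge G'$.

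Next I would split $G'$ along the four cases of~(\ref{apr9-G}): write $\Psi$, $\Sigma$, $\Xi$ for the conjunctions of the $\psi_i$-part, the $\sigma$-part, and the trust-schema part of $G'$, and set $\phi_*=\phi$ if $\neg\phi\in G'$ and $\phi_*=\bot$ otherwise. Propositionally, $\neg\bigwedge G'$ is equivalent to $(\Sigma\wedge\Xi)\to(\Psi\to\phi_*)$, so Lemma~\ref{super distributivity 2} gives $E\vdash [\varnothing]^T_X((\Sigma\wedge\Xi)\to(\Psi\to\phi_*))$. For each conjunct $\sigma$ of $\Sigma$, the hypothesis $\B^\varnothing_\varnothing\sigma\in E$ plus Monotonicity, the Public Belief axiom, and Monotonicity again yield $E\vdash [\varnothing]^T_X\sigma$; for each conjunct $\B^T_Z\eta\to\eta$ of $\Xi$, the Trust axiom gives $\vdash\B^T_\varnothing(\B^T_Z\eta\to\eta)$, and Public Belief plus Monotonicity then yield $E\vdash[\varnothing]^T_X(\B^T_Z\eta\to\eta)$. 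Combining all of these via Lemma~\ref{super distributivity 2} and two applications of modus ponens under $[\varnothing]^T_X$ peels off $\Sigma\wedge\Xi$, leaving $E\vdash[\varnothing]^T_X(\Psi\to\phi_*)$.

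To conclude, I iterate the Cooperation axiom with the hypotheses $[D_{i_j}]^T_X\psi_{i_j}\in E$, using pairwise disjointness of the $D_i$'s to keep the accumulated coalition disjoint from the next $D_{i_j}$. This yields $E\vdash[D_{i_1}\cup\cdots\cup D_{i_k}]^T_X\phi_*$, and Monotonicity (since the union lies in $C$) gives $E\vdash[C]^T_X\phi_*$. If $\phi_*=\phi$ we are done; otherwise $\phi_*=\bot$, and since $\bot\to\phi$ is a tautology, Lemma~\ref{S Necessitation rule is deriavable} together with the Cooperation axiom still produces $E\vdash[C]^T_X\phi$, contradicting $\neg[C]^T_X\phi\in E$. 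The hard part is the opening transfer step: the set $F$ is engineered precisely so that applying $\B^T_X$ uniformly turns each member into either a Trust-axiom instance or an $E$-consequence, and the choice of the maximal dataset $V$ in Definition~\ref{harmony} is exactly what the Belief in Unavoidability axiom needs in order to collapse $\B^T_X[\varnothing]^T_V$ into a single $[\varnothing]^T_X$ on which Cooperation can then operate; the rest of the argument is routine propositional and Cooperation bookkeeping.
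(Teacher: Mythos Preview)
Your proof is correct and follows essentially the same strategy as the paper's: assume failure of $T$-harmony, push the derivation through the two super-distributivity lemmas, discharge the Trust-schema hypotheses via the Trust axiom and the $\sigma$'s via Public Belief, apply Belief in Unavoidability, and then iterate Cooperation to reach $E\vdash[C]^T_X\phi$.

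The only noteworthy difference is the order in which you unpack $F$ and $G'$. The paper processes $G'$ first under $[\varnothing]^T_V$, then unpacks $F$ under $\B^T_X$, and only afterwards applies Belief in Unavoidability; as a consequence the $\B^\varnothing_\varnothing\sigma_i$ hypotheses must pass through the $\B^T_X$-step, which forces an appeal to Lemma~\ref{positive introspection lemma}. By unpacking $F$ first and invoking Belief in Unavoidability early, you handle the $\sigma$'s directly under $[\varnothing]^T_X$ via Public Belief and Monotonicity, so positive introspection is never needed. Conversely, the paper avoids your $\phi_*$ case split by simply carrying $\neg\phi$ as an extra hypothesis regardless of whether it lies in $G'$; you could do the same and drop the $\phi_*=\bot$ branch.
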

\begin{proof}
Suppose that the pair $(F,G)$ is not in $T$-harmony. Thus, by Definition~\ref{harmony}, there is a finite set $G'\subseteq G$ such that
\begin{equation}\label{apr10-FG'}
    F\vdash [\varnothing]^T_V\neg\bigwedge G'.
\end{equation}
The assumption that set $G'$ is finite and equation~(\ref{apr9-G}) imply that there are formulae 
\begin{equation}\label{apr10-sigma}
\B^\varnothing_\varnothing\sigma_1,\dots\B^\varnothing_\varnothing\sigma_m\in E,
\end{equation}
formulae
$$
\B^T_{Z_1}\eta_1,\dots,\B^T_{Z_k}\eta_k\in \Phi,
$$
and indices $i_1,\dots,i_n\in I$ such that
$$
\{\sigma_i\}_{i\le m},\{\B^T_{Z_j}\eta_j\to\eta_j\}_{j\le k},\{\psi_{i_s}\}_{s\le n},\neg\phi\vdash\bigwedge G'. 
$$
Thus, by Lemma~\ref{deduction lemma} and the laws of propositional reasoning,
$$
\neg\bigwedge G',\{\sigma_i\}_{i\le m},\{\B^T_{Z_j}\eta_j\to\eta_j\}_{j\le k},\{\psi_{i_s}\}_{s\le n}\vdash\phi. 
$$
Again by Lemma~\ref{deduction lemma} and the laws of propositional reasoning,

$$
\neg\bigwedge G',\{\sigma_i\}_{i\le m},\{\B^T_{Z_j}\eta_j\to\eta_j\}_{j\le k}\vdash \psi_{i_1}\to (\psi_{i_2}\to \dots (\psi_{i_n} \to \phi)\dots). 
$$
Thus, by Lemma~\ref{super distributivity 2},
\begin{eqnarray*}
    &&[\varnothing]^T_V\neg\bigwedge G',\{[\varnothing]^T_V\sigma_i\}_{i\le m},\{[\varnothing]^T_V(\B^T_{Z_j}\eta_j\to\eta_j)\}_{j\le k}\\
    &&\hspace{30mm}\vdash [\varnothing]^T_V(\psi_{i_1}\to (\psi_{i_2}\to \dots (\psi_{i_n} \to \phi)\dots)). 
\end{eqnarray*}
By the Monotonicity axiom and the Modus Ponens inference rule applied $m+k$ times,
\begin{eqnarray*}
    &&[\varnothing]^T_V\neg\bigwedge G',\{[\varnothing]^\varnothing_\varnothing\sigma_i\}_{i\le m},\{[\varnothing]^T_\varnothing(\B^T_{Z_j}\eta_j\to\eta_j)\}_{j\le k}\\
    &&\hspace{30mm}\vdash [\varnothing]^T_V(\psi_{i_1}\to (\psi_{i_2}\to \dots (\psi_{i_n} \to \phi)\dots)). 
\end{eqnarray*}
By the Public Belief axiom and the Modus Ponens inference rule applied $m+k$ times,
\begin{eqnarray*}
    &&[\varnothing]^T_V\neg\bigwedge G',\{\B^\varnothing_\varnothing\sigma_i\}_{i\le m},\{\B^T_\varnothing(\B^T_{Z_j}\eta_j\to\eta_j)\}_{j\le k}\\
    &&\hspace{30mm}\vdash [\varnothing]^T_V(\psi_{i_1}\to (\psi_{i_2}\to \dots (\psi_{i_n} \to \phi)\dots)). 
\end{eqnarray*}
By the Trust axiom,
\begin{eqnarray*}
    [\varnothing]^T_V\neg\bigwedge G',\{\B^\varnothing_\varnothing\sigma_i\}_{i\le m}
    \vdash [\varnothing]^T_V(\psi_{i_1}\to (\psi_{i_2}\to \dots (\psi_{i_n} \to \phi)\dots)). 
\end{eqnarray*}
Hence, by statement~(\ref{apr10-FG'}),
\begin{eqnarray*}
    F,\{\B^\varnothing_\varnothing\sigma_i\}_{i\le m}
    \vdash [\varnothing]^T_V(\psi_{i_1}\to (\psi_{i_2}\to \dots (\psi_{i_n} \to \phi)\dots)). 
\end{eqnarray*}
Thus, by equation~(\ref{april10-F}), there are formulae
\begin{equation}\label{apr10-chi}
\B^\varnothing_X\chi_1,\dots,\B^\varnothing_X\chi_p\in E
\end{equation}
and formulae
$$
\B^T_{Y_1}\tau_1,\dots,\B^T_{Y_q}\tau_q
$$
such that
\begin{eqnarray*}
&&\{\chi_t\}_{t\le p},\{\B^T_{Y_r}\tau_r\to\tau_r\}_{r\le q},\,\{\B^\varnothing_\varnothing\sigma_i\}_{i\le m}\\
    &&\hspace{15mm}\vdash [\varnothing]^T_V(\psi_{i_1}\to (\psi_{i_2}\to \dots (\psi_{i_n} \to \phi)\dots)).
\end{eqnarray*}
Then, by Lemma~\ref{super distributivity},
\begin{eqnarray*}
&&\{\B^T_X\chi_t\}_{t\le p},\{\B^T_X(\B^T_{Y_r}\tau_r\to\tau_r)\}_{r\le q},\,\{\B^T_X\B^\varnothing_\varnothing\sigma_i\}_{i\le m}\\
    &&\hspace{15mm}\vdash \B^T_X[\varnothing]^T_V(\psi_{i_1}\to (\psi_{i_2}\to \dots (\psi_{i_n} \to \phi)\dots)).
\end{eqnarray*}
Thus, by the Trust axiom,
\begin{eqnarray*}
\{\B^T_X\chi_t\}_{t\le p},\,\{\B^T_X\B^\varnothing_\varnothing\sigma_i\}_{i\le m}\vdash \B^T_X[\varnothing]^T_V(\psi_{i_1}\to (\psi_{i_2}\to \dots (\psi_{i_n} \to \phi)\dots)).
\end{eqnarray*}
Hence, by the Monotonicity axiom and the Modus Ponens inference rule applied $p+m$ times,
\begin{eqnarray*}
\{\B^\varnothing_X\chi_t\}_{t\le p},\,\{\B^\varnothing_\varnothing\B^\varnothing_\varnothing\sigma_i\}_{i\le m}\vdash \B^T_X[\varnothing]^T_V(\psi_{i_1}\to (\psi_{i_2}\to \dots (\psi_{i_n} \to \phi)\dots)).
\end{eqnarray*}
Then, by Lemma~\ref{positive introspection lemma} and the Modus Ponens inference rule applied $m$ times,
\begin{eqnarray*}
\{\B^\varnothing_X\chi_t\}_{t\le p},\,\{\B^\varnothing_\varnothing\sigma_i\}_{i\le m}\vdash \B^T_X[\varnothing]^T_V(\psi_{i_1}\to (\psi_{i_2}\to \dots (\psi_{i_n} \to \phi)\dots)).
\end{eqnarray*}
Thus, by statements~(\ref{apr10-sigma}) and (\ref{apr10-chi}),
\begin{eqnarray*}
E\vdash \B^T_X[\varnothing]^T_V(\psi_{i_1}\to (\psi_{i_2}\to \dots (\psi_{i_n} \to \phi)\dots)).
\end{eqnarray*}
Hence, by the Belief in Unavoidability axiom and the Modus Ponens inference rule,
\begin{eqnarray*}
E\vdash [\varnothing]^T_X(\psi_{i_1}\to (\psi_{i_2}\to \dots (\psi_{i_n} \to \phi)\dots)).
\end{eqnarray*}
Then, by the Cooperation axiom and the Modus Ponens inference rule,
\begin{eqnarray*}
E\vdash [D_{i_1}]^T_X\psi_{i_1}\to [D_{i_1}]^T_X(\psi_{i_2}\to \dots (\psi_{i_n} \to \phi)\dots).
\end{eqnarray*}
Thus, by the assumption $[D_{i_1}]^T_X\psi_{i_1}\in E$ of the lemma and the Modus Ponens inference rule,
\begin{eqnarray*}
E\vdash [D_{i_1}]^T_X(\psi_{i_2}\to \dots (\psi_{i_n} \to \phi)\dots).
\end{eqnarray*}
Hence, by the Cooperation axiom, the Modus Ponens inference rule,
and the assumption of the lemma that
family $\{D_i\}_{i\in I}$ consists of pairwise disjoint sets,
\begin{eqnarray*}
E\vdash [D_{i_2}]^T_X\psi_{i_2}\to [D_{i_1}\cup D_{i_2}]^T_X(\psi_{i_3}\to\dots (\psi_{i_n} \to \phi)\dots).
\end{eqnarray*}
Then, by the assumption $[D_{i_2}]^T_X\psi_{i_2}\in E$ of the lemma and the Modus Ponens inference rule,
\begin{eqnarray*}
E\vdash [D_{i_1}\cup D_{i_2}]^T_X(\psi_{i_3}\to\dots (\psi_{i_n} \to \phi)\dots)).
\end{eqnarray*}
Thus, by repeating the last step $n-2$ more times,
\begin{eqnarray*}
E\vdash [D_{i_1}\cup\dots \cup D_{i_n}]^T_X\phi.
\end{eqnarray*}
Hence, 
$E\vdash [C]^T_X\phi$
by the Monotonicity axiom, the Modus Ponens inference rule, and the assumption of the lemma that family $\{D_i\}_{i\in I}$ are subsets of $C$. 
Then, $\neg [C]^T_X\phi\notin E$ because set $E$ is consistent, which contradicts the assumption $\neg [C]^T_X\phi\in E$ of the lemma.
\end{proof}

The next lemma shows that any pair in harmony can be extended.

\begin{lemma}\label{harmony step lemma}
For any pair of sets of formulae $(F,G)$ in $T$-harmony, any formula $[\varnothing]^T_X\phi\in\Phi$, either the pair 
$(F\cup\{\neg[\varnothing]^T_X\phi\},G)$ or the pair $(F,G\cup\{\phi\})$ is in $T$-harmony.
\end{lemma}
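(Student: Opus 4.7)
The plan is to argue by contradiction. Suppose neither of the two pairs is in $T$-harmony. By Definition~\ref{harmony}, this yields finite subsets $G_1, G_2 \subseteq G$ such that $F, \neg[\varnothing]^T_X\phi \vdash [\varnothing]^T_V \neg \bigwedge G_1$ and $F \vdash [\varnothing]^T_V \neg(\bigwedge G_2 \wedge \phi)$. I would set $G' = G_1 \cup G_2$, which is still a finite subset of $G$, and aim to derive $F \vdash [\varnothing]^T_V \neg\bigwedge G'$, contradicting the assumption that $(F,G)$ itself is in $T$-harmony.

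Next, I would process each witness. From the first, the deduction lemma (Lemma~\ref{deduction lemma}) and contraposition give $F \vdash \neg[\varnothing]^T_V\neg\bigwedge G_1 \to [\varnothing]^T_X\phi$, and then the Monotonicity axiom (using $X \subseteq V$) yields $F \vdash \neg[\varnothing]^T_V\neg\bigwedge G_1 \to [\varnothing]^T_V\phi$. From the second, since $\neg(\bigwedge G_2 \wedge \phi) \leftrightarrow (\phi \to \neg\bigwedge G_2)$ is a propositional tautology, Lemma~\ref{super distributivity 2} gives $F \vdash [\varnothing]^T_V(\phi \to \neg\bigwedge G_2)$, and the Cooperation axiom (with $C = D = \varnothing$) then yields $F \vdash [\varnothing]^T_V\phi \to [\varnothing]^T_V\neg\bigwedge G_2$. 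Chaining these two implications produces
\[
F \vdash \neg[\varnothing]^T_V\neg\bigwedge G_1 \to [\varnothing]^T_V\neg\bigwedge G_2.
\]

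Finally, since $G_1, G_2 \subseteq G'$, the propositional entailments $\neg\bigwedge G_1 \vdash \neg\bigwedge G'$ and $\neg\bigwedge G_2 \vdash \neg\bigwedge G'$ hold, so by Lemma~\ref{super distributivity 2} we have $[\varnothing]^T_V\neg\bigwedge G_i \vdash [\varnothing]^T_V\neg\bigwedge G'$ for $i=1,2$. Doing a case split on whether $[\varnothing]^T_V\neg\bigwedge G_1$ holds then shows that $F \vdash [\varnothing]^T_V\neg\bigwedge G'$ in either case, which is the sought contradiction. The only subtle step is making sure the Cooperation axiom is applicable with the empty coalitions and that Monotonicity gives exactly the $[\varnothing]^T_X \phi \to [\varnothing]^T_V \phi$ direction — both follow immediately from the stated forms of those axioms, so no serious obstacle arises.
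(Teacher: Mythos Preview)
Your proposal is correct and follows essentially the same route as the paper's proof: assume both extensions fail, extract finite witnesses $G_1,G_2$, pass to a common finite $G'\subseteq G$, and use Monotonicity (to replace $X$ by $V$), Cooperation with empty coalitions, and Lemma~\ref{super distributivity 2} to derive $F\vdash[\varnothing]^T_V\neg\bigwedge G'$. The only cosmetic difference is that the paper pivots its case split on $[\varnothing]^T_V\phi$ whereas you pivot on $[\varnothing]^T_V\neg\bigwedge G_1$; the underlying chain $\neg[\varnothing]^T_V\neg\bigwedge G_1\to[\varnothing]^T_V\phi\to[\varnothing]^T_V\neg\bigwedge G_2$ is identical.
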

\begin{proof}
Suppose that neither the pair $(F\cup\{\neg[\varnothing]^T_X\phi\},G)$ nor the pair $(F,G\cup\{\phi\})$ is in $T$-harmony. Thus, by Definition~\ref{harmony}, 
\begin{eqnarray}
&&F, \neg[\varnothing]^T_X\phi\vdash [\varnothing]^T_V\neg\bigwedge G_1\label{apr13-a},\\
&&F\vdash [\varnothing]^T_V\neg\bigwedge G_2,\label{apr13-b}
\end{eqnarray}
for some finite sets $G_1\subseteq G$ and $G_2\subseteq G\cup\{\phi\}$. Then, there must exist a finite set of formulae $G'\subseteq G$ such that,
\begin{eqnarray*}
&&\vdash \bigwedge G'\to \bigwedge G_1,\\
&&\vdash \phi\to\left(\bigwedge G'\to \bigwedge G_2\right).
\end{eqnarray*}
Hence, by the laws of propositional reasoning,
\begin{eqnarray*}
&&\vdash \neg\bigwedge G_1\to \neg\bigwedge G',\\
&&\vdash \neg\bigwedge G_2\to\left(\phi\to \neg\bigwedge G'\right).
\end{eqnarray*}
Thus, by Lemma~\ref{S Necessitation rule is deriavable},
\begin{eqnarray*}
&&\vdash [\varnothing]^T_V\left(\neg\bigwedge G_1\to \neg\bigwedge G'\right),\\
&&\vdash [\varnothing]^T_V\left(\neg\bigwedge G_2\to\left(\phi\to \neg\bigwedge G'\right)\right).
\end{eqnarray*}
Then, by the Cooperation axiom and the Modus Ponens inference rule,
\begin{eqnarray*}
&&\vdash [\varnothing]^T_V\left(\neg\bigwedge G_1\right)\to [\varnothing]^T_V\left(\neg\bigwedge G'\right),\\
&&\vdash [\varnothing]^T_V\left(\neg\bigwedge G_2\right)\to[\varnothing]^T_V\left(\phi\to \neg\bigwedge G'\right).
\end{eqnarray*}
Hence, by assumptions~(\ref{apr13-a}) and~(\ref{apr13-b}) and the Modus Ponens inference rule,
\begin{eqnarray*}
&&F, \neg[\varnothing]^T_X\phi\vdash [\varnothing]^T_V\left(\neg\bigwedge G'\right),\label{3-feb-a}\\
&&F\vdash[\varnothing]^T_V\left(\phi\to \neg\bigwedge G'\right).\label{3-feb-b}
\end{eqnarray*}
Hence, by the contraposition of the Monotonicity axiom,
\begin{eqnarray*}
&&F, \neg[\varnothing]^T_V\phi\vdash [\varnothing]^T_V\left(\neg\bigwedge G'\right),\\
&&F\vdash[\varnothing]^T_V\left(\phi\to \neg\bigwedge G'\right).\\
\end{eqnarray*}
Then, by the Cooperation axiom and the Modus Ponens inference rule,
\begin{eqnarray*}
&&F, \neg[\varnothing]^T_V\phi\vdash [\varnothing]^T_V\left(\neg\bigwedge G'\right),\\
&&F\vdash[\varnothing]^T_V\phi\to [\varnothing]^T_V\left(\neg\bigwedge G'\right).
\end{eqnarray*}
Hence, by Lemma~\ref{deduction lemma},
\begin{eqnarray*}
&&F\vdash \neg[\varnothing]^T_V\phi\to[\varnothing]^T_V\left(\neg\bigwedge G'\right),\\
&&F\vdash[\varnothing]^T_V\phi\to [\varnothing]^T_V\left(\neg\bigwedge G'\right).
\end{eqnarray*}
Then, by the laws of propositional reasoning,
$$
F\vdash [\varnothing]^T_V\left(\neg\bigwedge G'\right).
$$
Therefore, by Definition~\ref{harmony} and the assumption $G'\subseteq G$, the pair $(F,G)$ is not in $T$-harmony, which contradicts the assumption of the lemma that it is in $T$-harmony.
\end{proof}

\begin{definition}\label{complete harmony definition}
A pair of sets of formulae $(F,G)$ is in complete $T$-harmony if this pair is in $T$-harmony and, for any formula $[\varnothing]^T_X\phi\in\Phi$, either $\neg[\varnothing]^T_X\phi\in F$ or $\phi\in G$. 
\end{definition}

The next lemma follows from Lemma~\ref{harmony step lemma} and Definition~\ref{complete harmony definition}. 

\begin{lemma}\label{complete harmony lemma}
For any pair $(F,G)$ in $T$-harmony, there is a pair $(F',G')$ in complete $T$-harmony such that $F\subseteq F'$, $G\subseteq G'$. \qed
\end{lemma}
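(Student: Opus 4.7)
The plan is a straightforward Lindenbaum-style construction driven by Lemma~\ref{harmony step lemma}. Fix a well-ordering of the set of formulae in $\Phi$ of the form $[\varnothing]^T_X\phi$; say this enumeration is $\{[\varnothing]^T_{X_\alpha}\phi_\alpha\}_{\alpha<\kappa}$ for some ordinal $\kappa$. We build a chain of pairs $(F_\alpha,G_\alpha)$, each in $T$-harmony, starting with $(F_0,G_0)=(F,G)$.

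At a successor stage $\alpha+1$, Lemma~\ref{harmony step lemma} guarantees that at least one of the two candidate pairs
$(F_\alpha\cup\{\neg[\varnothing]^T_{X_\alpha}\phi_\alpha\},G_\alpha)$ or $(F_\alpha,G_\alpha\cup\{\phi_\alpha\})$
is in $T$-harmony, and we let $(F_{\alpha+1},G_{\alpha+1})$ be such a pair. Note that by construction, after stage $\alpha+1$ we have either $\neg[\varnothing]^T_{X_\alpha}\phi_\alpha\in F_{\alpha+1}$ or $\phi_\alpha\in G_{\alpha+1}$. At a limit stage $\lambda$, define $F_\lambda=\bigcup_{\beta<\lambda}F_\beta$ and $G_\lambda=\bigcup_{\beta<\lambda}G_\beta$.

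The main obstacle is verifying that limit stages preserve $T$-harmony, but this is immediate from the finite character of the harmony condition. Suppose toward contradiction that $(F_\lambda,G_\lambda)$ is not in $T$-harmony. By Definition~\ref{harmony}, there is a finite set $G'\subseteq G_\lambda$ with $F_\lambda\vdash [\varnothing]^T_V\neg\bigwedge G'$. Any derivation witnessing $\vdash$ uses only finitely many premises from $F_\lambda$, and the finite set $G'$ also lies in finitely many $G_\beta$'s. Since the chain is increasing and $\lambda$ is a limit, there is some $\beta<\lambda$ that contains all these premises and $G'\subseteq G_\beta$. Then $F_\beta\vdash [\varnothing]^T_V\neg\bigwedge G'$, contradicting the $T$-harmony of $(F_\beta,G_\beta)$.

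Finally, set $F'=F_\kappa$ and $G'=G_\kappa$. Clearly $F\subseteq F'$ and $G\subseteq G'$, and the pair $(F',G')$ is in $T$-harmony by the argument above (applied to $\lambda=\kappa$ if $\kappa$ is a limit, or immediately from the successor step otherwise). For completeness, every formula $[\varnothing]^T_X\phi\in\Phi$ appears as $[\varnothing]^T_{X_\alpha}\phi_\alpha$ for some $\alpha<\kappa$, so by the successor-stage choice either $\neg[\varnothing]^T_X\phi\in F_{\alpha+1}\subseteq F'$ or $\phi\in G_{\alpha+1}\subseteq G'$. Hence $(F',G')$ satisfies Definition~\ref{complete harmony definition}.
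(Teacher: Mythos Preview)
Your proof is correct and is precisely the standard Lindenbaum-style construction that the paper implicitly invokes: the paper gives no explicit proof, merely noting that the lemma ``follows from Lemma~\ref{harmony step lemma} and Definition~\ref{complete harmony definition}'' and placing a \qed. Your argument spells out exactly this, including the finite-character verification at limit stages, and there is nothing to add.
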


\subsection{Cones}

\begin{definition}
For any state $w\in W$, let $Cone(w)$ be the set of all states $u\in W$ such that either $w=u$ or sequence $w$ is a prefix of sequence $u$.
\end{definition}

\begin{figure}
\begin{center}
\scalebox{.5}{\includegraphics{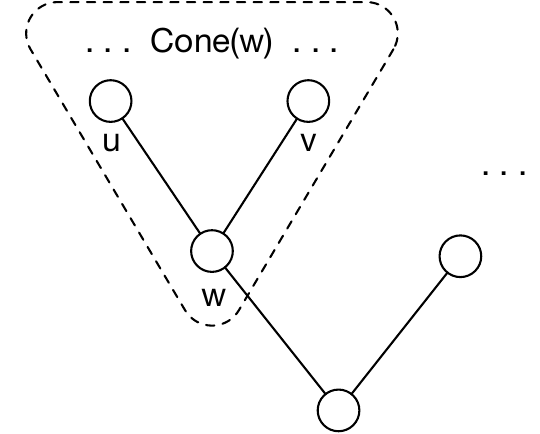}}
\caption{States $w$, $u$, and $v$ belong to $Cone(w)$.}\label{Canonical subtree figure}
\end{center}
\end{figure}

We call set $Cone(w)$ the {\em cone} of a state $w\in W$. Informally, $Cone(w)$ is a subtree of the canonical tree that starts at node $w$, see Figure~\ref{Canonical subtree figure}.
The next two lemmas state an auxiliary property of cones used later. 

\begin{lemma}\label{trees are disjoint}
$Cone(w)\cap Cone(w')=\varnothing$ for any two states $w,w'\in W$ such that 
\begin{eqnarray*}
w&=&T_0,F_0,X_1,b_1,\dots,X_{n-1},b_{n-1},T_{n-1},F_{n-1},X_{n},b_n,T_n,F_{n},\\
w'&=&T_0,F_0,X_1,b_1,\dots,X_{n-1},b_{n-1},T_{n-1},F_{n-1},X'_{n},b'_n,T'_{n},F'_{n},
\end{eqnarray*}
and $(X_{n},b_n,T_n,F_{n})\neq (X'_{n},b'_n,T'_n,F'_{n})$. 
\end{lemma}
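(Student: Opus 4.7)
The plan is to prove the lemma by a direct contradiction argument, using only the elementary fact that two sequences of the same length that are both prefixes of a common sequence must be identical.

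Suppose toward contradiction that there exists a state $u \in Cone(w) \cap Cone(w')$. By the definition of $Cone$, either $u = w$ or the sequence $w$ is a proper prefix of the sequence $u$; in either case, $w$ is a prefix of $u$. Similarly, $w'$ is a prefix of $u$. Now observe that both $w$ and $w'$, as written in the statement of the lemma, have exactly $4n + 2$ components: the initial pair $T_0, F_0$ followed by $n$ blocks of the form $X_i, b_i, T_i, F_i$. Hence $w$ and $w'$ have equal length as sequences.

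Two prefixes of the same sequence $u$ that have equal length must coincide coordinate by coordinate. Therefore $w = w'$, which in particular forces $(X_n, b_n, T_n, F_n) = (X'_n, b'_n, T'_n, F'_n)$. This contradicts the hypothesis that these quadruples differ. We conclude that no such $u$ exists, i.e.\ $Cone(w) \cap Cone(w') = \varnothing$.

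The argument is essentially combinatorial and presents no real obstacle: the only subtlety is to remember that the case $u = w$ (respectively $u = w'$) is subsumed under ``$w$ is a prefix of $u$'', so one need not treat it separately.
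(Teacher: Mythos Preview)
Your proof is correct; the paper itself states this lemma without proof, treating it as an immediate consequence of the definition of $Cone$ and the sequence structure of states. Your argument---that two equal-length prefixes of a common sequence must coincide---is exactly the obvious justification the paper is implicitly relying on.
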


\begin{lemma}\label{canonical tree path lemma}
For any states $w,u\in W$ and any data variable $y\in V$, if 
$$w=T_0,F_0,X_1,b_1,\dots,X_{n-1},b_{n-1},T_{n-1},F_{n-1},X_{n},b_n,T_n,F_{n}\in W,$$
$u\notin Cone(w)$, and $w\sim_y u$, then $y\in X_n$.
\end{lemma}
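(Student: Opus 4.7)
The plan is to exploit the tree structure of the canonical model. Since the state $w$ in the hypothesis is presented as a sequence ending with $X_n, b_n, T_n, F_n$, it is not the root and hence has a unique parent in the tree, namely
$$w' = T_0, F_0, X_1, b_1, \dots, X_{n-1}, b_{n-1}, T_{n-1}, F_{n-1}.$$
By the discussion immediately following Definition~\ref{canonical W}, the edge $(w', w)$ is labelled with every variable in the set $X_n$. This is the only structural fact about the tree I will need beyond the definition of $Cone(w)$.

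Next, I would establish the key observation that, whenever $u \notin Cone(w)$, the unique simple path between $w$ and $u$ in the tree must traverse the edge $(w', w)$. Indeed, $Cone(w)$ consists of $w$ together with all its descendants, so the only neighbour of $w$ lying outside $Cone(w)$ is its parent $w'$: every other neighbour of $w$ is a child of $w$ and therefore belongs to $Cone(w)$. Since a simple path from $w$ to $u$ cannot revisit $w$, its very first step out of $w$ is forced to go to $w'$, so the edge $(w', w)$ must appear on the path.

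To finish, I invoke the hypothesis $w \sim_y u$: by Definition~\ref{canonical sim}, every edge along the simple path between $w$ and $u$ is labelled with~$y$. Applying this to the edge $(w', w)$, which is labelled precisely with the variables in $X_n$, gives $y \in X_n$, as required. I do not anticipate any real obstacle here; the only delicate point is the tree-theoretic step, for which one must be careful to use the simplicity of the path to rule out the possibility that the path first descends into $Cone(w)$ before leaving it.
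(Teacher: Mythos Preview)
Your proposal is correct and follows essentially the same approach as the paper: identify the parent $w'$ of $w$, observe that $u\notin Cone(w)$ forces the simple path from $w$ to $u$ to pass through the edge $(w',w)$, and then apply Definition~\ref{canonical sim} to conclude $y\in X_n$. Your version is slightly more explicit about why the first step of the path must go to the parent, but the argument is the same.
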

\begin{proof} 
Let $w_0=T_0,F_0,X_1,b_1,\dots,X_{n-1},b_{n-1},T_{n-1},F_{n-1}$.
Consider simple path between vertices $w$ and $u$. The assumption $u\notin Cone(w)$ of the lemma implies that this path must contain edge $(w_0,w)$. Hence, by the assumption $w\sim_y u$ of the lemma and Definition~\ref{canonical sim}, edge $(w_0,w)$ is labeled with variable $y$. Hence, $y\in X_n$.
\end{proof}

\subsection{Final Steps}

\begin{lemma}\label{pigeonhole lemma}
For any set $A$ and any pairwise disjoint family of sets $\{R_b\}_{b\in B}$, if the cardinality of set $A$ is less than the cardinality of set $B$, then there is an index $b\in B$ such that $A\cap R_b=\varnothing$. \qed
\end{lemma}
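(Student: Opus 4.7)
\medskip

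\noindent\textbf{Proof proposal.} The plan is to argue by contradiction, reducing the statement to the existence of an injection from $B$ into $A$. Suppose that for every index $b \in B$ the intersection $A \cap R_b$ is nonempty. Then, invoking the axiom of choice, I would pick an element $a_b \in A \cap R_b$ for each $b \in B$, obtaining a function $f : B \to A$ defined by $f(b) = a_b$.

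The key step is to verify that $f$ is injective. Suppose $f(b) = f(b')$ for some $b, b' \in B$. Then the common value $a_b = a_{b'}$ lies in both $R_b$ and $R_{b'}$. Since the family $\{R_b\}_{b \in B}$ is pairwise disjoint, this forces $b = b'$.

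Thus $f : B \to A$ is an injection, which means $|B| \le |A|$. This contradicts the hypothesis that the cardinality of $A$ is strictly less than the cardinality of $B$. Hence there must exist some $b \in B$ for which $A \cap R_b = \varnothing$.

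I do not expect any step here to be a real obstacle; the entire argument is essentially a restatement of the pigeonhole principle in cardinal form, with the disjointness of the family doing the work of ensuring injectivity of the choice function. The only subtlety worth flagging is the use of choice, which is standard in the set-theoretic background assumed throughout the paper.
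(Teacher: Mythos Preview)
Your argument is correct and is exactly the standard pigeonhole/injection argument one would expect here. The paper itself omits the proof entirely (the lemma is stated with an immediate \qed), so your write-up is consistent with---indeed more detailed than---what the authors provide.
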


We are now ready to finish the proof of Lemma~\ref{H child exists}.

\noindent{\bf Lemma~\ref{H child exists}} {\em
For an arbitrary state $w\in W$, any formula $\neg[C]^T_X\phi\in F(w)$, and any action profile $s\in\Delta^C$, there are states $w',v\in W$ and a complete action profile $\delta\in\Delta^\mathcal{A}$ such that $w\sim_{X} w'$, $s=_C\delta$, $T\subseteq \mathcal{T}_{w'}$, $T\subseteq \mathcal{T}_{v}$, $(w',\delta,v)\in M$, and $\phi\notin F(v)$. 
}

\begin{proof}
Consider any state $w\in W$, any formula $\neg[C]^T_X\phi\in F(w)$, and any action profile $s\in\Delta^C$. For each formula $\psi\in\Phi$, define coalition
\begin{equation}\label{apr23-c}
    D_\psi=\{a\in C\;|\;pr_1(s(a))=\psi\}.
\end{equation}
\begin{claim}\label{apr21-c}
$\{D_\psi\}_{\psi\in\Phi}$ is a family of pairwise disjoint subsets of $C$.
\end{claim}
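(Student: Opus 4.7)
The claim is essentially a bookkeeping observation about the definition in equation~\eqref{apr23-c}, so my plan is to dispose of it in two short pieces.

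First, I would verify that each $D_\psi$ is a subset of $C$. This is immediate from the definition $D_\psi = \{a \in C \mid pr_1(s(a)) = \psi\}$, since the set is carved out of $C$ by a filtering condition.

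Second, for pairwise disjointness, I would fix two distinct formulae $\psi_1,\psi_2 \in \Phi$ and suppose toward contradiction that there is some actor $a \in D_{\psi_1} \cap D_{\psi_2}$. By the definition of $D_{\psi_i}$, this would mean $pr_1(s(a)) = \psi_1$ and simultaneously $pr_1(s(a)) = \psi_2$. Since $s \in \Delta^C$ is a function and $pr_1$ is a function, the value $pr_1(s(a))$ is uniquely determined, forcing $\psi_1 = \psi_2$, which contradicts our assumption that they are distinct.

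There is no real obstacle here — the claim is a direct consequence of the fact that $s(a)$ is a single element of $\Delta = \Phi \times W$, and hence $pr_1(s(a))$ is a single formula in $\Phi$. The only reason to isolate this as a named claim is that the family $\{D_\psi\}_{\psi \in \Phi}$ needs to satisfy the disjointness hypothesis of Lemma~\ref{harmony base lemma} when it is applied later in the proof of Lemma~\ref{H child exists}.
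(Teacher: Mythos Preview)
Your proposal is correct and matches the paper's own proof essentially line for line: both observe that $D_\psi\subseteq C$ directly from the defining equation, and both argue that $a\in D_{\psi_1}\cap D_{\psi_2}$ forces $\psi_1=pr_1(s(a))=\psi_2$. The only cosmetic difference is that you phrase the disjointness step as a contradiction from $\psi_1\neq\psi_2$, whereas the paper simply concludes $\psi_1=\psi_2$ directly.
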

\begin{proof-of-claim}
By equation~(\ref{apr23-c}), set $D_\psi$ is a subset of $C$ for each formula $\psi\in\Phi$. To show that sets $\{D_\psi\}_{\psi\in\Phi}$ are disjoint, suppose that there is an actor $a\in C$ such that $a\in D_{\psi_1}\cap D_{\psi_2}$ for some formulae $\psi_1,\psi_2\in\Phi$. Therefore, $\psi_1=pr_1(s(a))=\psi_2$ by equation~(\ref{apr23-c}).
\end{proof-of-claim}

\vspace{1mm}
Consider the set of formulae
\begin{equation}\label{apr21-d}
    \Psi=\{\psi\in \Phi\;|\;[D_\psi]^T_{X}\psi\in F(w)\}.
\end{equation}

Next, we are going to apply Lemma~\ref{harmony base lemma}. Note that $F(w)$ is a consistent subset of $\Phi$ and that $\neg[C]^T_X\phi\in F(w)$ by the assumption of the lemma. Also, observe that $\{D_\psi\}_{\psi\in\Psi}\subseteq \{D_\psi\}_{\psi\in\Phi}$ is a family of pairwise disjoint subsets of $C$ by Claim~\ref{apr21-c}. Finally, note that  $\{[D_\psi]^T_{X}\psi\}_{\psi\in\Psi}\subseteq F(w)$ by equation~(\ref{apr21-d}). Therefore, by Lemma~\ref{harmony base lemma}, the pair $(F,G)$ is in $T$-harmony, where
\begin{eqnarray}
F &=& \{\chi\;|\; \B^\varnothing_{X}\chi\in F(w)\}\cup\{\B^T_Y\tau\to\tau\;|\;\tau\in\Phi, Y\subseteq V\},\label{apr23-d}\\
G &=& \{\neg\phi\}\cup\Psi\cup\{\sigma\;|\;\B^\varnothing_\varnothing\sigma\in F(w)\}\nonumber\\
&&\cup\{\B^T_Z\eta\to\eta\;|\;\eta\in\Phi,Z\subseteq V\}
.\label{apr23-a}
\end{eqnarray}
Then, by Lemma~\ref{complete harmony lemma}, there is a pair $(F',G')$ in complete $T$-harmony such that $F\subseteq F'$ and $G\subseteq G'$. Hence, by Definition~\ref{complete harmony definition}, pair $(F',G')$ in $T$-harmony. Thus, by Lemma~\ref{harmony is consistent lemma}, sets $F'$ and $G'$ are consistent. Then, by Lemma~\ref{Lindenbaum's lemma}, sets $F'$ and $G'$ can be further extended to maximal consistent sets of formulae $F''$ and $G''$, respectively, such that 
\begin{equation}\label{apr21-e}
    F\subseteq F'\subseteq F'' \;\;\;\mbox{and}\;\;\; G\subseteq G'\subseteq G''.
\end{equation}

Let state $w\in W$ be the sequence $T_0,F_0,X_1,b_1,\dots,X_{n},b_n,T_{n},F_{n}$.
For each $b\in \mathcal{B}$, define sequence,
\begin{equation}\label{apr23-e}
    w_b=T_0,F_0,X_1,b_1,\dots,X_{n},b_n,T_{n},F_{n},X,b,T,F''.
\end{equation}
Recall that we previously referred to such nodes as ``clones''. 

\begin{claim}\label{apr24-c}
$w_b\in W$, $w\sim_{X} w_b$, and $T=\mathcal{T}_{w_b}$ for each $b\in\mathcal{B}$.
\end{claim}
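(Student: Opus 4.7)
The plan is to verify the three assertions directly by unwinding the definitions used to construct $w_b$; none of them requires any fresh logical argument beyond bookkeeping, since all the nontrivial work has already been done in building the pair $(F'',G'')$.

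For $w_b \in W$, I would walk through items~1--5 of Definition~\ref{canonical W} applied to the sequence in~(\ref{apr23-e}). All conditions for indices $1,\dots,n$ are inherited from $w\in W$, so only the freshly appended block $(X,b,T,F'')$ matters. Items~1--3 are immediate: $F''$ is a maximal consistent set by construction (via Lindenbaum, Lemma~\ref{Lindenbaum's lemma}), both $X$ and $T$ are subsets of $V$ by hypothesis, and $b\in\mathcal{B}$ by quantification over clones. The content lies in items~4 and~5 for $i=n+1$. For item~4 I need $\{\psi \mid \B^\varnothing_{X}\psi \in F_n\} \subseteq F''$, and for item~5 I need $\B^{T}_Y\psi\to\psi \in F''$ for every $Y\subseteq V$ and $\psi\in\Phi$. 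Both inclusions are visible directly in the definition of $F$ in~(\ref{apr23-d}), and $F\subseteq F''$ by~(\ref{apr21-e}), so $F''$ inherits them.

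For $w \sim_X w_b$, I would apply Definition~\ref{canonical sim} to the adjacent pair $(w,w_b)$ in the canonical tree: the single connecting edge is labelled with every variable in the first component of the appended block, which is precisely $X$. Since adjacent vertices realize the simple path between themselves, this gives $w\sim_x w_b$ for every $x\in X$, i.e., $w\sim_X w_b$. For the last assertion, $T = \mathcal{T}_{w_b}$ is immediate from Definition~\ref{canonical T} once we read off the last $T$-entry of $w_b$ from~(\ref{apr23-e}).

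The only place where anything slightly subtle happens is the verification of items~4 and~5 of Definition~\ref{canonical W} for the new block; the key observation is that the set $F$ in~(\ref{apr23-d}) was designed precisely so that any maximal consistent extension of it automatically satisfies both conditions, and the chain $F\subseteq F'\subseteq F''$ from~(\ref{apr21-e}) ensures that $F''$ is such an extension. There is no real obstacle here — the heavy lifting was already done in building the harmonious pair and extending it via Lemma~\ref{complete harmony lemma} and Lemma~\ref{Lindenbaum's lemma}.
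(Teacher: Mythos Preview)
Your proposal is correct and follows essentially the same approach as the paper: verify items~4 and~5 of Definition~\ref{canonical W} for the appended block via the inclusion $F\subseteq F''$ from~(\ref{apr23-d}) and~(\ref{apr21-e}), and read off $w\sim_X w_b$ and $T=\mathcal{T}_{w_b}$ directly from Definitions~\ref{canonical sim} and~\ref{canonical T}. Your write-up is slightly more explicit about items~1--3 and the adjacency argument, but the substance is identical.
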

\begin{proof-of-claim}
By Definition~\ref{canonical W} and the assumption $w\in W$ of the lemma, to show that $w_b\in W$, it suffices to show that $\{\chi\;|\;\B^\varnothing_{X}\chi\in F_n\}\subseteq F''$  and $\{\B^T_Y\tau\to\tau\;|\;Y\subseteq V,\tau\in F_n\}\subseteq F''$. Both of these statements are true by equation~(\ref{apr23-d}), part $F\subseteq F''$ of statement~(\ref{apr21-e}), and because $F(w)=F_n$.
Statement $T=\mathcal{T}_{w_b}$ follows from Definition~\ref{canonical T} and equation~(\ref{apr23-e}). 
Finally, note that $w\sim_{X} w_b$ for each $b\in\mathcal{B}$ by Definition~\ref{canonical sim} and equation~(\ref{apr23-e}).
\end{proof-of-claim}

\vspace{1mm}

To continue the proof of the lemma,
recall that set $\mathcal{B}$ has been assumed to have larger cardinality than set $\mathcal{A}$. Thus, because $C\subseteq \mathcal{A}$,
$$|\{pr_2(s(a))\;|\;a\in C\}|\le |C|\le |\mathcal{A}|< |\mathcal{B}|=|\{Cone(w_b)\;|\;b\in\mathcal{B}\}|.$$
Note that sets $\{Cone(w_b)\;|\;b\in\mathcal{B}\}$ are pairwise disjoint by Lemma~\ref{trees are disjoint}.
Hence, by Lemma~\ref{pigeonhole lemma}, there must exist at least one element $\beta\in \mathcal{B}$ such that 
\begin{equation}\label{apr23-h}
\{pr_2(s(a))\;|\;a\in C\}\cap Cone(w_{\beta})=\varnothing.
\end{equation}
Let 
\begin{equation}\label{apr23-g}
    w'=w_{\beta}.
\end{equation}
Choose an arbitrary element $\beta'\in\mathcal{B}$ and define state $v$ as follows, see Figure~\ref{two states figure},
\begin{equation}\label{apr23-b}
    v=T_0,F_0,X_1,b_1,\dots,T_{n-1},F_{n-1},X_{n},b_n,T_n,F_{n},\varnothing,\beta',T,G''.
\end{equation}
\begin{claim}\label{apr22-a}
$v\in W$.
\end{claim}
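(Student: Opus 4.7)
The plan is to verify each of the five numbered conditions of Definition~\ref{canonical W} in turn, using the way in which the pair $(F'',G'')$ was built.

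For conditions 1--3, note that $G''$ was constructed as a maximal consistent set (so item~1 holds), that $\varnothing$ and $T$ are both subsets of $V$ (so item~2 holds), and that $\beta' \in \mathcal{B}$ by choice (so item~3 holds). All prior components of the sequence $v$ are inherited from $w \in W$, so they already satisfy the corresponding conditions by the assumption $w \in W$.

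The two non-trivial conditions are items~4 and~5 applied to the last block $\varnothing, \beta', T, G''$. For item~4, I need to check that $\{\psi \mid \B^\varnothing_\varnothing \psi \in F_n\} \subseteq G''$. This follows from the construction of $G$ in equation~\eqref{apr23-a}, which explicitly includes $\{\sigma \mid \B^\varnothing_\varnothing \sigma \in F(w)\}$, combined with the inclusion $G \subseteq G''$ from~\eqref{apr21-e} and the fact that $F(w) = F_n$. For item~5, I need $\B^T_Y \psi \to \psi \in G''$ for every $Y \subseteq V$ and every $\psi \in \Phi$; this again follows directly from equation~\eqref{apr23-a}, which explicitly includes $\{\B^T_Z \eta \to \eta \mid \eta \in \Phi, Z \subseteq V\}$, together with $G \subseteq G''$. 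The other conditions (items~4 and~5) for the earlier blocks of $v$ coincide with those of $w$, which are satisfied because $w \in W$.

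I do not expect any real obstacle here: this claim is essentially a bookkeeping exercise, showing that the maximal consistent set $G''$ (which was constructed to contain all the "seed" formulae needed for $v$ to be a legal state) satisfies the syntactic conditions imposed by Definition~\ref{canonical W} on the last block of a sequence in $W$. The harder work was done in constructing $G''$ via the harmony machinery; verifying $v \in W$ is just reading off the two inclusions $\{\sigma \mid \B^\varnothing_\varnothing \sigma \in F(w)\} \subseteq G$ and $\{\B^T_Z \eta \to \eta \mid \eta \in \Phi, Z \subseteq V\} \subseteq G$ from equation~\eqref{apr23-a} and chasing them through $G \subseteq G' \subseteq G''$.
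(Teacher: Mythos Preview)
Your proposal is correct and follows essentially the same approach as the paper's proof: both reduce the verification to the two nontrivial inclusions $\{\sigma \mid \B^\varnothing_\varnothing \sigma \in F_n\} \subseteq G''$ and $\{\B^T_Z\eta \to \eta \mid \eta \in \Phi, Z \subseteq V\} \subseteq G''$, and both derive these from equation~\eqref{apr23-a}, the chain $G \subseteq G''$ in~\eqref{apr21-e}, and $F(w) = F_n$. The paper's version is simply terser, compressing your itemized check into a single sentence.
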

\begin{proof-of-claim}
By Definition~\ref{canonical W} and the assumption $w\in W$ of the lemma, it suffices to show that $\{\sigma\;|\;\B^\varnothing_{\varnothing}\sigma\in F_n\}\cup\{\B^T_Z\eta\to\eta\;|\;\eta\in\Phi,Z\subseteq V\}\subseteq G''$. The latter is true by equation~(\ref{apr23-a}), part $G\subseteq G''$ of statement~(\ref{apr21-e}), and because $F(w)=F_n$.
\end{proof-of-claim}
\vspace{1mm}

\begin{figure}
\begin{center}
\scalebox{.55}{\includegraphics{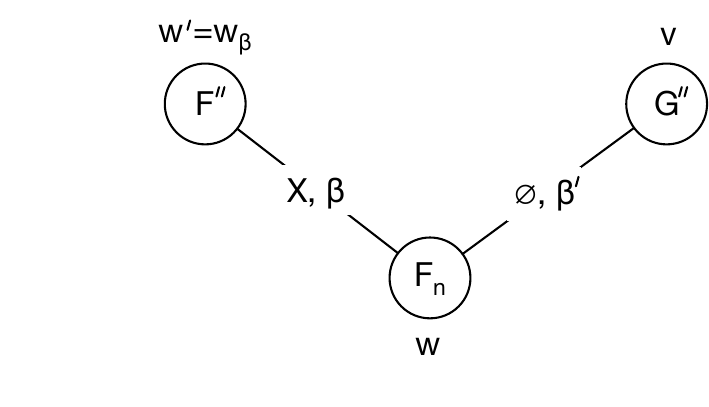}}
\caption{Towards the proof of Lemma~\ref{H child exists}.}\label{two states figure}
\end{center}
\end{figure}

Recall that we started the proof of the lemma by fixing a state $w\in W$ and an action profile $s\in\Delta^C$ of coalition $C$. Define complete action profile $\delta\in\Delta^\mathcal{A}$ as follows: 
\begin{equation}\label{apr24-a}
\delta(a)=
\begin{cases}
s(a), & \mbox{if } a\in C,\\
(\top,w), & \mbox{otherwise}.
\end{cases} 
\end{equation}
Note 
\begin{equation}\label{oct27-b}
    s=_C\delta.
\end{equation}




\begin{claim}\label{oct27-a}
$(w',\delta,v)\in M$.
\end{claim}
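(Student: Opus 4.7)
The plan is to unpack Definition~\ref{canonical mechanism}: fix an arbitrary formula $[C']^{T'}_{X'}\psi \in F(w')$ satisfying its four conditions and deduce $\psi \in F(v)$. My first step is to identify the relevant data of the two states. By (\ref{apr23-g}), (\ref{apr23-e}), (\ref{apr23-b}), Claim~\ref{apr24-c}, and Definition~\ref{canonical T}, we have $F(w') = F''$, $F(v) = G''$, and $\mathcal{T}_{w'} = \mathcal{T}_v = T$, so conditions 3 and 4 of the mechanism collapse to the single requirement $T' \subseteq T$.

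Next I split on the relationship between $C'$ and $C$. If some $a \in C' \setminus C$ exists, then (\ref{apr24-a}) gives $pr_1(\delta(a)) = \top$, and condition 1 forces $\psi = \top$, so $\psi \in G''$ by consistency. If $C' = \varnothing$, I apply the Monotonicity axiom (using only $T' \subseteq T$) to promote $[\varnothing]^{T'}_{X'}\psi$ to $[\varnothing]^T_{X'}\psi \in F''$; since $F'$ lies inside the consistent set $F''$, we cannot have $\neg[\varnothing]^T_{X'}\psi \in F'$, and Definition~\ref{complete harmony definition} of complete $T$-harmony then delivers $\psi \in G' \subseteq G''$.

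The substantive case is $\varnothing \neq C' \subseteq C$. Condition 1 together with (\ref{apr23-c}) gives $C' \subseteq D_\psi$. To show $X' \subseteq X$, I pick any $a \in C'$ and use the pigeonhole choice of $\beta$ from (\ref{apr23-h}) to observe that $pr_2(\delta(a)) = pr_2(s(a)) \notin Cone(w_\beta) = Cone(w')$; condition 2 and Lemma~\ref{canonical tree path lemma}, applied to the edge from $w$ to $w_\beta$ labelled by $X$ via (\ref{apr23-e}), then force every $x \in X'$ into $X$. With $C' \subseteq D_\psi$, $T' \subseteq T$, and $X' \subseteq X$ in hand, Monotonicity produces $[D_\psi]^T_X \psi \in F(w')$; Lemma~\ref{strategic introspection plus} upgrades it to $\B^\varnothing_X [D_\psi]^T_X \psi \in F(w')$; Lemma~\ref{single transfer lemma} carries the latter back to $F(w)$ along the same $X$-labelled edge; and the Truth axiom then yields $[D_\psi]^T_X \psi \in F(w)$. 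Hence $\psi \in \Psi$ by (\ref{apr21-d}) and $\psi \in G \subseteq G''$ by (\ref{apr23-a}).

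The main obstacle is precisely this third case: the whole chain hinges on the pigeonhole-driven choice of $\beta$ (via Lemma~\ref{pigeonhole lemma} and the cardinality assumption $|\mathcal{A}| < |\mathcal{B}|$), which keeps every signature $pr_2(s(a))$ for $a \in C$ outside $Cone(w_\beta)$. Only this geometric fact lets Lemma~\ref{canonical tree path lemma} pin $X' \subseteq X$, and that inclusion is exactly what allows Lemma~\ref{single transfer lemma} to push the relevant belief back along the specific $X$-labelled edge added when the sequence was extended from $w$ to $w_\beta$.
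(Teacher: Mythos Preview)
Your proof is correct and follows essentially the same route as the paper's. The paper uses the same three-way split on the coalition (empty; contains an actor outside $C$; nonempty subset of $C$), the same pigeonhole-and-cone argument via Lemma~\ref{canonical tree path lemma} to force $X'\subseteq X$, and the same chain Monotonicity $\to$ Lemma~\ref{strategic introspection plus} $\to$ transfer to $F(w)$ $\to$ equation~(\ref{apr21-d}). The only cosmetic differences are that the paper organises the third case as two subcases (assuming $Y\subseteq X$ versus deriving a contradiction from $y\in Y\setminus X$) rather than proving $X'\subseteq X$ directly, and that it invokes Lemma~\ref{child all} (via $w\sim_X w'$ and $\varnothing\subseteq\mathcal{T}_w$) where you use Lemma~\ref{single transfer lemma} plus the Truth axiom; both routes accomplish the same transfer.
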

\begin{proof-of-claim}
Consider any formula 
\begin{equation}\label{apr23-f}
    [D]^Q_Y\psi\in F(w')
\end{equation}
such that
\begin{equation}\label{apr24-b}
    \forall a\in D\,(pr_1(\delta(a))=\psi),
\end{equation}
\begin{equation}\label{apr27-a}
    \forall a\in D\,(w'\sim_Y pr_2(\delta(a))),
\end{equation}
\begin{equation}\label{3-aug-a}
    Q\subseteq \mathcal{T}_{w'},
\end{equation}
and
\begin{equation}
    Q\subseteq \mathcal{T}_{v}.
\end{equation}
By Definition~\ref{canonical mechanism}, it suffices to show that $\psi\in F(v)$. Indeed, first, note that 
\begin{equation}\label{4-aug-a}
   Q\subseteq \mathcal{T}_{w'}=\mathcal{T}_{w_\beta}=T 
\end{equation}
by statement~(\ref{3-aug-a}), equation~(\ref{apr23-g}), and Claim~\ref{apr24-c}. Next, we consider the following three cases separately:

\vspace{1mm}\noindent
{\em Case I:} $D=\varnothing$. Assumption~(\ref{apr23-f}) implies that $[\varnothing]^Q_Y\psi\in F(w_{\beta})$ by equation~(\ref{apr23-g}). Hence,  $F(w_{\beta})\vdash [\varnothing]^T_Y\psi$ by statement~(\ref{4-aug-a}) and the Monotonicity axiom. 
Thus, 
$\neg[\varnothing]^T_Y\psi\notin F(w_{\beta})$
because set $F(w_{\beta})$ is consistent.
Hence, $\neg[\varnothing]^T_Y\psi\notin F''$ by equation~(\ref{apr23-e}). 
Then, $\neg[\varnothing]^T_Y\psi\notin F'$ by part $F'\subseteq F''$ of statement~(\ref{apr21-e}). 
Thus, $\psi\in G'$ by the assumption that the pair $(F',G')$ is in complete $T$-harmony and Definition~\ref{complete harmony definition}. 
Hence, $\psi\in G''$ by part $G'\subseteq G''$ of statement~(\ref{apr21-e}). Therefore, $\psi\in F(v)$ by equation~(\ref{apr23-b}).

\vspace{1mm}\noindent
{\em Case II:} there exists an actor $a\in D\setminus C$. Then,
$\delta(a)=(\top,w)$ by equation~(\ref{apr24-a}). Hence, $pr_1(\delta(a))=\top$. Thus, $\psi=\top$ by equation~(\ref{apr24-b}).
Therefore, $\psi\in F(v)$ because $F(v)$ is a maximal consistent set of formulae.

\vspace{1mm}\noindent
{\em Case III:} $\varnothing\neq D\subseteq C$. We further split this case into two subcases:

\begin{quote}
\noindent
{\em Subcase IIIa:} $Y\subseteq X$. 
Note that
$pr_1(s(a))=pr_1(\delta(a))$ for each actor $a\in D$ by equation (\ref{apr24-a}) and the assumption $D\subseteq C$ of Case III. 
Thus, $pr_1(s(a))=\psi$ for each actor $a\in D$  by statement~(\ref{apr24-b}). Hence, $D\subseteq D_\psi$ by equation (\ref{apr23-c}). Then, by assumption~(\ref{apr23-f}), the assumption $Y\subseteq X$ of the subcase, statement~(\ref{4-aug-a}), the Monotonicity axiom, and the Modus Ponens inference rule,
\begin{equation}\label{apr24-d}
     F(w')\vdash[D_\psi]^T_X\psi.
\end{equation}
Thus, $F(w')\vdash \B^\varnothing_{X}[D_\psi]^T_{X}\psi$ by Lemma~\ref{strategic introspection plus} and the Modus Ponens inference rule. Hence,
\begin{equation}\label{apr24-e}
   \B^\varnothing_{X}[D_\psi]^T_{X}\psi\in F(w'),
\end{equation}
because set $F(w')$ is maximal.
Note that $w\sim_X w_{\beta}$ by Claim~\ref{apr24-c}. Then,
$w\sim_X w'$ by equation~(\ref{apr23-g}). 
Thus, 
$[D_\psi]^T_{X}\psi\in F(w)$ by statement~(\ref{apr24-e}) and Lemma~\ref{child all}. Hence,
$\psi\in\Psi$ by equation~(\ref{apr21-d}). Then,
$\psi\in G$ by equation~(\ref{apr23-a}). Thus,
$\psi\in G''$ by equation~(\ref{apr21-e}). Therefore,
$\psi\in F(v)$ by equation~(\ref{apr23-b}).

\vspace{2mm}\noindent
{\em Subcase IIIb:} there is a data variable $y\in Y\setminus X$.
The assumption $\varnothing\neq D$ of Case III implies that there is an actor $a\in D$. Then, by the assumption $D\subseteq C$ of Case III,
\begin{equation}\label{apr27-c}
    a\in C.
\end{equation}
Also, the assumption $a\in D$ implies $w'\sim_y pr_2(\delta(a))$ by statement~(\ref{apr27-a}) and the assumption $y\in Y$ of Subcase IIIb.
Thus, $w'\sim_y pr_2(s(a))$ by equation~(\ref{apr24-a}) and statement~(\ref{apr27-c}). Hence, by equation~(\ref{apr23-g}),
\begin{equation}\label{apr27-b}
    w_{\beta}\sim_y pr_2(s(a)).
\end{equation}
At the same time, $pr_2(s(a))\notin Cone(w_{\beta})$ by equation~(\ref{apr23-h}) and statement~(\ref{apr27-c}). Therefore, $y\in X$ by Lemma~\ref{canonical tree path lemma}, statement~(\ref{apr27-b}), and equation~(\ref{apr23-e}), which contradicts the assumption $y\in Y\setminus X$ of Subcase IIIb.  
\end{quote}
This concludes the proof of the claim $(w',\delta,v)\in M$.
\end{proof-of-claim}

Thus, towards the proof of the lemma, we constructed state $w'\in W$ (equation~(\ref{apr23-g}) and Claim~\ref{apr24-c}), state $v\in W$ (Claim~\ref{apr22-a}) and a complete action profile $\delta\in\Delta^\mathcal{A}$ such that $w\sim_X w'$ (equations~(\ref{apr23-e}) and (\ref{apr23-g})), $s=_C\delta$ (equation~(\ref{oct27-b})), 
$T\subseteq \mathcal{T}_{w'}$ (equation (\ref{apr23-g}) and Claim~\ref{apr24-c}),
$T\subseteq \mathcal{T}_{v}$ (equation~(\ref{apr23-b})),
and $(w',\delta,v)\in M$ (Claim~\ref{oct27-a}).
To finish the proof of the lemma, note that $\neg\phi\in G\subseteq G'' = F(v)$ by equation~(\ref{apr23-a}), equation~(\ref{apr21-e}), and equation~(\ref{apr23-b}) respectively.
\end{proof}

\end{document}